\newcommand{\bvtil}{\widetilde{\bv}}
\title{Robust Structured Statistical Estimation \\ via Conditional Gradient Type Methods}
\author{Jiacheng Zhuo, Liu Liu, Constantine Caramanis
\\
University of Texas at Austin \\
{\tt\small \{jzhuo, liuliu, constantine\}@utexas.edu}
}
\begin{document}

\maketitle

\begin{abstract}

Structured statistical estimation problems are often solved by Conditional Gradient (CG) type methods to avoid the computationally expensive projection operation. However, the existing CG type methods are not robust to data corruption. To address this, we propose to robustify CG type methods against Huber's corruption model and heavy-tailed data.
First, we show that the two Pairwise CG  methods are stable, i.e., do not accumulate error. Combined with robust mean gradient estimation techniques, we can therefore guarantee robustness to a wide class of problems, but now in a projection-free algorithmic framework.
Next, we consider high dimensional problems. Robust mean estimation based approaches may have an unacceptably high sample complexity. When the constraint set is a $\ell_0$ norm ball, Iterative-Hard-Thresholding-based methods have been developed recently. Yet extension is non-trivial even for general sets with $O(d)$ extreme points. For setting where the feasible set has $O(\text{poly}(d))$ extreme points, we develop a novel robustness method, based on a new condition we call the Robust Atom Selection Condition (RASC). When RASC is satisfied, our method converges linearly with a corresponding statistical error, with sample complexity that scales correctly in the sparsity of the problem, rather than the ambient dimension as would be required by any approach based on robust mean estimation.

\end{abstract}

\section{Introduction}

% \subsection{Statistical estimation via atomic norm regularization}

% \paragraph{Structural statistical estimation problem}

% \subsection{Structural Statistical Estimation Problem}

% \paragraph{What is Structural ERM} Population to empirical. Applications (lasso, SVM)...
 
% \paragraph{How we solve this, why we prefer projection free algorithms}
%  Methods: PGD is obvious, 
%  projection-free is more appealing.
% \textbf{The key ingredient is LMO.}

% \paragraph{Introduce sample complexity}
% log dependency on the ambient dimension.
% Simpler explanation. (independent of optimization methods.)

We are motivated to solve the following structural statistical estimation problem robustly:  
\begin{equation}\label{eqn:structural_stat_est}
    \param^* = \argmin_{\param} \underset{(\sample, y) \sim \Dcal}{\E} f_{\param}\left( \sample, y\right)
    \quad \quad \quad \textbf{s.t.} \quad  
    \param \in \text{conv}(\Acal) = \Mcal 
\end{equation}
where the parameter of interest $\param \in \mathbb{R}^d$, $\text{conv}(\Acal)$ is the convex hull of a finite set of atoms $\Acal$ \cite{ConvexGeometry}, $f(\cdot)$ is a loss function parametrized by $\param$, and the observation-response pair $(\sample, y)$ follows a joint distribution $\Dcal$. 
This formulation appears frequently in machine learning and signal processing applications, such as LASSO, sparse logistic regression, structural SVM, signal decomposition, compressive sparse signal estimation, sparse signal denoising, change point detection, and so on \cite{rao2015forward, garber2016linear, lacoste2015global,lacoste2012block, rojas2014change, romero2015compressive}.
When we have access to a set of samples  $\Gcal = \{ \sample_i, y_i \}_{i=1}^N$ from the distribution $\Dcal$, we can safely resort to solving the corresponding Empirical Risk Minimization (ERM) problem:
\begin{align}
\label{eqn:ERM-problem}
    \widehat{\param} = \argmin_{\param} \frac{1}{|\Gcal|} \sum_{i\in \Gcal} f_{\param}\left( \sample_i, y_i\right) \quad \quad \quad \textbf{s.t.} \quad  
    \param \in \text{conv}(\Acal) = \Mcal .
\end{align}
While the ERM problem can be solved with projected gradient descent type methods, projecting onto the constraint set $\text{conv}(\Acal)$ could be computationally expensive \cite{jaggi2013revisiting, garber2016linear}.
As a result, the projection-free type algorithm, especially the Frank-Wolfe / Conditional Gradient (CG) type algorithms, have received tremendous attention recently, for avoiding the projection operation.
CG type methods rely on the \textit{Linear Minimization Oracle (LMO)} (see \Cref{sec:conditional-gradient-method}).
Compared to projection onto $\text{conv}(\Acal)$, which involves a quadratic program, LMO, as a linear optimization, in certain cases is simpler to solve. For example, for the flow polytope,  marginal polytope, matroid polytope, and permutation polytope, LMO is computationally less expensive than projection \cite{garber2016linear, lacoste2015global}.
Thus for such tasks, the CG type method achieves the state of the art computational speeds \cite{joulin2014efficient, lacoste2012block, garber2016linear, henschel2018fusion, lei2019primal, hazan2012projection, hazan2016variance, rao2015forward, krishnan2015barrierfw}.

% While computational complexity describe how fast an algorithm solves the ERM problem, we also want to know how good we are in solving Problem \ref{eqn:structural_stat_est} if we solve the ERM problem.
%  $\| \widehat{\param} - \param^* \|_2$ is known as the \textit{statistical error}.
% While \textit{sample complexity} is the amount of sample we need to find a solution $\widehat{\param}$ in Problem \ref{eqn:ERM-problem}, independent of the algorithm we use, such that $\|\widehat{\param} - \param^*\| \leq \epsilon$.
% Without extra condition, the sample complexity is proportional to $d$, the dimension of $\param$.
% % (intuitively the number of equations should matches the number of unknown)
% However, considering the structural constraint $\Mcal=\text{conv}(\Acal)$, it is well known that the \textit{sample complexity} will, instead of scaling with the ambient dimension $d$, scale with the structural complexity of $\Mcal$, such as $|\Acal|$, the Gaussian Width of $\Mcal$, and possibly $\log(d)$ \cite{ConvexGeometry, raskutti2011minimax}. 

While CG methods are computationally efficient, they are not robust under data corruption.
In fact solving the ERM directly is not plausible when we only have access to a corrupted data set instead of the clean data set $\Gcal$.
Consider the standard Huber's Corruption model (detailed in Section \ref{sec:corruption_model}), where an adversary can arbitrarily corrupt $\epsilon$ fraction of the data.
Such corruption can turn $\widehat{\param}$ arbitrarily away from $\param^*$, even when $\epsilon$ is marginally small.
Then how to solve problem \ref{eqn:structural_stat_est} with corrupted data?
Much recent work has focused on this problem in a variety of settings \cite{chen2013robust, RDC, prasad2018robust, sever, liu2018high}. However, no work we are aware of has developed robust project-free methods. Moreover, sparsity in an atomic norm -- natural for CG-based methods -- remains outside the scope of several recent methods.
%However, even from the robustness perspective, they are either too confined to deal with our general problem formulation, or they scarify too much in terms of sample complexity, let alone the fact that they are not able to exploit the benefit of the projection free property.
For example, \cite{chen2013robust, liu2018high, RDC} require sparsity in the standard basis; and more general approaches that robustify the gradient using recent advances in robust mean estimation succeed in the low-dimensional setting
\cite{prasad2018robust, sever}, but their sample complexity is polynomial in $d$ regardless of the structure of the feasible set; this can be prohibitive in high dimensional settings.

\paragraph{Contributions } 
We summarize our contributions as follows:

\begin{itemize}[leftmargin=*]
%   \item For structural statistical estimation (Problem \ref{eqn:structural_stat_est}), the existing solutions are either not robust, or cannot deal with the high dimensional setting (i.e. not sample efficient). We address these by robustified conditional gradient type methods (Algorithm \ref{alg:robust-PCG} and \ref{alg:robust-DICG}). Besides the sample complexity advantages, we exploit the projection free idea and hence our proposed algorithm are more computational efficient for a wide class of constraints.
\item We show that the two classical Pairwise CG methods are stable, i.e.,
small error in each iteration does not accumulate (\Cref{sec:meta-thm-DICG-PCG}).
To capture this, we introduce a robustness notion named Robust Atom Selection Condition (RASC) ( \Cref{sec:RASC}). Although RASC is tailored for CG methods, we show that it can be implied by various standard robustness notions, which means RASC is not a stronger condition. As long as RASC is satisfied, we can guarantee two classical pairwise CG methods converges linearly with the corresponding statistical error (\Cref{sec:meta-thm-DICG-PCG})
\item Combine the stability of the Pairwise CG with robust mean gradient estimation techniques (\Cref{sec:algo-extension}), we can can guarantee robustness to a wide class of problems, but now in a projection-free algorithmic framework.
\item Under high dimensional setting, robust mean estimation based approaches \cite{prasad2018robust} may have an prohibitive sample complexity, and recent Iterative-Hard-Thresholding-based methods \cite{RDC} cannot be extended beyond the feasible set with more than $O(d)$ extreme points. When the feasible set has $O(\text{poly}(d))$ extreme points, we develop a novel robustness method by directly robustifiying the Linear Minimization Oracle (\Cref{sec:robust-lmo}) for standard Huber's adversarial corruption model and heavy tail corruption model on both the explanatory variables and response variable (\Cref{sec:corruption_model}).
Our method reduce the sample complexity from $O(d)$ (as would be required by any approach requiring robust gradient estimation) to $O(\log d)$. Specifically, we match the minimax-rate for linear regression over $\ell_1$ ball.  ( \Cref{sec:minimax-rate})
\end{itemize}

% \subsection{Application List}
% \begin{itemize}
%     \item Lasso.
%     \item Kernel lasso. ?
%     \item Fused Lasso. Fused Lasso has been an important topic for structural prediction \cite{tibshirani2005sparsity}. It imposes similarities between pairs of variables. In fact, even for general graph-guided Fused Lasso, the constraint can be formulated as $\| C \beta \|_1 \leq \tau$ for an appropriate $\tau$ \cite {chen2012smoothing}.
%     \item Decovolution / Decomposition  \cite{rao2015forward}
%     \item $\ell_{1,1}$ group lasso \cite{vogt2012complete} with overlap. Overlap enable us to formulate more complicated constrains, such as hierarchical (tree-like) constraints \cite{jenatton2011proximal}.
%     \item Compressive covariance matrix estimation.  \cite{romero2015compressive}.
% \end{itemize}

% \TODO{List more motivating applications for using atomic norm? Group Lasso?}

% If we are talking about general statistical inference problem, i.e., $f(x)$ is a MAP problem, what are the applications? MAP on graph?

% If we are talking about other problems, how can we robustify.

\paragraph{Related work}
% As we consider solving Problem \ref{eqn:structural_stat_est} robustly, we mainly review two lines of research: how to solve Problem \ref{eqn:structural_stat_est}, and how to be robust.

% While projection based methods, such as projected gradient descend, is one of the most natural choice when faced with constrained convex optimization problem, but the projection operation itself could be the computational bottle neck. 
To avoid the computationally expensive projection operation, projection free algorithm, such as the conditional gradient methods and its variants have their popularity as the solver for constrained convex optimization problem \cite{wolfe1970convergence,jaggi2013revisiting, lacoste2012block, rao2015forward, lacoste2015global, boyd2017alternating, garber2016linear}.
The vanilla conditional gradient method, also known as the Frank-Wolfe method, named after Frank and Wolfe, has been known to have sub-linear convergence rate for convex and smooth problem.
Lacoste-Simon and Jaggi show that a few variants of the conditional gradient methods converges linearly for strongly convex and smooth function, and convex constraint set \cite{lacoste2015global}.
Recently Garber and Meshi improve the result by refining the convergence rate and simplifying the algorithm \cite{garber2016linear}.

Unlike projected gradient descend type methods, whose stability under data corruption has been well studied \cite{prasad2018robust, RDC, liu2018high}, the stability of CG type methods are largely unknown.
Recent CG type works allow us to solve the LMO inexactly \cite{allen2017linear, lei2019primal}. Bounding such progress deviation mainly involves manipulation of the constant in their descend lemma, while bounding deviation incurred by data corruption requires setting up a different descend lemma, which is fundamentally more complicated. 
Meanwhile recent asynchronous CG type methods ascribes the asynchronous error to the gradient staleness \cite{wang2016parallel, zhuo2019communication}. Our analysis techniques are also different. Reasons include but not limited to, the progress deviation incurred by using slightly outdated gradient, which is known to be still informative \cite{recht2011hogwild}, is different from the deviation incurred by data corruption; and while they only focus on sublinear convergent vanilla CG, we are interested in linear convergent pairwise CG, whose analysis techniques are much more delicate.

Another related line of research is robust statistics. Traditional robust statistics leverages variants of Huber loss to deal with outliers in the response variables. And there has been a long history on dealing with arbitrary corruptions in explanatory variables and response variables.
Variants of Least Trimmed Squares
\cite{rousseeuw1984least, alfons2013sparse,vainsencher2017ignoring,Trim2018general,shen2018iteratively} solve alternating minimization, which only have local convergence guarantees, and cannot handle arbitrary corruptions.
Recent years witness a line of research on dealing with sparse regression with outliers in explanatory variables and response variables
\cite{chen2013robust, fan2016shrinkage, lugosi2016tourament, lecue2017robust, lugosi2017remark, gao2017robust,liu2018high, RDC}. However, their techniques are restricted to the sparsity constraint.
Among them, \cite{liu2018high, RDC} proposed a robust version of iterative hard thresholding to deal with sparsity constraint in high dimensions, which leverages the idea of robust gradient descent. The robust gradient descent methods \cite{chen2017distributed, Yin_median, prasad2018robust, sever} 
propose to use a robust mean estimator on the gradient samples, and then perform projected gradient descent. 
Without the consideration of low dimensional structure, their sample complexity would depends on the ambient dimension, which could turn out to  be infeasible for high dimensional problems.
Though \cite{RDC} overcomes the dependence on the ambient dimension by leveraging the sparsity of hard thresholding, their technique cannot be directly extended to more complex constraint 
\cref{eqn:structural_stat_est}, since hard thresholding on Atomic Norm is NP hard. 
Consider one dimensional atoms, and hard thresholding operation on the corresponding atomic norm is equivalent to sub-set sum problem, which is known to be NP-complete.
% \TODO{Further explain in the appendix}.

\section{Preliminary}

\subsection{Notation}
Unless otherwise specified, we use bold lower case letter to denote vectors (e.g. $\bv$), bold capital letter to represent matrices (e.g. $\bA$) except that $\bG$ is solely used to represent population gradient, and calligraphic font latter to represent set (e.g. $\Acal$).
A bracket number $i$ in the subscript is used to index element $i$ in a vector.
For example ${v_i}_{(j)}$ means the $j$-th element of the vector $\bv_i$.
$\|\cdot\|$ represents $\ell_2$ norm for vectors and Frobenius norm for matrices unless specified otherwise. 
$\|\cdot\|_*$ indicates the trace norm for a matrix.

We use $\text{conv}(\Acal)$ to represent the convex hull of set $\Acal$, \textit{i.i.d.} as the short-hand for independent and identically distributed, and $\text{card}(\param)$ as the minimum number of elements in $\Acal$ we need in order to represent $\param$, where $\Acal$ should be clear from context if not explicitly mentioned.

We call a function $f$ $\alpha$ strongly convex over $\Mcal$ if 
    $\forall \bx, \by \in \Mcal, f(\by) \geq f(\bx) + \langle \nabla f(\bx), \by-\bx\rangle +\frac{\alpha}{2}\|\by-\bx\|^2$
. Similarly, $f$ is $\beta$-smooth if
$    f(\by) \leq f(\bx) + \langle \nabla f(\bx), \by-\bx\rangle +\frac{\beta}{2}\|\by-\bx\|^2,  \forall \bx, \by \in \Mcal.$

\subsection{Conditional Gradient Method}
\label{sec:conditional-gradient-method}
Conditional gradient (CG) method, also known as the Frank-Wolfe (FW) method named after its inventors, is naturally suitable for solving problem \ref{eqn:structural_stat_est}. For general convex constrained optimization problem $\min_{\bx \in \Mcal} F(\bx)$, the vanilla CG method proceeds in each iteration by computing:
\begin{align}
    \bU_k &= \argmin_{\bU \in \Mcal} \quad \langle  \nabla F(\bx_{k-1}), \bU \rangle,
    \label{eqn:FW_linearopt}\\
    \bx_k &= (1 - \eta_k) \bx_{k-1} + \eta_k \bU_k.
    \label{eqn:FW_update}
\end{align}
The computational complexity hence depends on that of the linear optimization steps as in Equation \eqref{eqn:FW_linearopt}. 
For a wide class of constraints $\Mcal$ such as flow polytope,  marginal polytope, matroid polytope, and permutation polytope, the linear optimization, compared to the projection, is of relatively low computational complexity \cite{jaggi2013revisiting, garber2016linear}.
As an example, when $\Mcal$ is $\| \bX \|_* \leq 1$ and $\bX \in \mathbb{R}^{d\times d}$, the linear optimization step returns $\bu \bv^T$, where $\bu,\bv$ are left and right singular vector. Hence the linear optimization step takes $\Ocal(d^2)$ time, compared to $\Ocal(d^3)$ for a projection.

% For atomic norm constrained convex optimization problem $\min_{\|\bx\|_\Acal \leq M} F(\bx)$ , the linear optimization step as in Equation \eqref{eqn:FW_linearopt} is equivalent 
% \begin{align*}
%     \bU_k &= \argmin_{\ba \in \Acal} \quad \langle  \nabla F(\bx_{k-1}), \ba \rangle,
% \end{align*}
% which is basically selecting one atom $\ba_i$ from the atom set $\Acal$ such that $\langle  \nabla F(\bX_{k-1}), \ba \rangle$ is minimized. \TODO{While projection to atomic norm could be hard, computing the linear optimization could be easy. Example.}

Despite being projection free, the original CG method only converges sub-linearly for strongly convex loss function due to the zig-zaging phenomenon \cite{lacoste2015global}.
Several variants of the CG algorithm, such as the Away Step Conditional Gradient (AFW) and Pairwise Conditional Gradient (PCG), are therefore introduced and analyzed to improve the convergence performance under the strongly convex setting \cite{wolfe1970convergence, lacoste2015global}.
The main idea is that, besides computing Equation \eqref{eqn:FW_linearopt} to obtain a Frank Wolfe direction, we also obtain an Away direction by computing
\begin{align}
\label{eqn:away-atom}
    \bU_k &= \argmax_{\ba \in \Scal_{k-1}} \quad \langle  \nabla F(\bx_{k-1}), \ba \rangle,
\end{align}
where $\Scal_{k-1}$ is the a set of the active atoms to represent $\bx_{k-1}$:
$\Scal_{k-1} = \{ \ba_i | \bx_{k-1} = \sum \lambda_i \ba_i, \lambda_i > 0,\ba_i \in \Acal\}$.
We summarize the Pairwise Conditional Gradient method in Algorithm \ref{alg:PCG} \cite{lacoste2015global, garber2016linear}. 
% See Figure \ref{fig-away-step-illustrator} for an intuitive illustrating example.
We call the atom found by \Cref{eqn:FW_linearopt} the \textbf{Frank-Wolfe-Atom} (FW Atom), and the atom found by \Cref{eqn:away-atom} the \textbf{Away-Atom}.

Although  \Cref{alg:PCG} has been shown to converge exponentially fast \cite{lacoste2015global}, its  memory complexity is much worse than standard CG due to maintaining the convex decomposition, and its convergence rate inexplicitly depends on the dimension \cite{garber2016linear}. 
To overcome these two drawbacks, 
Garber and Meishi proposed Decomposition Invariant Pairwise Conditional Gradient (DICG) method \cite{garber2016linear} for constrains $\Mcal$ that satisfies the following assumption
\begin{assumption}
\textbf{(For DICG \cite{garber2016linear})}
\label{assumption-DICG-constraints}
$ \Mcal$ can be described algebraically as $ \{\bx \in \mathbb{R}^d | \bx \geq 0, \bA \bx = \bb\}$, and all vertices of $ \Mcal$ lie on the hypercube $\{0,1\}^d$.
\end{assumption}
DICG is summarized in \Cref{alg:DICG}. 
Although DICG is not designed for arbitrary atomic set constraint, it covers the vast majority of the important applications, such as Marginal polytope, Flow polytope, Perfect Matchings polytope, and so on. In the remaining part of this paper, we will show how one can robustify these two Pairwise Conditional Gradient type algorithms.

% \TODO{Move these definition to appendix, and discuss in the appendix about their relationship with the traditional convexity and smoothness.}

% \begin{algorithm}[H]
% \caption{Pairwise Conditional Gradient \cite{garber2016linear}}
% \begin{algorithmic}[1]
% \STATE
% Let $\param_1$ be a vertex  $\in \mathcal{V}$
% \FOR{$t=1, \cdots, T$}
%     \STATE
%     Maintain $\sum_{i\in k_t} a_t^i \bv^i$ as the convex decomposition of $\param_t$
%     \STATE
%     $\bv_t^+ = \argmin_{\bv\in \Vcal} \langle \nabla F(\param_{t}), \bv \rangle $
%     \STATE
%     $j_t = \argmax_{j \in k_t} \langle \nabla F(\param_{t}), \bv \rangle $
%     \STATE
%     Linear-search a step-size $\eta_t \in (0, a_t^{j_t})$.
%     \STATE
%     $\param_{t+1} = \param_{t} + \eta_t (\bv_t^+ - \bv^{j_t})$
% \ENDFOR
% \end{algorithmic}
% \label{alg:PCG}
% \end{algorithm}

\begin{minipage}{0.45\textwidth}
\vspace{-10pt}
\begin{algorithm}[H]
\caption{Pairwise Conditional Gradient (PCG) \cite{lacoste2015global}}
\begin{algorithmic}[1]
\STATE
Let $\param_1$ be a vertex in  $\mathcal{A}=\{\ba_i\}_i$
\FOR{$t=1, \cdots, T$}
    \STATE
    Maintain $\sum_{i\in k_t} a_t^i \ba_i$ as the convex decomposition of $\param_t$
    \STATE
    $\bv_t^+ = \argmin_{\ba\in \Acal} \langle \nabla F(\param_{t}), \ba \rangle $
    \STATE
    $j_t = \argmax_{j \in k_t} \langle \nabla F(\param_{t}), \ba_j \rangle $
    \STATE
    $\bv_t^- = \ba_{j_t}$
    \STATE
    Linear-search a step-size $\eta_t \in (0, a_t^{j_t})$
    \STATE
    $\param_{t+1} = \param_{t} + \eta_t (\bv_t^+ - \bv_t^-)$
\ENDFOR
\end{algorithmic}
\label{alg:PCG}
\end{algorithm}
\end{minipage}
\hspace{\fill} 
\begin{minipage}{0.5\textwidth}
\vspace{-10pt}
\begin{algorithm}[H]
\caption{Decomposition-invariant Pairwise Conditional Gradient (DICG) \cite{garber2016linear}}
\begin{algorithmic}[1]
\STATE
input: sequence of step-sizes $\{\eta_t\}_{t\geq 1}$
\STATE
Let $\param_1$ be a vertex in  $\mathcal{A}=\{\ba_i\}_i$
\FOR{$t=1, \cdots, T$}
    \STATE
    $\bv_t^+ = \argmin_{\ba\in \Acal} \langle \nabla F(\param_{t}), \ba \rangle $
    \STATE
    \textbf{Define ${\bg}$: ${g}_{(i)} = \nabla F(\param_{t})_{(i)}$ if ${x_{t}}_{(i)} > 0$ else $-\infty$ }
    \STATE
    $\bv_t^- = \argmin_{\ba\in \Acal} \langle \tilde{\bg}_{(i)}, \ba \rangle $
    \STATE
    % Option 1: 
    let $\delta_t$ be the smallest natural such that $2^{-\delta_t} \leq \eta_t$, and  $\tilde{\eta_t} = 2^{-\delta_t}$. 
    % \STATE
    % Option 2: $\gamma_t = \max_{\gamma \in [0,1]}\{\param_t + \gamma (\bv_t^+ - \bv_t^-) \geq 0\}$ and  $\tilde{\eta_t} = \min_{\eta \in (0, \gamma_t]} F(\param_t + \eta (\bv_t^+ - \bv_t^-))$. 
    \STATE
    $\param_{t+1} = \param_{t} + \tilde{\eta} (\bv_t^+ - \bv_t^-)$
\ENDFOR
\end{algorithmic}
\label{alg:DICG}
\end{algorithm}
\end{minipage}
% \TODO{Keep or not? Or we are only changing a few line of other's algorithm}
% \begin{figure}
% \begin{center}
% \includegraphics[width=0.30\linewidth,height=0.27\linewidth]{}
% \includegraphics[width=0.30\linewidth,height=0.27\linewidth]{fig/Placeholder.jpg}
% \includegraphics[width=0.30\linewidth,height=0.27\linewidth]{fig/Placeholder.jpg}
% \end{center}
% \label{fig-away-step-illustrator}
% \caption{\TODO{} }
% \end{figure}

\subsection{Classical Corruption Model}
\label{sec:corruption_model}

We introduce two classical corruption models as a running example to demonstrate our robustness results. 
Yet we emphasize that our framework is
more general and not restricted to these two models.

% \TODO{liu - refine? High dimensional robust mean estimation.}
Let $z_i = (\sample_i, y_i)$ be the tuple of the explanatory random variable and the response random variable.
There are two wide-accepted corruption model \cite{prasad2018robust, RDC}:
\begin{definition}
\label{def:eps-corrupted-samples}
\textbf{($\epsilon$-corrupted samples)} Let $\{ z_i, i \in \mathcal{G} \}$ be i.i.d. observation from a distribution.
A collection of samples $\{z_i, i \in \mathcal{S} \}$ is $\epsilon$-corrupted if an adversary chooses an arbitrary $\epsilon$ fraction of samples from $\mathcal{G}$ and modify them with arbitrary values.
\end{definition}

\begin{definition}
\label{def:heavy-tail-samples}
\textbf{(heavy-tailed samples)} For a distribution $\Dcal$ of a random variable $\sample \in \mathbb{R}^d$ with mean $\E(\sample)$ and covariance $\mathbf{\Sigma}$, we say that $\Dcal$ has bounded $2k$-th moment if there is a constant $C_{2k}$ such that 
$
    \E |\langle \bv, \bx - \E(\sample)\rangle|^{2k} \leq C_{2k} \E (|\langle \bv, \sample - \E(\sample)\rangle|^2)^{k}
$
for any $\bv \in \mathbb{R}^d$.
\end{definition}

\subsection{Robust Mean Estimation}
\label{sec:robust-mean-estimation}

Our work horse, the Robust Linear Minimization Oracle, relies on one dimensional robust mean estimators.
Here we define two classical ones, tailored for the two corruption models above.
\begin{definition}
Given a set $\Scal$ sample of one dimensional random variable $X$, we can estimate $\E[X]$ by\\
\textbf{(1) Median of Mean (MOM)}: Median of Mean estimator partitions $\Scal$ into $k$ blocks, computes the sample mean of within each block, and then take the median of these means.\\
\textbf{(2) Trimmed Mean (TrM)}: Trimmed Mean estimator removes the largest and smallest $\alpha$ fraction of $\Scal$ and calculates the mean of the remaining terms.
\end{definition}

\section{Algorithm Design}

In this section, we describe how to robustify the PCG and the DICG methods.

\subsection{Robust Mean Estimation to Robustify PCG and DICG}
\label{sec:algo-extension}
The crux to set up the robustness of PCG and DICG is to setup Robust Atom Selection Condition (RASC) (\Cref{sec:RASC}). One can obtain RASC by performing multi-dimensional robust (sparse) mean estimation methods \cite{lai2016agnostic, hsu2016loss, du2017computationally} to estimate the gradient at iteration $t$, to obtain a robust gradient $\tilde{G}_t$, as in several previous work \cite{RDC, prasad2018robust}, and then use $\tilde{G}_t$ to estimate a FW-Atom and Away-Atom.
The robust gradient estimated by these works has been proven to satisfy a corresponding robustness condition (such as Robust Descend Condition in \cite{RDC}), and we establish that these robustness condition implies RASC, as detailed in  \Cref{sec:RASC} and Appendix  \Cref{sec:RASC-be-implied}. Since RASC is satisfied, the linear convergence property of PCG and DICG is maintained (\Cref{sec:meta-thm-DICG-PCG}).

Since the analysis of using Robust Mean Estimation to robustify PCG and DICG can be thought of as a special case of using Robust Linear Minimization Oracle (RLMO) to robustify PCG and DICG, we left the details to Appendix \Cref{appendix-sec:alg-extension}, and focus on RLMO.
% One computes a robust mean estimation of the gradient $\nabla F(\param_t)$ by a robust gradient,

% \TODO{Majority Voting}

\subsection{Robust Linear Minimization Oracle to Robustify PCG and DICG}
\label{sec:robust-lmo}
As an alternative, Robust Linear Minimization Oracle set up RASC with sample complexity advantages (see \Cref{coro:minimax-linear-regression}) compared to \Cref{sec:algo-extension} when $|\Acal|$ is polynomial in $d$.

Unlike what is typical in previous work which robustifying the gradient \cite{RDC, prasad2018robust}, Robust Linear Minimization Oracle (RLMO) directly pick the FW-atoms and Away-atoms in a robust way. Bypassing gradient robustification is feasible because CG methods use the gradient only when computing the FW-atoms and the Away-atoms.
Then naturally, instead of gradient robustness condition, the key to establish the robustness of PCG and DICG is to select the corresponding atoms up to a robustness condition, which we name Robust Atom Selection Condition (RASC) as detailed in \Cref{sec:RASC}. 
It is worth noting that RLMO is one way to establish RASC, but not the only way.

RLMO is described in \Cref{alg:robust-linear-optimization-oracle}. Given a set of atoms $\Acal$, and a target vector $\bg$, the Linear Minimization Oracle (LMO) as in vanilla CG returns $\argmin_{\ba \in \Acal} \langle \ba, \bg \rangle$. 
% One way to think of this is projecting the vector $\bg$ to all of the atoms in $\Acal$, and then pick the largest one.
Since $\bg$ is in fact the empirical mean of a random vector $\bG$, i.e. $\bg = \frac{1}{N} \sum_i^N \bg_i $, then $\langle \ba, \bg \rangle = \left\langle \ba, \frac{1}{N}\sum_i^N \bg_i \right\rangle$is just the empirical mean of a random variable $\langle \ba, \bG \rangle$.
To obtain a robust estimator of the mean of $\langle \ba, \bG \rangle$, we can use a robust one dimensional mean estimator (such as MOM or TrM as in Section \ref{sec:robust-mean-estimation}),
instead of the empirical mean estimator. After performing the robust one dimensional mean estimator to estimate $\langle \ba, \bG \rangle$ for all $\ba \in \Acal$, we pick the smallest one. To use the RLMO to find the Away-Atoms, one can just negate the target vectors. This constitute the Robust Linear Minimization Oracle.
% \TODO{Connection with robust inner product? Connection with IHT?}

\vspace{-5pt}
\begin{algorithm}[H]
\caption{Robust Linear Minization Oracle (RLMO)}
\begin{algorithmic}[1]
\STATE
\textbf{input}: (1) A set of atoms $\Acal=\{\ba_i\}_i^{|\Acal|}$ (2) A set of i.i.d. samples $\{\bg_i\}_i^N$ 
\STATE
\textbf{Initialize} with a one dimensional robust mean estimation function $R$
\FOR{$i=1, \cdots, |\Acal|$}
    \STATE
    Pass $\{\langle \bg_j, \ba_i \rangle | j \in [N]\}$  to $R$. Let the return value as $r_i$
\ENDFOR
\STATE
Return $\argmin_i r_i$.
\end{algorithmic}
\label{alg:robust-linear-optimization-oracle}
\end{algorithm}
\vspace{-15pt}

% \subsection{Robust PCG and Robust DICG}

% Intuitively, conditional gradient methods requires the knowledge of the gradient only when computing the FW-atom and the Away-atom.
% Hence, the key to establish the robustness of PCG and DICG is to select the corresponding atoms up to a robustness condition, instead of robustifing gradient, as other works do \cite{RDC, prasad2018robust}.
% Such a robustness condition is named Robust Atom Selection Condition (RASC) as detailed in \Cref{sec:RASC}. 
% It is work noting that RLMO is one way to establish RASC, but not the only way ( \Cref{sec:algo-extension}).

Findnig the correct FW-Atoms and Away-Atoms is the key to the convergence of the original CG methods \cite{lacoste2015global, jaggi2013revisiting}.
For PCG and DICG, we show that they converge linearly to the optimal if they are able to find FW-Atoms and Away-Atoms that are \textit{good enough} - formally defined as RASC as in  \Cref{sec:meta-thm-DICG-PCG}. 
RLMO can find  FW-Atoms and Away-Atoms that satisfies RASC (\Cref{sec:RASC}).
Replacing LMO with RLMO, we  describe Robust-PCG and Robust-DICG as in Algorithm \ref{alg:robust-PCG} and \ref{alg:robust-DICG}.

\begin{minipage}{0.51\textwidth}
\vspace{-10pt}
\begin{algorithm}[H]
\caption{Robust Pairwise Conditional Gradient (Robust-PCG)}
\begin{algorithmic}[1]
\STATE
\textbf{Input}: sequence of step-sizes $\{\eta_t\}_{t\geq 1}$ 
\STATE
\textbf{Init.} the RLMO with a one dimensional robust mean estimation function $R$
\STATE
Let $\param_1$ be a vertex in  $\mathcal{A}=\{\ba_i\}_i$
\FOR{$t=1, \cdots, T$}
    \STATE
    Maintain the convex decomposition $\param_t = \sum_{i\in K_t} c_t^i \ba_i$ where $c_t^i > 0$
    \STATE
    $i^+ = \text{RLMO}(\Acal, \{\nabla f_i (\param_t)\}_{i=1}^N,) $
    \STATE
    $i^- = \text{RLMO} \left(\{\ba_i\}_{i\in K_t},\{-\nabla f_i (\param_t)\}_{i=1}^N \right)$
    \STATE
    $\bv_t^+ = \ba_{i^+}, \quad \bv_t^- = \ba_{i^-}$
    \STATE
    Trim the step-size $\tilde{\eta}_t = \min (\eta_t, a_t^{i^-})$
    \STATE
    $\param_{t+1} = \param_{t} + \tilde{\eta}_t (\bv_t^+ - \bv_t^-)$
\ENDFOR
\end{algorithmic}
\label{alg:robust-PCG}
\end{algorithm}
\end{minipage}
\hfill
\begin{minipage}{0.47\textwidth}
\vspace{-10pt}
\begin{algorithm}[H]
\caption{Robust Decomposition-invariant Pairwise Conditional Gradient (Robust-DICG) }
\begin{algorithmic}[1]
\STATE
\textbf{Input}: sequence of step-sizes $\{\eta_t\}_{t\geq 1}$ 
\STATE
\textbf{Init.} the RLMO with a one dimensional robust mean estimation function $R$
\STATE
Let $\param_1$ be a vertex in  $\mathcal{A}=\{\ba_i\}_i$
\FOR{$t=1, \cdots, T$}
    \STATE
    $\bv_t^+ = \text{RLMO}(\Acal, \{\nabla f_i (\param_t)\}_{i=1}^N) $
    \STATE
    \textbf{Define $\{{\bg}_i\}_{i=1}^N$: ${{{g}}_{t}}_{(j)} = \nabla f_i(\param_{t})_{(j)}$ if ${x_{t}}_{(j)} > 0$ else $-\infty$ }
    \STATE
    $\bv_t^- = \text{RLMO}(\Acal, \{-{\bg}_i\}_{i=1}^N) $
    \STATE
    % Option 1: 
    let $\delta_t$ be the smallest natural such that $2^{-\delta_t} \leq \eta_t$, and  $\tilde{\eta_t} = 2^{-\delta_t}$. 
    % \STATE
    % Option 2: $\gamma_t = \max_{\gamma \in [0,1]}\{\param_t + \gamma (\bv_t^+ - \bv_t^-) \geq 0\}$ and  $\tilde{\eta_t} = \min_{\eta \in (0, \gamma_t]} F(\param_t + \eta (\bv_t^+ - \bv_t^-))$. 
    \STATE
    $\param_{t+1} = \param_{t} + \tilde{\eta}_t (\bv_t^+ - \bv_t^-)$
\ENDFOR
\end{algorithmic}
\label{alg:robust-DICG}%}
\end{algorithm}

\vspace{-10pt}
\end{minipage}

\textbf{Choice of hyper-parameters.} There are two hyper-parameters.
The choice of the hyper-parameter for the one dimensional robust mean estimation (1-d robustifier) is made clear in \Cref{proposition:RLMO-implies-RASC} and the corresponding general proposition in the appendix.
The choice of the learning rate follows the convergence theorems (\Cref{thm-robust-DICG-maintext} and \ref{thm-robust-PCG-appendix} in the appendix).
The learning rate will be geometrically decreasing with respect to iteration $t$, similar to \cite{garber2016linear}.
One can choose the constant factor in the learning rate as in \cite{garber2016linear}.
The geometrically decreasing part in our learning rate follows the sub-optimality $F(\param_t) - F(\param^*)$, where $F(\param)$ is the population function: $\E_{(\sample, y)} f_{\param}(\sample,y)$.
While we do not have access to $F(\param_t) - F(\param^*)$, we have access to a robust estimate of the duality gap for free: $\langle \nabla F(\param_t), \param_t - \bv_t^+ \rangle$ because this is an intermediate result of RLMO. Such a duality gap is a certificate of the sub-optimality \cite{garber2016linear, hazan2012projection}, and can be used to estimate $F(\param_t) - F(\param^*)$.

\section{Theoretical Analysis}
% (There may be some incompleteness in the definition. Please refer to \cite{lacoste2015global}.)

% Denote $G_t$ as the \textbf{un-corrupted} gradient at $\param_t$, and $\tilde{G_t}$ as the output of the robustification procedure. Let $h_t = f(\param_t) - f(\param^*)$.
% In the case that we would like to define $G_t$ as the population gradient, we need to modify the constant mentioned in the preliminary section accordingly.

We begin by introducing the Robust Atom Selection Condition (RASC), the key robustness notion to setup the convergence of the CG methods, and what Robust LMO introduced in Section \ref{sec:robust-lmo} guarantees.
In the next subsection we introduce our meta theorems: as long as RASC is satisfied, DICG and PCG converges linearly with a corresponding statistical error. 
In the final subsection, we demonstrate our sample complexity advantages, by reducing the sample complexity from order $d$ to order $\log |\Acal|$, and achieving the minimax statistical rate for sparse linear regression over $\ell_1$ ball.

% To ease the illustration, we introduce the set of assumptions that we rely on for Problem \ref{eqn:structural_stat_est}.
% \begin{assumption}
% \label{assumption-bounded-D}
% The diameter of the constraint is bounded by $D$: $D = \max_{\param, \by \in \text{conv}(\Acal)} \| \param - \by \|$.
% \end{assumption}

\subsection{Robust Atom Selection Condition}

\label{sec:RASC}
% Denote the Frank-Wolfe Atom and the Away Atom computed according to $\bG_t$ as $\bv_t^+$ and $\bv_t^-$:
% $\bv_t^+ = \argmin_{\bv\in\Acal} \langle \bG_t, \bv \rangle$ and
% $\bv_t^t = \argmax_{\bv\in \text{active set of } \param_t} \langle \bG_t, \bv \rangle.$

% Therefore we say a moving direction $\bd$ is computed with respect to a gradient $\bG$ if $\bd = \bv_t^+ - \bv_t^-$, and $\bv_t^+$, $\bv_t^-$ are the FW-Atom and Away-Atom with respect to $\bG$.
Informally, let $\bd$ be the ideal moving direction when we have access to the correct population gradient, and $\widetilde{\bd_t}$ is a moving direction estimation.
We call $\bd - \widetilde{\bd_t}$ the the perturbation of moving direction.
Intuitively, requiring $\|\bd - \widetilde{\bd_t}\|_2$ to be small is a natural condition to establish the robustness for an algorithm, and similar idea appears in a recent work \cite{prasad2018robust}.
However, this is equivalent to requiring $\bd - \widetilde{\bd_t}$ to be small along any direction in $\mathbb{R}^d$.
In contrast, we will show that pair-wise conditional gradient methods only requires $\bd - \widetilde{\bd_t}$ to be small along the direction of the population gradient $\bG_t$.
We will make it formal and call it the Robust Atom Selection Condition (RASC).

As a reminder, we say a FW-atom $\bv^+$ is computed with respect to a gradient $\bG$ if $\bv_t^+ = \argmin_{\bv\in\Acal} \langle \bG, \bv \rangle$.
And similar for the Away-Atom.
\begin{definition}
\label{def:rasc}
\textbf{($(\theta, \psi)$ - Robust Atom Selection Condition (RASC))}. 
Let the $\bG_t$ be the population gradient (the ground truth gradient) evaluated at the $\param_t$.
Let $\bv_t^+$ and $\bv_t^-$ be the FW-Atom and Away-Atom computed with respect $\bG_t$.
Let $\bvtil_t^+$ and $\bvtil_t^-$ be the FW-Atom estimation and Away-Atom estimation computed by a robust procedure (e.g. RLMO).
Let $\bd_t = \bv_t^+ - \bv_t^-$, and $\widetilde{\bd_t} = \bvtil_t^+ - \bvtil_t^- $ to denote the respective moving direction of the pairwise Conditional Gradient method.
We say the estimation $\bvtil_t^+$ and $\bvtil_t^-$ satisfy the $(\theta, \psi)$-Robust Atom Selection Condition (RASC), if:
% \begin{align*}
%     |\langle \bG_t, \bvtil_t^+ - \bvtil_t^- \rangle -
%     \langle \bG_t, \bv_t^+ - \bv_t^- \rangle|
%     \leq 4 \theta h_t^{1/2} + 4 \psi,
% \end{align*}
\begin{align*}
    |\langle \bG_t, \bd_t - \widetilde{\bd_t} \rangle|
    \leq 4 \theta \| \param_t - \param^* \|_2 + 4 \psi.
\end{align*}
\end{definition}

The constant $\theta$ and $\psi$ are important to characterize RASC, as $\theta$ impact the convergence to $\hat{\param}$, and $\psi$ affects $\|\hat{\param} - \param^*\|$ (e.g. statistical error).
RASC has two important properties:

(1) \textbf{The output of RLMO satisfies RASC.}
We left the precise general theorems and the proof in the Appendix  \Cref{sec:RLMO-implies-RASC-heavy-tail} and \Cref{sec:RLMO-implies-RASC-eps-corrupt}, but show a special case for linear regression below.

(2) \textbf{RASC is not stronger than standard robustness notions.} In fact, standard robustness notions, such as Robust Descend Condition (RDC) \cite{RDC} and Robust Mean Estimation Condition (RMEC) \cite{prasad2018robust} imply RASC.
It means RASC is not a stronger condition than RDC or RMEC, and hence more likely to be satisfied.
We left the precise theorems and the proof in the Appendix  \Cref{sec:RASC-be-implied}.

Note that the first property is important to complete the entire flow of analysis.
For the ease of presentation, we use linear regression as an running example, and later in this section we will show that this matches the minimax statistical rate.

\begin{model} \textbf{(linear regression)}
\label{model:linear-regression}
For Problem \ref{eqn:structural_stat_est}, consider linear regression: $y = \langle \param^*, \sample \rangle + \xi$,  where the observation $\sample$ is a random variable with finite covariance $\Sigma_{\sample}$ (i.e. $\|\Sigma_{\sample} \|_{op} \leq \infty$), $\param^*$ is in convex hull of atom set $\Acal$ with a finite diameter $D$, and $\xi$ is zero mean with finite variance $\sigma$.
Let the gradient evaluated at $\param_t$ be $\bg_t$, and its mean be $\bG_t$ (the population gradient).
\end{model}
\begin{proposition}
\label{proposition:RLMO-implies-RASC}
\textbf{(RLMO implies RASC for linear regression)}
\\\textbf{A.} 
Under Linear Regression Model \ref{model:linear-regression} and $\epsilon$-corruption model (\Cref{def:eps-corrupted-samples}), we further assume that the sample are drawn from a sub-Gaussian distribution.
% \footnote{Such sub-Gaussian assumption is standard, as in \cite{RDC}}.
Let $\bd_t, \bv^+, \bv^-, \widetilde{\bd_t}, \bvtil^+, \bvtil^-$ as defined in \Cref{def:rasc}.
Given $n = \Omega(\log |\Acal|)$ independent samples of $\sample$,
the RLMO with TrM as the robustifier ( \Cref{sec:robust-lmo}) with $\alpha=\epsilon$ , will output $\widetilde{\bd_t} $ that satisfy RASC with probability at least $ 1 - |\Acal|^{-3}$:
\begin{align*}
    % \underset{\bv \in \Acal}{\max} 
    |\langle \bG_t, \bd_t - \widetilde{\bd_t} \rangle|
    = O \left(D\left(\epsilon \log(n |\Acal|) + \sqrt{\frac{{\log |\Acal|}}{n}}\right) \cdot \left( \| \param_t - \param^* \|_2 +  \sigma \right) \right)
\end{align*}
\textbf{B.}
Under Model \ref{model:linear-regression} and heavy-tail model (\Cref{def:heavy-tail-samples}),
Let $\bd_t, \bv^+, \bv^-, \widetilde{\bd_t}, \bvtil^+, \bvtil^-$ as defined in \Cref{def:rasc}.
Given $n = \Omega({ \log |\Acal|})$ independent samples of $\sample$,
the RLMO with MOM as the  robustifier (see  \Cref{sec:robust-lmo}) with $K = \ceil{18 \log |\Acal|}$ , will output $\widetilde{\bd_t}$ at iteration $t$ that satisfy RASC with probability at least $ 1 - |\Acal|^{-3}$:
\begin{align*}
    % \underset{\bv \in \Acal}{\max} 
    |\langle \bG_t, \bd_t - \widetilde{\bd_t} \rangle|
    = O \left( D  \sqrt{\frac{{\log |\Acal|}}{n}} \left( \| \Sigma_{\sample}\|_{op} \|\param_t - \param^* \|_2 + \sqrt{\sigma^2  \| \Sigma_{\sample}\|_{op}} \right)\right)
\end{align*}
\end{proposition}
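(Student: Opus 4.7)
The plan is to reduce the RASC bound to a one-dimensional uniform concentration statement over the atom set $\Acal$, and then invoke standard deviation inequalities for the trimmed mean (Part A) and median-of-means (Part B).

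First, I would exploit the minimality structure of RLMO. For each atom $\ba \in \Acal$, RLMO computes a one-dimensional robust estimator $r_\ba$ of the scalar $\E\langle \bg, \ba\rangle = \langle \bG_t, \ba\rangle$ and returns $\bvtil_t^+ = \argmin_{\ba \in \Acal} r_\ba$; the Away-Atom $\bvtil_t^-$ is obtained analogously from the negated objective. Suppose I can show a uniform deviation $\max_{\ba \in \Acal}|r_\ba - \langle \bG_t, \ba\rangle| \leq \tau$. Then a standard three-term decomposition using both minimality properties yields
\begin{align*}
\langle \bG_t, \bvtil_t^+ - \bv_t^+\rangle
= \big(\langle \bG_t, \bvtil_t^+\rangle - r_{\bvtil_t^+}\big) + \big(r_{\bvtil_t^+} - r_{\bv_t^+}\big) + \big(r_{\bv_t^+} - \langle \bG_t, \bv_t^+\rangle\big) \leq 2\tau,
\end{align*}
and symmetrically $\langle \bG_t, \bv_t^- - \bvtil_t^-\rangle \leq 2\tau$. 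Summing gives $|\langle \bG_t, \bd_t - \widetilde{\bd_t}\rangle| \leq 4\tau$, which already exposes the $4$ in the RASC definition. The problem is then entirely to certify $\tau$ of the claimed form with probability $\geq 1 - |\Acal|^{-3}$.

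Second, I would compute the variance (and, for Part A, a sub-exponential tail norm) of the scalar $X_\ba := \langle \bg, \ba\rangle$. Writing the per-sample gradient as $\bg_i = (\langle \param_t - \param^*, \sample_i\rangle - \xi_i)\sample_i$, I obtain
\begin{align*}
\var(X_\ba) \leq 2\,\E\!\left[\langle \param_t-\param^*, \sample\rangle^2 \langle \sample, \ba\rangle^2\right] + 2\sigma^2\,\E\!\left[\langle \sample, \ba\rangle^2\right].
\end{align*}
For Part B, Cauchy--Schwarz combined with the bounded fourth-moment hypothesis bounds this by $D^2\|\Sigma_{\sample}\|_{op}^2\|\param_t-\param^*\|^2 + D^2\sigma^2\|\Sigma_{\sample}\|_{op}$ up to constants; plugging into the standard median-of-means deviation inequality with $K = \lceil 18\log|\Acal|\rceil$ blocks (which is feasible since $n = \Omega(\log|\Acal|)$) and union-bounding over $\Acal$ produces $\tau$ matching the Part B target. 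For Part A, sub-Gaussianity of $\sample$ makes $X_\ba$ the product of two sub-Gaussians, hence sub-exponential with Orlicz norm $O(D(\|\param_t-\param^*\| + \sigma))$; the standard trimmed-mean guarantee for $\epsilon$-corrupted sub-exponential data with $\alpha=\epsilon$ then yields per-atom error of order $\sqrt{\var(X_\ba)/n} + \epsilon\,\|X_\ba\|_{\psi_1}\log n$, which after the union bound is exactly the claimed Part A rate.

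The hard part will be the moment bookkeeping in the second step: the target expressions factor cleanly as $\|\Sigma_{\sample}\|_{op}\|\param_t-\param^*\|$ and $\sqrt{\sigma^2\|\Sigma_{\sample}\|_{op}}$, which forces one to apply the fourth-moment hypothesis to the correct linear forms (notably $\langle \sample, \ba\rangle$ with the atom-norm bound $\|\ba\|\leq D$) and to avoid double-counting factors of $D$ when passing between the two summands of $\var(X_\ba)$. The concentration inequalities themselves are off-the-shelf; the delicacy lies entirely in this accounting, since a naive application of Cauchy--Schwarz would yield looser factors like $D^2\|\Sigma_{\sample}\|_{op}$ in the wrong place and spoil the minimax rate claimed in Corollary \ref{coro:minimax-linear-regression}.
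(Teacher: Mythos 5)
Your proposal is correct and follows essentially the same route as the paper: a per-atom one-dimensional concentration bound (trimmed mean for Part A, median-of-means for Part B) union-bounded over $\Acal$, converted into the RASC bound via the minimality of the robust scores, together with the same variance / sub-exponential-norm accounting for the linear-regression gradient (the paper's Lemma bounding $\|\E(\bg-\bG)(\bg-\bG)^T\|_{\rm op}$ is just your per-atom variance bound stated for all directions at once). Your three-term telescoping through $r_{\bvtil_t^+} - r_{\bv_t^+} \leq 0$ is a cleaner phrasing of the paper's decomposition via the block-subsampled gradients $\bG_1,\dots,\bG_4$, but it is the same argument and yields the same factor of $4\tau$.
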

We leave the proof of the above proposition to Appendix \Cref{sec:RLMO-implies-RASC-eps-corrupt} and \Cref{sec:RLMO-implies-RASC-heavy-tail}.

 \subsection{Stability Guarantee for DICG and PCG}
\label{sec:meta-thm-DICG-PCG}
As long as the $(\theta, \psi)$-RASC is satisfied, with a sufficiently small $\theta$, PCG and DICG converges linearly towards the order of $\psi$.
We present the main theorem of DICG here and left the theorem about PCG and their proofs in the Appendix \Cref{appendix-meta-thm-DICG} and \Cref{sec:appendix-pcg-thm}.
For notation simplicity, let $F(\param)$ as the population function: $\E_{(\sample, y)} f_{\param}(\sample,y)$.

\begin{theorem}
\label{thm-robust-DICG-maintext}
\textbf{(Stability for DICG method)}
Suppose the constraint set $\Mcal$ satisfies Assumption \ref{assumption-DICG-constraints}, $F$ is $\alpha_{l}$ strongly convex and $\alpha_{u}$ Lipschitz smooth.
Let $\param_t$ be the output of the DICG algorithm after iteration $t$, with a robust procedure to find $\bvtil_t^+, \bvtil_t^-$ that satisfy the $(\theta, \psi)$-RASC, where
$
    \theta  \leq \frac{\alpha_{l}}{16 \sqrt{ \text{card}(\param^*)}}.
$
For a constant $C_3 \in (0,1)$, and a choice of the step size (made precise in \Cref{appendix-meta-thm-DICG}), we have 
\begin{align*}
    \| \param_t - \param^* \|_2
&\leq (1-C_3)^t \frac{2}{\alpha_{l}} \|\param_0 - \param^*\| + 
\frac{\psi}{\sqrt{2} \alpha_{u} }
\end{align*}
\end{theorem}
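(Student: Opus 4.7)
The plan is to combine the standard Garber--Meshi DICG convergence analysis with the RASC perturbation bound, treating the algorithm as performing an approximate descent on $F$ whose error is controlled by $\theta$ and $\psi$. First I would write down the smoothness descent lemma applied to the actual update $\param_{t+1} = \param_t + \tilde\eta_t \widetilde{\bd_t}$, giving
\[
F(\param_{t+1}) \le F(\param_t) + \tilde\eta_t \langle \bG_t, \widetilde{\bd_t}\rangle + \tfrac{\alpha_u \tilde\eta_t^2}{2}\|\widetilde{\bd_t}\|^2,
\]
and then split the linear term as $\langle \bG_t,\widetilde{\bd_t}\rangle = \langle \bG_t,\bd_t\rangle + \langle \bG_t,\widetilde{\bd_t}-\bd_t\rangle$. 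The first piece is the ``clean'' DICG progress and the second is precisely what RASC controls.

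Next I would invoke the key geometric lemma behind DICG under Assumption \ref{assumption-DICG-constraints}: because every vertex of $\Mcal$ lies on $\{0,1\}^d$ and $\bv_t^+,\bv_t^-$ are the exact linear minimizer/maximizer over $\Acal$ (respectively over the support of $\param_t$), one has the Wolfe--type inequality
\[
\langle \bG_t, \bd_t\rangle \;\le\; -\sqrt{\frac{2\alpha_l\,(F(\param_t)-F(\param^*))}{\mathrm{card}(\param^*)}}.
\]
For the RASC term, I would use its definition and $\alpha_l$-strong convexity (giving $\|\param_t-\param^*\| \le \sqrt{2(F(\param_t)-F(\param^*))/\alpha_l}$) to obtain
\[
|\langle \bG_t,\widetilde{\bd_t}-\bd_t\rangle| \le 4\theta\sqrt{\tfrac{2(F(\param_t)-F(\param^*))}{\alpha_l}} + 4\psi.
\]
The condition $\theta \le \alpha_l/(16\sqrt{\mathrm{card}(\param^*)})$ is exactly what makes the coefficient of $\sqrt{F(\param_t)-F(\param^*)}$ coming from RASC at most a constant fraction (say, one quarter) of the one coming from the clean progress, so a net negative linear term in $\sqrt{h_t}$ (with $h_t := F(\param_t)-F(\param^*)$) survives.

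Then I would pick $\tilde\eta_t$ as in Garber--Meshi: set $\eta_t$ proportional to $\sqrt{h_t/(\alpha_u\,\mathrm{card}(\param^*))}$ (with a robust duality-gap surrogate replacing $h_t$ in the algorithm, as mentioned after \Cref{alg:robust-DICG}), truncated to the nearest dyadic $2^{-\delta_t}$. Plugging in, the quadratic smoothness term $\alpha_u\tilde\eta_t^2\|\widetilde{\bd_t}\|^2/2$ cancels half of the linear progress and yields a recursion of the form
\[
h_{t+1} \;\le\; (1-2C_3)\, h_t + c\,\tilde\eta_t\,\psi,
\]
for some absolute $C_3\in(0,1)$ depending only on $\alpha_l/\alpha_u$. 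Unrolling this geometric recursion, absorbing the $\psi$ terms into a steady-state of order $\psi^2/\alpha_u^2$, and finally converting back from function suboptimality to $\|\param_t-\param^*\|_2$ through strong convexity on the transient term and through $\sqrt{a+b}\le \sqrt a + \sqrt b$ on the two-term bound will give the stated inequality (with the $(1-C_3)^t$ arising after taking square roots and re-defining $C_3$).

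The main obstacle is the step-size choice and the accompanying bookkeeping: the clean DICG linear progress and the RASC error term are both of order $\sqrt{h_t}$, not $h_t$, so one must argue that the constant factor gap guaranteed by $\theta \le \alpha_l/(16\sqrt{\mathrm{card}(\param^*)})$ translates, together with the quadratic smoothness term, into a \emph{geometric} decrease on $h_t$ rather than a merely sublinear one. A secondary subtlety is that the algorithm uses a dyadic truncation $\tilde\eta_t = 2^{-\delta_t}$ and a robust surrogate of $h_t$ to set the step size; one must verify that these approximations cost only constant factors, exactly as in Garber--Meshi, which I expect to go through once the perturbed descent lemma is set up correctly.
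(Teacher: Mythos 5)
Your proposal follows essentially the same route as the paper's proof in Appendix \ref{appendix-meta-thm-DICG}: a perturbed smoothness descent lemma, splitting $\langle \bG_t,\widetilde{\bd_t}\rangle$ into the clean Garber--Meshi pairwise progress term (of order $-\sqrt{\alpha_l/\mathrm{card}(\param^*)}\,h_t^{1/2}$) plus the RASC error converted to $h_t^{1/2}$ via strong convexity, with the condition on $\theta$ ensuring a net negative coefficient, a step size proportional to $\sqrt{h}$, and a final conversion back to $\|\param_t-\param^*\|_2$. The only cosmetic difference is that the paper sets $\eta_t = Z h_{t+1}^{1/2}$ and extracts a geometric recursion on $h_t^{1/2}$ by bounding the larger root of a quadratic inequality, whereas you set $\eta_t \propto h_t^{1/2}$ and contract $h_t$ directly; both yield the stated bound up to constants.
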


% \begin{align*}
%     R =  \frac{1}{32 \text{card}(\param^*) D^2}
%     1- C_1 = \left(\frac{\sqrt{2}}{\sqrt{2-R}} - \frac{2 R}{2 - R} \right)
% \end{align*}

% $$\eta_t =  \left [\left(\frac{\sqrt{2}}{\sqrt{2-R}} - \frac{2 R}{2 - R} \right)^t  \frac{\sqrt{\alpha_{l}}}{\sqrt{32 \text{card}(\param^*)}\alpha_{u} D^2} h_0^{1/2}
% + 
% \frac{ \alpha_{l}  \psi }
% {16 \alpha_{u}^2 D^4  \text{card}(\param^*)} \right]. $$
%%%%%%%%%%%%%%%%%%%%%%%%%%%%%%%%%%%%%%%%%%%%%%%%%%%%%%%%%%%%%%%%%%%%%%%%%%%%%%%%%%%%%%%%%

\subsection{Sample Complexity Advantages}
\label{sec:minimax-rate}

The statistical error, i.e., $\| \hat{\param} - \param^* \|_2$ is dominated by $\psi$ in RASC as shown in the Theorem above.
Informally, when $|\Acal|$ is polynomial in dimension $d$, we reduce the sample complexity from order $d$ to order $\log |\Acal|$ (see \Cref{appendix-thm:RLMO-implies-RASC-eps} and \Cref{appendix-thm:RLMO-implies-RASC-heavy-tail} in the appendix).
We again use linear regression as a running example, and show that our statistical rate is optimal \cite{raskutti2011minimax} for linear regression over $\ell_1$ ball of radius $D$, compared with \cite{prasad2018robust} which is sub-optimal.

\begin{corollary} \textbf{(Matching the minimax statistical rate for linear regression over $\ell_1$ ball)}
\label{coro:minimax-linear-regression}
\\\textbf{A.}
Under the same conditions as in \Cref{proposition:RLMO-implies-RASC}-A and \Cref{thm-robust-DICG-maintext},
given $n = \Omega(\text{card}(\param^*) \log d + D \log d )$ and $\epsilon \leq 1 / (D \log(n d))$,
RLMO satisfies RASC with $\psi = O\left(D \sigma \left(\epsilon \log(n d) + \sqrt{\log d/n}\right)  \right)$, and hence
Robust-DICG (\Cref{alg:robust-DICG}) will converge to $\hat{\param}$ such that $\| \hat{\param} - \param^* \|_2 = O (\sigma D \sqrt{\log d/n})$. 
\\\textbf{B.}
Under the same conditions as in \Cref{proposition:RLMO-implies-RASC}-B and \Cref{thm-robust-DICG-maintext},
given $n = \Omega({ \text{card}(\param^*) \log d + D \log d})$,
RLMO satisfies RASC with 
$\psi = O \left( D \sigma \sqrt{\log d/n}  \right)$
% $\psi = O \left( D \sigma \sqrt{(\|\Sigma_{\sample}\|_{op} \log d)/n}  \right)$
, and hence
Robust-DICG (\Cref{alg:robust-DICG}) will converge to $\hat{\param}$ such that $\| \hat{\param} - \param^* \|_2 = O (\sigma D \sqrt{\log d/n})$. 
\end{corollary}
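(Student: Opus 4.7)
The corollary is a clean composition of three ingredients already proved: (i) the Proposition specializing RLMO to linear regression, (ii) the DICG stability theorem, and (iii) the specific combinatorial structure of the $\ell_1$ ball. I would organize the argument around producing $\theta$ and $\psi$ explicitly from the Proposition, verifying that the threshold on $\theta$ required by the stability theorem is met, and then reading off the final statistical error.

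First I would identify the atomic representation. For the $\ell_1$ ball of radius $D$, the extreme points are $\Acal = \{\pm D \be_i\}_{i=1}^d$, so $|\Acal| = 2d$ and $\log|\Acal| = \Theta(\log d)$. Substituting this into Proposition~\ref{proposition:RLMO-implies-RASC}\,A yields, with probability at least $1 - (2d)^{-3}$,
\begin{align*}
    |\langle \bG_t, \bd_t - \bvtil \rangle|
    = O\!\left(D\bigl(\epsilon\log(nd) + \sqrt{\log d/n}\bigr)\bigl(\|\param_t - \param^*\|_2 + \sigma\bigr)\right),
\end{align*}
where $\bvtil$ stands for $\widetilde{\bd_t}$. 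By the definition of RASC this identifies
\begin{align*}
    \theta = O\!\left(D\bigl(\epsilon\log(nd) + \sqrt{\log d/n}\bigr)\right), \qquad \psi = O\!\left(D\sigma\bigl(\epsilon\log(nd) + \sqrt{\log d/n}\bigr)\right).
\end{align*}
Part B proceeds identically, invoking Proposition~\ref{proposition:RLMO-implies-RASC}\,B, which has no $\epsilon$-term and gives directly $\psi = O(D\sigma\sqrt{\log d/n})$.

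Next I would verify the smallness hypothesis $\theta \leq \alpha_l/(16\sqrt{\text{card}(\param^*)})$ required by Theorem~\ref{thm-robust-DICG-maintext}. For the population linear regression loss $F(\param) = \tfrac12 \E[(y - \langle\param,\sample\rangle)^2]$, the strong-convexity and smoothness moduli $\alpha_l, \alpha_u$ are the extremal eigenvalues of $\Sigma_{\sample}$ and are treated as problem-dependent constants. The sample-size hypothesis $n = \Omega(\text{card}(\param^*)\log d + D\log d)$ is chosen exactly so that $D\sqrt{\log d/n} \lesssim 1/\sqrt{\text{card}(\param^*)}$ (with the implicit constant absorbing $\alpha_l$), and the corruption hypothesis $\epsilon \leq 1/(D\log(nd))$ ensures $D\epsilon\log(nd) = O(1)$; after absorbing constants, the required bound on $\theta$ holds.

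Finally I would invoke Theorem~\ref{thm-robust-DICG-maintext} to obtain
\begin{align*}
    \|\param_t - \param^*\|_2 \leq (1-C_3)^t\, \tfrac{2}{\alpha_l}\|\param_0 - \param^*\| + \tfrac{\psi}{\sqrt{2}\alpha_u}.
\end{align*}
For $t$ large enough to make the geometric term negligible, substituting the expressions for $\psi$ yields $\|\hat{\param} - \param^*\|_2 = O(\sigma D \sqrt{\log d/n})$ in both parts; in Part A, the $\epsilon$-contribution $\sigma D \epsilon \log(nd)$ is controlled by the corruption hypothesis and absorbed into the $\sqrt{\log d/n}$ term in the regime $D\sqrt{\log d/n} \gtrsim 1$ where the bound is informative. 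The main technical obstacle is not any new inequality but the bookkeeping of constants in the threshold verification: one must check that the problem constants $\alpha_l, \alpha_u, \|\Sigma_\sample\|_{op}$ hidden inside the $\Omega(\cdot)$ of the sample-complexity hypothesis really suffice to absorb both the $O(\cdot)$ in Proposition~\ref{proposition:RLMO-implies-RASC} and the factor $\sqrt{\text{card}(\param^*)}$ from the stability theorem, rather than forcing a sharper scaling such as $n = \Omega(D^2\,\text{card}(\param^*)\log d)$. Once this constants-only matching is done, the minimax rate follows immediately.
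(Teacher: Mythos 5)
Your proposal is correct and follows the same route as the paper, which simply combines \Cref{proposition:RLMO-implies-RASC} with \Cref{thm-robust-DICG-maintext} and notes that the $\Omega(\text{card}(\param^*)\log d)$ term in the sample size is what makes $\theta$ clear the threshold of the stability theorem. Your writeup is in fact considerably more explicit than the paper's own two-line proof — in particular your observation about the constants hidden in the $\Omega(\cdot)$ and the regime where the $\epsilon$-contribution is absorbed is a fair reading of bookkeeping the paper leaves implicit.
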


% some what fishy.... because of theta, the sample complexity has dependence on k = card(\beta^*).But k is not in psi. So eps has no dependence on k. 
% which means, while robust IHT can tolerate 1/ sqrt(k) log nd eps，Fw tolerates 1/ R log nd，where R is the radius of the ell 1 ball.
% I omit \|\Sigma_{\sample}\|_{op} in the big-oh, and rou as well.
We leave the proof to the Appendix. Extension to the general linear model is straight forward as in \cite{RDC}.

% How about general case How about extension.

\section{ Illustrative Experiments}
\label{sec:experiment}

\begin{figure}
\centering     %%% not \center
\hspace{-3pt}
\subfigure[]{\label{fig:pfw_lin_cov}\includegraphics[width=0.25\textwidth]{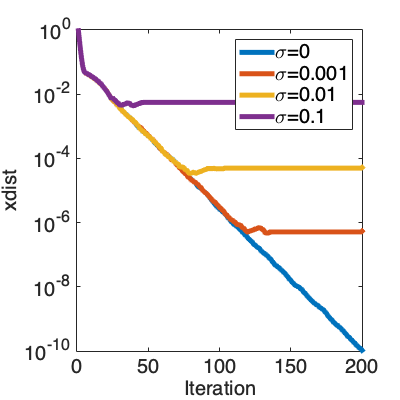}}
\hspace{-10pt}
\subfigure[]{\label{fig:pfw_MOM}\includegraphics[width=0.25\textwidth]{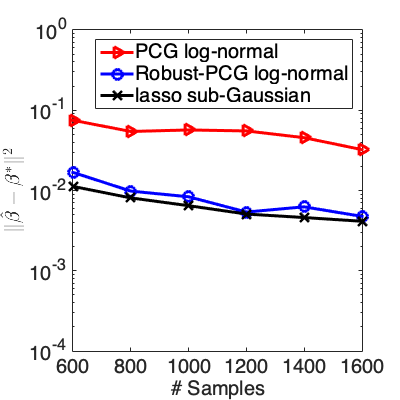}}
\hspace{-10pt}
\subfigure[]{\label{fig:haar_xstar_sample}\includegraphics[width=0.25\textwidth]{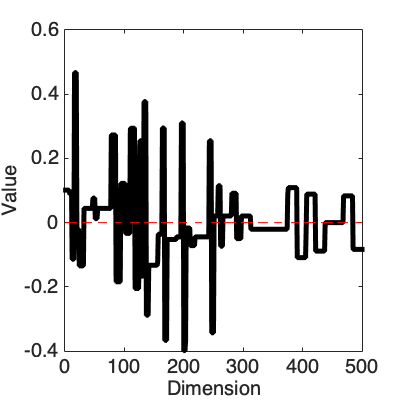}}
\hspace{-10pt}
\subfigure[]{\label{fig:haar_lin_cov}\includegraphics[width=0.25\textwidth]{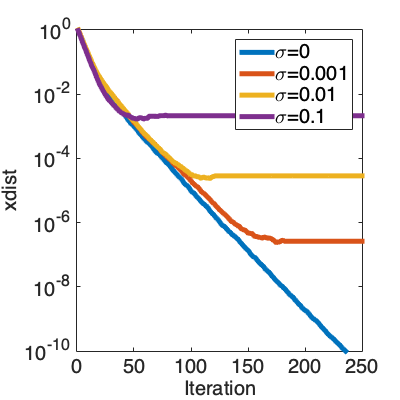}}
\hspace{-10pt}
\caption{(a) For sparse linear regression with $10\%$ of outliers, Robust-PCG converges linearly at different level of observation noise. (b) For sparse linear regression with heavy-tail sensing matrix and heavy tail noise, the statistical error of Robust-PCG roughly matches that of LASSO on sub-Gaussian sensing matrix and sub-Gaussian noise. Repeated 30 times and mean is shown. (c) A sample signal that is constituted by $25$ Haar Discrete Wavelet, but only $31$ out of the $500$ dimensions are zeros. (d) Linear convergence of sparse Haar Discrete Wavelet signal reconstruction using Robust-PCG.}
\end{figure}

We illustrate the claimed robustness of our algorithms with two numerical simulations.
We focus on the high dimensional setting, and solve $\min_{\param \in \text{conv}(\Acal)} \| \bA \bx - \by\|_2^2$, as this is the focus of our analysis.

For the first simulation, we let the atoms be $\{ \be_i, 1 \leq i \leq d \} \cup \{- \be_i, 1 \leq i \leq d  \} $ where $\be_i$ are standard basis. 
Hence this constitutes a LASSO problem, which is the standard test-bed for high-dimensional robustness algorithm \cite{RDC}.
Note that in this setting, the Robust-PCG and Robust-DICG method are equivalent \cite{garber2016linear}.
We consider both the Huber's $\epsilon$-contamination model, and the heavy-tail model.
We evaluated the performance of the algorithms by $\|\bx_T - \bx^*\|$, or referred to as as xdist, where $\bx_T$ is the final output of the algorithms, and $\bx^*$ is the ground truth signal.
In \Cref{fig:pfw_lin_cov} we show that Robust-PCG converges linearly in various level of observation noise, and converges to the machine precision when only outliers are presented but not observation noise, which coincides our theory.
In \Cref{fig:pfw_MOM} We consider a log-normal distribution, which is a typical example of heavy tails scenario (see \Cref{sec:corruption_model}). We fix all other parameters and vary the sample size. Robust-PCG significantly improves over vanilla Lasso on log-normal data, and has almost the same performance as Lasso on sub-Gaussian data.
We leave the  setting of parameters to the appendix.

For the second simulation, we try to reconstruct a signal $\bx$ that is sparse on Discrete Haar Wavelet under the Huber's $\epsilon$-contamination model.
Robust-IHT cannot handle this scenario, since the atoms are not orthogonal, and enclosing the atoms into the design matrix will render the problem non-strongly convex.
In \Cref{fig:haar_xstar_sample} we show an example of such signal, which has $469$ nonzero entries out of $500$ dimensions, but is constituded by only $25$ Haar Discrete Wavelet.
In \Cref{fig:haar_lin_cov} we show that Robust-PCG still converges linearly, under various level of observation noise, and converges to the machine precision when only outliers are presented but not observation noise, which coincides our theory.
We explain the precise setting of parameters in detail in the appendix.
% \section{Conclusion}

% Placeholder Placeholder Placeholder Placeholder Placeholder Placeholder Placeholder Placeholder Placeholder 

% \input{8.impact_statement}
\bibliography{Notes_FW}
\bibliographystyle{alpha}

\newpage
\appendix

\newpage
\section{Appendix: Robust mean estimation to robustify PCG and DICG}
\label{appendix-sec:alg-extension}

In this appendix  section we introduce in detail the algorithms described in \Cref{sec:algo-extension}, and how to establish the convergence guarantee in a projection free framework.

\begin{algorithm}[H]
\caption{Robust Pairwise Conditional Gradient (Robust-PCG-2)}
\begin{algorithmic}[1]
\STATE
\textbf{Input}: sequence of step-sizes $\{\eta_t\}_{t\geq 1}$ 
\STATE
\textbf{Init.} A multivariate dimensional robust mean estimation function $R$
\STATE
Let $\param_1$ be a vertex in  $\mathcal{A}=\{\ba_i\}_i$
\FOR{$t=1, \cdots, T$}
    \STATE
    Maintain the convex decomposition $\param_t = \sum_{i\in K_t} c_t^i \ba_i$ where $c_t^i > 0$ using $R$
    \STATE
    From $\{\nabla f_i (\param_t)\}_{i=1}^N$ obtain a robust gradient estimate $\Gtil_t$.
    \STATE
    $\bv_t^+ = \argmin_{\ba\in \Acal} \langle \Gtil_t, \ba \rangle $
    \STATE
    $j_t = \argmax_{j \in k_t} \langle \Gtil_t, \ba_j \rangle $
    \STATE
    $\bv_t^- = \ba_{j_t}$
    \STATE
    Trim the step-size $\tilde{\eta}_t = \min (\eta_t, a_t^{i^-})$
    \STATE
    $\param_{t+1} = \param_{t} + \tilde{\eta}_t (\bv_t^+ - \bv_t^-)$
\ENDFOR
\end{algorithmic}
\label{alg:robust-PCG-2}
\end{algorithm}

\begin{algorithm}[H]
\caption{Robust Decomposition-invariant Pairwise Conditional Gradient (Robust-DICG-2) }
\begin{algorithmic}[1]
\STATE
\textbf{Input}: sequence of step-sizes $\{\eta_t\}_{t\geq 1}$ 
\STATE
\textbf{Init.} A multivariate dimensional robust mean estimation function $R$
\STATE
Let $\param_1$ be a vertex in  $\mathcal{A}=\{\ba_i\}_i$
\FOR{$t=1, \cdots, T$}
    \STATE
    From $\{\nabla f_i (\param_t)\}_{i=1}^N$ obtain a robust gradient estimate $\Gtil_t$ using $R$
    \STATE
    $\bv_t^+ = \argmin_{\ba\in \Acal} \langle \Gtil_t, \ba \rangle $
    \STATE
    \textbf{Define ${\bg}$: ${g}_{(i)} = \Gtil_t$ if ${x_{t}}_{(i)} > 0$ else $-\infty$ }
    \STATE
    $\bv_t^- = \argmin_{\ba\in \Acal} \langle \tilde{\bg}_{(i)}, \ba \rangle $
    \STATE
    % Option 1: 
    let $\delta_t$ be the smallest natural such that $2^{-\delta_t} \leq \eta_t$, and  $\tilde{\eta_t} = 2^{-\delta_t}$. 
    % \STATE
    % Option 2: $\gamma_t = \max_{\gamma \in [0,1]}\{\param_t + \gamma (\bv_t^+ - \bv_t^-) \geq 0\}$ and  $\tilde{\eta_t} = \min_{\eta \in (0, \gamma_t]} F(\param_t + \eta (\bv_t^+ - \bv_t^-))$. 
    \STATE
    $\param_{t+1} = \param_{t} + \tilde{\eta}_t (\bv_t^+ - \bv_t^-)$
\ENDFOR
\end{algorithmic}
\label{alg:robust-DICG-2}%}
\end{algorithm}

Note that in line $6$ in \Cref{alg:robust-PCG-2} and line $5$ in \Cref{alg:robust-DICG-2}, we keep the $R$ as a placeholder. One can make use of any multi-dimensional robustifier and obtain the corresponding guarantee. In the following we use the robustifiers (Algorithm 2 and Algorithm 3) as in \cite{prasad2018robust} as a running example.

Denote the estimator generated by Algorithm 2 and Algorithm 3 in \cite{prasad2018robust} as $\Gtil_t$. 
One can simply apply the Lemma 1 and Lemma 2 from \cite{prasad2018robust} (and bound the covariance of the gradient as in \Cref{appendix-thm:RLMO-implies-RASC-heavy-tail}) to obtain that $\|G_t - \Gtil_t\| \leq  \theta \| \param - \param^* \| + \psi$. This is call the \textit{Gradient Estimator} condition as in Definition 1 in \cite{prasad2018robust}. We call this condition \textit{Robust Mean Estimation Condition}.

We show in \Cref{claim-RMS-implies-RASC} that \textit{Robust Mean Estimation Condition} implies RASC. Denote
$\bvtil_t^+ = \argmin_{\bv\in\Acal} \langle \Gtil_t, \bv \rangle$  and
$\bvtil_t^- = \argmax_{\bv\in \text{active set of } \param_t} \langle \Gtil_t, \bv \rangle$, 
then
\begin{align*}
    |\langle \bG_t, \bvtil_t^+ - \bvtil_t^- \rangle -
    \langle \bG_t, \bv_t^+ - \bv_t^- \rangle|
    \leq 4 \theta D\| \param - \param^* \|_2 + 4 D\psi
\end{align*}
% \end{proof}

Now we can just apply the stability theorem of DICG and PCG respectively to obtain the linear convergence.

\section{Appendix: More about Robust Atom Selection Condition (RASC) - the missing proof in section \ref{sec:RASC}}
\label{sec:missing-proof-RASC}

In the main text we informally mention that Robust Linear Minimization Oracle (RLMO) can set up Robust Atom Selection Condition (RASC) for us. In this appendix section, we are going to make it precise.

We will split into two settings: the heavy tail setting, and the Huber's corruption model setting. We will first state a general theorem (RLMO implies RASC), and specifically address the linear regression case as an example.

\subsection{RLMO implies RASC under heavy tail model}

\begin{theorem}
\label{appendix-thm:RLMO-implies-RASC-heavy-tail}
\textbf{(RLMO implies RASC under heavy-tail-model)}
For Problem \ref{eqn:structural_stat_est} with $D$ being the diameter of the constraint set $\Mcal$, denote the ground truth gradient at iteration $t$ as $\bg \in \mathbb{R}^d$. Denote its mean as $\bG$.
Suppose its covariance $\Sigma = \E [(\bg - \bG)(\bg - \bG)^T]$ is decaying with respect to $t$ under operator norm: $\| \Sigma \|_{\rm op} \leq C_1 \| \param_t - \param^*\|_2 + C_2$. 
Given $n > \ceil{72 \log |\Acal|}$ independent samples of $\bg$, denoted as $\{\bg_i\}_{i=1}^n$,
the Robust Linear Optimization Oracle with Median of Mean as the one-dimensional robustifier (as in Section \ref{sec:robust-mean-estimation} and \ref{sec:robust-lmo}) with $K = \ceil{18 \log |\Acal|}$ , will output $\bvtil^+$ and $\bvtil^-$ that satisfy RASC:
\begin{align*}
    % \underset{\bv \in \Acal}{\max} 
    |\langle \bG, \bvtil^+ - \bvtil^- \rangle -
    \langle \bG, \bv^+ - \bv^- \rangle|
    &\leq  48  D\| \Sigma \|_{\rm op} \sqrt{\frac{\ceil{\log |\Acal|}}{n}}  \\
    &\leq 48  D  \sqrt{\frac{\ceil{\log |\Acal|}}{n}} (C_1 \| \param_t - \param^*\|_2 +  C_2)
\end{align*}
with probability at least $ 1 - |\Acal|^{-3}$.
\end{theorem}

We need concentration lemma in order to setup this theorem.

\label{sec:RLMO-implies-RASC-heavy-tail}
\begin{lemma}
\label{lemma-MOM-concentration}
\textbf{(Proposition 5 in \cite{hsu2016loss} restate)}
Let $x$ be a random variable with mean $\mu$ and variance $\sigma^2 \leq \infty$, and let $S$ be
a set of $n$ independent copies of $x$. 
Assume $k \leq n/2$. With probability at least $1 - e^{-k/4.5}$, the estimate $\tilde{\mu}$ returned by the Median of Mean algorithm (as in Section \ref{sec:robust-mean-estimation}) on input $(S, k)$ satisfies $\tilde{\mu} - \mu \leq \sigma \sqrt{8 k /n}$.
Therefore, if $k = \ceil{4.5 \log (1/\delta)}$ and $n \geq \ceil{18 \log (1/\delta)}$, then with probability at least $1 - \delta$, 
\begin{align*}
    | \tilde{\mu} - \mu | \leq 6 \sigma  \sqrt{\frac{\ceil{\log(1/\delta)}}{n}}.
\end{align*}
\end{lemma}

\begin{corollary}
\label{coro:MOM-proj}
Suppose a random vector $\bg$ has mean $\bG$, and its covariance $\Sigma = \E [(\bg - \bG)(\bg - \bG)^T]$ is bounded: $\| \Sigma \|_{\rm op} \leq \infty$. Let $\bv$ be a vector in domain $\Mcal$ with diameter $D$. That is, $\max_{\bv \in \Mcal} \| \bv \|= D$. Suppose we are given $n > \ceil{72 \log d}$ independent samples of $\bg^T \bv$, and try to estimate $\bG^T \bv$. After performing the Median of Mean (as in Section \ref{sec:robust-mean-estimation}) with $K = \ceil{18 \log d}$ to $n$ independent samples of $\bg^T \bv$, we would have $\tilde{x}$ as an estimator of $\bG^T \bv$, such that with probability at least $ 1 - d^{-4}$ 
\begin{align*}
    \left| \tilde{x} - \bG^T \bv \right| \leq 12 D\| \Sigma \|_{\rm op} \sqrt{\frac{\ceil{\log d}}{n}}.
\end{align*}
\end{corollary}
\begin{proof}
By the linearity of expectation, we know that $\E [\bg^T v] = \E [\bG^T v]$.
We then bound the variance: 
\begin{align*}
  & \sup_{\bv \in \Omega} \E [( \bg^T \bv - \bG^T \bv )^2] \\
 = & \sup_{\bv \in \Omega} \left[\bv^T \E[(\bg - \bG)(\bg - \bG)^T] \bv \right] \\
 = & D^2 \| \Sigma \|_{\rm op}^2
\end{align*}
By Lemma \ref{lemma-MOM-concentration}, we know that if $k = \ceil{18 \log d}$ and $n \geq \ceil{72 \log d}$, then with probability at least $1 - d^{-4}$, the output of the MOM algorithm for $\tilde{x}$ will satisfy
\begin{align*}
    \left| \tilde{x} - \bG^T \bv \right| \leq 12 D\| \Sigma \|_{\rm op} \sqrt{\frac{\ceil{\log d}}{n}}.
\end{align*}
\end{proof}

\begin{proof} \textbf{(Proof for \Cref{appendix-thm:RLMO-implies-RASC-heavy-tail}: RLMO implies RASC under heavy-tail-model)} \\
For each atom $\bv_i \in \Acal$, we invoke Corollary \ref{coro:MOM-proj}, and have
\begin{align}
\label{eqn:directional-guranttee}
    \left| \widetilde{r_i} - \bG^T \bv_i \right| \leq 12 D\| \Sigma \|_{\rm op} \sqrt{\frac{\ceil{\log |\Acal|}}{n}}
\end{align}
each with probability at least $1 - |\Acal|^{-4}$, where  $\widetilde{r_i}$ is the MOM method output.
By taking union bound over all $i$, we know that equation \ref{eqn:directional-guranttee} holds for all $i$ with probability at least $1 - |\Acal|^{-3}$. 
That is, with probability at least $1 - |\Acal|^{-3}$ we have
\begin{align}
\label{eqn:sup-directional-guranttee}
    \sup_i  \left| \widetilde{r_i} - \bG^T \bv_i \right| \leq 12 D\| \Sigma \|_{\rm op} \sqrt{\frac{\ceil{\log |\Acal|}}{n}}.
\end{align}
Since $\widetilde{r_i}$ is the median of mean of a set of $\{\bg_j^T \bv_i\}_{j=1}^n$, there is a corresponding sub set $\Gcal_i$  of $\{\bg_i\}_{i=1}^n$, such that 
$$\widetilde{r_i} = \left(\frac{1}{|\Gcal_i|}\sum_{j \in |\Gcal_i|} \bg_j\right)^T \bv_i.$$
Consider finding the FW-atom, and denote the return value of the RLMO as $i^* = \argmin_i \widetilde{r_i}$. 
Then $r_{i^*}$ corresponds to the gradient $\left(\frac{1}{|\Gcal_{i^*}|}\sum_{j \in |\Gcal_{i^*}|} \bg_j\right)$.
\begin{align*}
    r_{i^*} = \left(\frac{1}{|\Gcal_{i^*}|}\sum_{j \in |\Gcal_{i^*}|} \bg_j\right)^T \bv_{i^*} \overset{\text{denote}}{=} \bG_1^T \bv_i.
\end{align*}
We said $r_{i^*}$ corresponds to the gradient $\left(\frac{1}{|\Gcal_{i^*}|}\sum_{j \in |\Gcal_{i^*}|} \bg_j\right)$, and denote the gradient as $\bG_1$.
Let the corresponding atom as $\bvtil^+ = \bv_{{i}^*}$.
Suppose $\bv^+ = \bv_k$. Then  we denote $\bG_2 = \left(\frac{1}{|\Gcal_k|}\sum_{j \in |\Gcal_k|} \bg_j\right)$.
We do the same thing and obtain the Away-Atom $\bvtil^-$. Denote the gradient corresponding to  $\bvtil^-$ as $\bG_3$, and denote the gradient correpsonding to $\bv^-$ as $\bG_4$.

Before we proceed and get RASC, we need to understand the property of the $\bvtil^+$ and $\bvtil^-$ that we found. They have two properties. 
The first one, is that according to the way that we pick $\bvtil$, we have
\begin{align*}
    \langle \bG_1, \bvtil^+ \rangle \leq \langle \bG_2, \bv^+ \rangle \\
    \langle \bG_3, \bvtil^- \rangle \geq \langle \bG_4, \bv^- \rangle
\end{align*}
The second one, is that
\begin{align}
\label{eqn:fw-atom-property1}
     |\langle \bG - \bG_1, \bvtil^+ \rangle|
    = |\bG^T\bv_{i^*} - r_{i^*}|
    \leq  12 D\| \Sigma \|_{\rm op} \sqrt{\frac{\ceil{\log |\Acal|}}{n}} 
\end{align}
where the equality follows from the definition of $\bG_i$) and the inequality follows from Equation \ref{eqn:sup-directional-guranttee}). Similarly we have
\begin{align}
\label{eqn:away-atom-property1}
     |\langle \bG - \bG_3, \bvtil^- \rangle|
    \leq  12 D\| \Sigma \|_{\rm op} \sqrt{\frac{\ceil{\log |\Acal|}}{n}}.
\end{align}

Now we are ready to set up RASC. We want to bound
\begin{align*}
    % \underset{\bv \in \Acal}{\max} 
    |\langle \bG, \bvtil^+ - \bvtil^- \rangle -
    \langle \bG, \bv^+ - \bv^- \rangle|
\end{align*}
Simply re-organize the terms:
\begin{align*}
    &\langle \bG, \bvtil^+ - \bvtil^- \rangle -
    \langle \bG, \bv^+ - \bv^- \rangle \\
    = & \langle \bG, \bvtil^+ - \bv^+ \rangle +
    \langle \bG, - \bvtil^- + \bv^- \rangle
\end{align*}
We first consider the first term, as the second term follows a similar treatment. 
Then
\begin{align*}
    & 2 \langle \bG, \bvtil^+ - \bv^+ \rangle \\
    = &  \langle \bG - \bG_1, \bvtil^+ - \bv^+ \rangle +
    \langle \bG - \bG_2, \bvtil^+ - \bv^+ \rangle +
    \langle \bG_1, \bvtil^+ - \bv^+ \rangle +
    \langle \bG_2, \bvtil^+ - \bv^+ \rangle \\
    \leq & \langle \bG - \bG_1, \bvtil^+ - \bv^+ \rangle +
    \langle \bG - \bG_2, \bvtil^+ - \bv^+ \rangle +
    \langle \bG_2 - \bG_1, \bv^+ \rangle +
    \langle \bG_2 - \bG_1, \bvtil^+ \rangle \\
    % = & \langle \bG, 2 \bvtil^+ - 2 \bv^+ \rangle +
    % \langle \bG_1, - \bvtil^+ + \bv^+ - \bv^+ - \bvtil^+ \rangle +
    % \langle \bG_2,  - \bvtil^+ + \bv^+ + \bv^+ + \bvtil^+\rangle \\
    = & 2 \langle \bG,  \bvtil^+ -  \bv^+ \rangle +
    2 \langle \bG_1, - \bvtil^+  \rangle +
    2 \langle \bG_2,  \bv^+ \rangle \\
    = & 2 \langle \bG - \bG_1, \bvtil^+ \rangle - 2 \langle \bG - \bG_2, \bv^+ \rangle
\end{align*}
Hence we have
\begin{align*}
    & |\langle \bG, \bvtil^+ - \bv^+ \rangle| \\
    \leq & | \langle \bG - \bG_1, \bvtil^+ \rangle | + |\langle \bG - \bG_2, \bv^+ \rangle| \\
    \leq &  24 D\| \Sigma \|_{\rm op} \sqrt{\frac{\ceil{\log |\Acal|}}{n}} 
\end{align*}

Similarly we can show a good bound for the second term:
\begin{align*}
    \langle \bG, \bvtil^- - \bv^- \rangle
    \leq | \langle \bG - \bG_3, \bvtil^+ \rangle | + |\langle \bG - \bG_4, \bv^+ \rangle|
    \leq &  24 D\| \Sigma \|_{\rm op} \sqrt{\frac{\ceil{\log |\Acal|}}{n}} 
\end{align*}
where $\bG_3$ and $\bG_4$ are the mean gradient corresponding to $\bvtil^-$ and $\bv^-$.

Hence overall we can setup that
\begin{align*}
    % \underset{\bv \in \Acal}{\max} 
    |\langle \bG, \bvtil^+ - \bvtil^- \rangle -
    \langle \bG, \bv^+ - \bv^- \rangle|
    \leq  48  D\| \Sigma \|_{\rm op} \sqrt{\frac{\ceil{\log |\Acal|}}{n}} .
\end{align*}
\end{proof}

\paragraph{Now let's see constrained linear regression as a running example.\\ \\}

As long as we can bound the covariance of the gradient, we can apply the general theorem to establish RASC. In the following lemma, we bound the covariance of the gradient.

\begin{lemma}
\label{appendix-lemma-cov-lin-reg}
Consider linear regression: $y = \langle \param^*, \sample \rangle + \xi$, where the observation $\sample$ is a random variable comes from a distribution with bounded covariance $\Sigma_{\sample}$ (i.e. $\|\Sigma_{\sample} \|_{\rm op} \leq \infty$) under the heavy tail model \Cref{def:heavy-tail-samples}, and $\xi$ is zero mean heavy tail noise with variance $\sigma$. 
Let the gradient evaluated at $\beta_t$ be $\bg_t$, and its mean be $\bG_t$.
Then $\bg_t$ has a bounded covariance:
\begin{align*}
    \|\E (\bg_t - \bG_t) (\bg_t - \bG_t)^T \|_{\rm op} 
    = O\left(  \| \Sigma_{\sample}\|_{\rm op}^2 \|\param_t - \param^* \|_2^2 + \sigma^2 \| \Sigma_{\sample}\|_{\rm op} \right)
\end{align*}
\end{lemma}
\begin{proof}
Note that this directly follows from the proof of Proposition A.2 in \cite{RDC}. But we provide the details here for completeness.
Omit the sub-script $t$, The population gradient evaluated at iteration $t$ is 
$$\bg = \sample(\sample^T \param - y),$$
where $y = \langle \param^*, \sample \rangle + \xi$.
Denote $\param - \param^*$ as $\Delta$. 
Then
\begin{align*}
    & \|\E (\bg - \bG) (\bg - \bG)^T \|_{\rm op} \\
    = & \|\E (\sample \sample^T \Delta - \sample \xi - \Sigma_{\sample} \Delta) (\sample \sample^T \Delta - \sample \xi - \Sigma_{\sample} \Delta)^T \|_{\rm op}  \\
    \leq & \| \E \left[ 
        \left( \sample \sample^T - \Sigma_{\sample}\right) \Delta \Delta^T \left( \sample \sample^T - \Sigma_{\sample}\right)^T
        \right] \|_{\rm op} +
    \|\E\left[ \xi \sample \sample^T \right] \|_{\rm op}\\
    \leq &  \sup_{\bv } \bv^T  \E \left[ 
        \left( \sample \sample^T - \Sigma_{\sample}\right) \Delta \Delta^T \left( \sample \sample^T - \Sigma_{\sample}\right)^T
        \right]  \bv 
        + \sigma^2 \|\Sigma_{\sample} \|_{\rm op} \\
    = & \sup_{\bv } \left\langle \Delta \Delta^T ,   \E \left[ 
        \left( \sample \sample^T - \Sigma_{\sample}\right) \bv \bv^T \left( \sample \sample^T - \Sigma_{\sample}\right)^T
        \right]  \right \rangle
        + \sigma^2 \|\Sigma_{\sample} \|_{\rm op} \\
    \leq & \|\Delta \|_2^2 \sup_{\bv_1, \bv_2 }   \E \left[ \bv_1^T
        \left( \sample \sample^T - \Sigma_{\sample}\right) \bv_2 \right]^2
        + \sigma^2 \|\Sigma_{\sample} \|_{\rm op} \quad \quad \text{(by Holder's Inequality)}  \\
    \leq & 2 \|\Delta \|_2^2 \left( \sup_{\bv_1, \bv_2 }   \E \left[ \bv_1^T
        \left( \sample \sample^T\right) \bv_2 \right]^2 + \|\Sigma_{\sample} \|_{\rm op}^2 \right)
        + \sigma^2 \|\Sigma_{\sample} \|_{\rm op} \quad \quad \text{(by $(a - b)^2 \leq 2 a^2 + 2b^2$)}  \\
    \leq & 2 (C_{dummy} + 1) \| \Sigma_{\sample}\|_{\rm op}^2 \|\Delta \|_2^2 + \sigma^2 \| \Sigma_{\sample}\|_{\rm op} \quad \quad \text{(by the bounded $4$-th moment assumption)} \\
\end{align*}

\end{proof}

\begin{theorem} (\textbf{\Cref{proposition:RLMO-implies-RASC}}-B restate.)
Under Model \ref{model:linear-regression} and the heavy-tail model (\Cref{def:heavy-tail-samples}),
given $n > \ceil{72 \log |\Acal|}$ independent samples of $\sample$,
the Robust Linear Optimization Oracle with Median of Mean as the one-dimensional robustifier (as in Section \ref{sec:robust-mean-estimation} and \ref{sec:robust-lmo}) with $K = \ceil{18 \log |\Acal|}$ , will output $\bvtil^+$ and $\bvtil^-$ at iteration $t$ that satisfy RASC:
\begin{align*}
    % \underset{\bv \in \Acal}{\max} 
    |\langle \bG_t, \bvtil^+ - \bvtil^- \rangle -
    \langle \bG_t, \bv^+ - \bv^- \rangle|
    = O \left( D  \sqrt{\frac{\ceil{\log |\Acal|}}{n}} \left( \| \Sigma_{\sample}\|_{\rm op} \|\param_t - \param^* \|_2 + \sqrt{\sigma^2  \| \Sigma_{\sample}\|_{\rm op}} \right)\right)
\end{align*}
with probability at least $ 1 - |\Acal|^{-3}$.
\end{theorem}

\begin{proof}
Combine Lemma \ref{appendix-lemma-cov-lin-reg} and Theorem \ref{appendix-thm:RLMO-implies-RASC-heavy-tail}, we have
\begin{align*}
    % \underset{\bv \in \Acal}{\max} 
    &|\langle \bG_t, \bvtil^+ - \bvtil^- \rangle -
    \langle \bG_t, \bv^+ - \bv^- \rangle| \\
    = &O \left( D  \sqrt{\frac{\ceil{\log |\Acal|}}{n}} \sqrt{\| \Sigma_{\sample}\|_{\rm op}^2 \|\param_t - \param^* \|_2^2 + \sigma^2  \| \Sigma_{\sample}\|_{\rm op}} \right) \\
    \leq & O \left( D  \sqrt{\frac{\ceil{\log |\Acal|}}{n}} \left( \| \Sigma_{\sample}\|_{\rm op} \|\param_t - \param^* \|_2 + \sqrt{\sigma^2  \| \Sigma_{\sample}\|_{\rm op}} \right)\right)
\end{align*}
\end{proof}

\subsection{RLMO implies RASC under arbitrary-corruption-model}
\label{sec:RLMO-implies-RASC-eps-corrupt}

As we have seen how we can setup RLMO implies RASC under heavy tail model, it is almost the same for Huber's corruption model, except that we now need to setup concentration of the gradient estimation using the trimmed mean algorithm. Luckily, this have been setup by previous work.

\begin{lemma}
\label{appendix-lemma-subGaussian-trimmed-mean-est}
\textbf{ (Restating Lemma A.2 in \cite{RDC})}
Suppose a random vector $\bg$ has mean $\bG$.
For some vector $\bv$ in domain $\Mcal$ with diameter $D$, i.e.,  $\max_{\bv \in \Omega} \| v \|= D$, $\bg^T \bv$ is $\nu D$-sub-exponential with mean $\bG^T \bv$.
Given $n = \Omega(\log (1/\delta))$ $\epsilon$-corrupteded samples as described in \Cref{sec:corruption_model}, we want to estimate $\bG^T \bv$.
After performing the Trimmed-Mean operation (as in Section \ref{sec:robust-mean-estimation}) with to the $n$ independent samples of $\bg^T \bv$, we would have $\tilde{x}$ as an estimator of $\bG^T \bv$, such that with probability at least $1 - \delta$
\begin{align*}
    \left| \tilde{x} - \bG^T \bv \right| = O \left( \nu D\left( \epsilon \log(n  (1/\delta)) + \sqrt{\frac{\ceil{\log  (1/\delta)}}{n}}\right) \right) .
\end{align*}
\end{lemma}

\begin{theorem}
\textbf{(RLMO implies RASC under arbitrary corruption)}
For Problem \ref{eqn:structural_stat_est} with $D$ being the diameter of the constraint set $\Mcal$,
denote the ground truth gradient at iteration $t$ as $\bg \in \mathbb{R}^d$, with mean $\bG$.
Suppose for all $\bv \in \Acal$, $\bg^T \bv$ is $\nu D$-sub-exponential with $\nu \leq C_1 \| \param_t - \param^*\| + C_2$ for some constant $C_1$ and $C_2$.
Given $n = \Omega(\log |\Acal|)$ independent samples of $\bg$, denoted as $\{\bg_i\}_{i=1}^n$,
the Robust Linear Optimization Oracle with trimmed mean as the one-dimensional robustifier (as in Section \ref{sec:robust-mean-estimation} and \ref{sec:robust-lmo})  will output $\bvtil^+$ and $\bvtil^-$ that satisfy RASC:
\begin{align*}
    % \underset{\bv \in \Acal}{\max} 
    |\langle \bG, \bvtil^+ - \bvtil^- \rangle -
    \langle \bG, \bv^+ - \bv^- \rangle|
    \leq  C_{\text{dummy}}  D\left(\epsilon \log(n |\Acal|) + \sqrt{\frac{\ceil{\log |\Acal|}}{n}}\right) \cdot \left( C_1 \| \param_t - \param^*\| + C_2 \right)
\end{align*}
with probability at least $ 1 - |\Acal|^{-3}$
\label{appendix-thm:RLMO-implies-RASC-eps}
\end{theorem}
\begin{proof}
For each atom $\bv_i \in \Acal$, we apply  \Cref{appendix-lemma-subGaussian-trimmed-mean-est} with $\delta = |\Acal|^{-4}$, and have
\begin{align}
\label{eqn:directional-guranttee-trM}
    \left| \widetilde{r_i} - \bG^T \bv_i \right| \leq
     C_{\text{dummy}} \nu D\left(\epsilon \log(n |\Acal|) + \sqrt{\frac{\ceil{\log |\Acal|}}{n}}\right) ,
\end{align}
each with probability at least $1 - |\Acal|^{-4}$, where  $\widetilde{r_i}$ is the output of the trimmed mean estimator. We use $C_{\text{dummy}}$ to get rid of the big-oh notation.

The remaining of the proof follows similarly to that in \Cref{appendix-thm:RLMO-implies-RASC-heavy-tail} above. We have
\begin{align*}
    % \underset{\bv \in \Acal}{\max} 
    |\langle \bG, \bvtil^+ - \bvtil^- \rangle -
    \langle \bG, \bv^+ - \bv^- \rangle|
    \leq  C_{\text{dummy}} \nu D\left(\epsilon \log(n |\Acal|) + \sqrt{\frac{\ceil{\log |\Acal|}}{n}}\right).
\end{align*}
Plug in the condition ofr $\nu$ and we finish the proof.
\end{proof}

\paragraph{Now let's see constrained linear regression as a running example.\\ \\}

Under heavy-tail model, we bound the covariance and then apply the general theorem.
Under Huber's corruption model, we bound the sub-Gaussian parameter in order to apply the general theorem.

\begin{lemma}
Under Model \ref{model:linear-regression} and $\epsilon$-corruption model (\Cref{def:eps-corrupted-samples}), we further assume that the samples are sub-Gaussian. Let the gradient evaluated at $\beta_t$ be $\bg_t$, and its mean be $\bG_t$. 
Then $\bg_t^T \bv$ is $\nu D$-sub-exponential for any $\bv \in \Acal$, with 
\begin{align*}
    \nu = O\left(\sqrt{\| \beta_t - \beta^* \|_2^2 + \sigma^2} \right)
\end{align*}
\end{lemma}
\begin{proof}
This directly follows from the proof of Proposition A.1 in \cite{RDC}.
% \TODO{Liu: standard basis}
\end{proof}

\begin{theorem}
\textbf{(\Cref{proposition:RLMO-implies-RASC}-A restate)}
Under Model \ref{model:linear-regression} and $\epsilon$-corruption model (\Cref{def:eps-corrupted-samples}),
we further assume that the samples are sub-Gaussian.
Given $n > \Omega(\log |\Acal|)$ independent samples of $\sample$,
the Robust Linear Optimization Oracle with Trimmed Mean as the one-dimensional robustifier (as in Section \ref{sec:robust-mean-estimation} and \ref{sec:robust-lmo}) , will output $\bvtil^+$ and $\bvtil^-$ that satisfy RASC:
\begin{align*}
    % \underset{\bv \in \Acal}{\max} 
    |\langle \bG, \bvtil^+ - \bvtil^- \rangle -
    \langle \bG, \bv^+ - \bv^- \rangle|
    = O \left(D\left(\epsilon \log(n |\Acal|) + \sqrt{\frac{\ceil{\log |\Acal|}}{n}}\right) \cdot \left( \| \beta_t - \beta^* \|_2 +  \sigma \right) \right)
\end{align*}
with probability at least $ 1 - |\Acal|^{-3}$.
\end{theorem}

\begin{proof}
Combining above lemmas,
\begin{align*}
    % \underset{\bv \in \Acal}{\max} 
    & |\langle \bG, \bvtil^+ - \bvtil^- \rangle -
    \langle \bG, \bv^+ - \bv^- \rangle| \\
    = & O \left(D\sqrt{\| \beta_t - \beta^* \|_2^2 + \sigma^2} \left(\epsilon \log(n |\Acal|) + \sqrt{\frac{\ceil{\log |\Acal|}}{n}}\right) \right) \\
    \leq & O \left(D\left(\epsilon \log(n |\Acal|) + \sqrt{\frac{\ceil{\log |\Acal|}}{n}}\right) \cdot \left( \| \beta_t - \beta^* \|_2 +  \sigma \right) \right)
\end{align*}
\end{proof}
\newpage
\section{Appendix: Robust Descent Condition and the Robust Mean Estimation Condition implies RASC}
\label{sec:RASC-be-implied}

Now we setup that RASC can be implied from the Robust Descend Condition and the Robust Mean Estimation Condition.

\begin{claim}
\label{claim-RDC-implies-RASC}
\textbf{(Generalized Robust Descend Condition \cite{RDC} implies RASC)}
Suppose the function $F$ is $\alpha$-strongly convex. If one can find a gradient estimator $\gradest_t$ for the ground truth gradient $\bG_t$ evaluated at point $\param_t$ such that $\sup_{\bv \in \Acal} \langle \gradest_t - \bG_t, v \rangle \leq \theta \| \param_t - \param^* \| + \psi$, and suppose
$\bvtil_t^+ = \argmin_{\bv\in\Acal} \langle \gradest_t, \bv \rangle$  and
$\bvtil_t^- = \argmax_{\bv\in \text{active set of } \param_t} \langle \gradest_t, \bv \rangle$, 
then $\bvtil_t^+$ and $\bvtil_t^-$ satisfy RASC:
\begin{align*}
    |\langle \bG_t, \bvtil_t^+ - \bvtil_t^- \rangle -
    \langle \bG_t, \bv_t^+ - \bv_t^- \rangle|
    \leq 4 \theta \| \param_t - \param^* \|_2 + 4 \psi
\end{align*}
\end{claim}
\begin{proof}
\begin{align*}
    & \langle \bG_t, \bvtil_t^+ - \bvtil_t^- \rangle - \langle \bG_t, \bv_t^+ - \bv_t^- \rangle \\
    = & \langle \bG_t - \bGtil_t, \bvtil_t^+ - \bvtil_t^- \rangle - \langle \bG_t, \bv_t^+ - \bv_t^- \rangle + \langle \bGtil_t, \bvtil_t^+ - \bvtil_t^- \rangle \\
    \leq &  \langle \bG_t - \bGtil_t, \bvtil_t^+ - \bvtil_t^- \rangle - \langle \bG_t, \bv_t^+ - \bv_t^- \rangle + \langle \bGtil_t, \bv_t^+ - \bv_t^- \rangle \quad \quad \text{(By definition of $\bvtil_t^+$ and $\bvtil_t^-$)} \\
    = & \langle \bG_t - \bGtil_t, \bvtil_t^+ - \bvtil_t^- \rangle - \langle \bG_t - \bGtil_t, \bv_t^+ - \bv_t^- \rangle \\
    \leq & 4 \theta \| \param - \param^* \|_2 + 4 \psi \quad \quad \text{(By the generalized RDC )} \\
    % \leq & 4 \theta \sqrt{\frac{2}{\alpha}} h_t^{1/2} + 4 \psi \quad \quad \text{(By the smoothness of $F$)}
\end{align*}
\end{proof}
If we let $\Acal$ be the atoms of $\ell_1$ norm, the bound on $\sup_{\bv \in \Acal} \langle \gradest_t - \bG_t, v \rangle $ is $\| \gradest_t - \bG_t \|_{\infty}$, which is the exact condition as the Proposition A.1 in \cite{RDC}.
Note that the conversion from RDC to RASC does not incur extra factor on $\psi$, and therefore maintain the similar high dimension performance as RDC.
% Also note that we generalize the $\ell_1$ norm constrain to arbitrary convex hull constrain defined by atom set $\Acal$, and use a dual-norm type formulation: $\sup_{\bv \in \Acal} \langle \gradest_t - \bG_t, v \rangle $. 
% If we replace the Robust Descend Condition by this, it is not clear if the meta theorem in Robust-IHT remain holds \cite{RDC}.
% But Robust Linear Minimization Oracle can guarantee such a generalized Robust Descend Condition.

\begin{claim}
\label{claim-RMS-implies-RASC}
\textbf{(Robust Mean Estimation Condition \cite{prasad2018robust} implies RASC)}
Suppose the function $F$ is $\alpha$-strongly convex and let $D = \argmax_{\param, \by \in \text{conv}(\Acal)} \| \param - \by \|$. If one can find a gradient estimator $\gradest_t$ for the ground truth gradient $\bG_t$ evaluated at point $\param_t$ such that 
$\| \gradest_t - \bG_t \|_2 \leq \theta \| \param - \param^* \| + \psi$, and suppose
$\bvtil_t^+ = \argmin_{\bv\in\Acal} \langle \gradest_t, \bv \rangle$  and
$\bvtil_t^- = \argmax_{\bv\in \text{active set of } \param_t} \langle \gradest_t, \bv \rangle$, 
then $\bvtil_t^+$ and $\bvtil_t^-$ satisfy RASC:
\begin{align*}
    |\langle \bG_t, \bvtil_t^+ - \bvtil_t^- \rangle -
    \langle \bG_t, \bv_t^+ - \bv_t^- \rangle|
    \leq 4 \theta D\| \param - \param^* \|_2 + 4 D\psi
\end{align*}
\end{claim}
\begin{proof}
\begin{align*}
    & \langle \bG_t, \bvtil_t^+ - \bvtil_t^- \rangle - \langle \bG_t, \bv_t^+ - \bv_t^- \rangle \\
    = & \langle \bG_t - \bGtil_t, \bvtil_t^+ - \bvtil_t^- \rangle - \langle \bG_t, \bv_t^+ - \bv_t^- \rangle + \langle \bGtil_t, \bvtil_t^+ - \bvtil_t^- \rangle \\
    \leq &  \langle \bG_t - \bGtil_t, \bvtil_t^+ - \bvtil_t^- \rangle - \langle \bG_t, \bv_t^+ - \bv_t^- \rangle + \langle \bGtil_t, \bv_t^+ - \bv_t^- \rangle \quad \quad \text{(By definition of $\bvtil_t^+$ and $\bvtil_t^-$)} \\
    = & \langle \bG_t - \bGtil_t, \bvtil_t^+ - \bvtil_t^- \rangle - \langle \bG_t - \bGtil_t, \bv_t^+ - \bv_t^- \rangle \\
    \leq & \| \bG_t - \bGtil_t \|_2 \left( \|\bvtil_t^+ - \bvtil_t^-\|_2 + \|\bv_t^+ - \bv_t^-\|_2\right)  \quad \quad \text{(By the Cauthy's inequality)}\\
    \leq & 4 D \| \bG_t - \bGtil_t \|_2  \quad \quad \text{(By the definition of $D$)} \\
    % \leq & 4 D \theta \sqrt{\frac{2}{\alpha}} h_t^{1/2} + 4 D \psi \quad \quad \text{(By the smoothness of $F$)}
\end{align*}
\end{proof}
A few example for the diameter $D$: for $\ell_2$ norm ball, $D = \sqrt{2}$. The diameter $D$ is usually seen as a constant
(Garber and Meshi offered a detailed discussion \cite{garber2016linear}).
Invoking the Robust Mean Estimation Condition scarifies the high dimensional performance anyway \cite{prasad2018robust}, so such an extra factor is not crucial.

Again, as a caveat, although we introduce them as deterministic conditions, they usually have a probability nature.
That is, we usually say a robustness condition holds with a certain probability.
However in our proof, we are able to make sure that these conditions hold with high probability, and therefore the reader can see them as quasi-deterministic conditions, for the ease of understanding.

\newpage

\section{Appendix: Stability for Decomposition-invariant Pairwise Conditional Gradient}
\label{appendix-meta-thm-DICG}

The Decomposition-invariant Pairwise Conditional Gradient (DICG) (Algorithm \ref{alg:DICG} is proposed in \cite{garber2016linear}.
Although it has a slightly stronger assumption on the constraint set (see \cite{garber2016linear}) compared to the original linear convergence Frank Wolfe variants \cite{lacoste2015global}, 
they get rid of the dependency of the dimension in the convergence rate, and get rid of the exponential dependency on the atomic set for pairwise Frank Wolfe in the convergence rate \cite{lacoste2015global}. Besides, we also do not have to maintain an active set for current iterate $\param$, which reduce the memory complexity.

In this section we will setup that we can also robustify DICG algorithm.

% Let $h_t = F(\param_t) - F(\param^*)$. Suppose, instead of have accurate FW-Atom (solved by LMO) and Away-Atom, we only have estimate $\bvtil_t^+, \bvtil_t^-$ that satisfies the \textbf{Robust-Atom-Estimation-Condition}:
% \begin{align*}
%     \langle \nabla F(\param_t), \bvtil_t^+ - \bvtil_t^- \rangle -
%     \langle \nabla F(\param_t), \bv_t^+ - \bv_t^- \rangle
%     \leq 4 \theta h_t^{1/2} + 4 \psi,
% \end{align*}
% and we want to establish that the algorithm still converges.

For notation simplicity, let $F(\param)$ as the population function: $\E_{(\sample, y)} f_{\param}(\sample,y)$;
let $h_t = F(\param_t) - F(\param^*)$ be the function sub-optimal gap;
and let $\bv_t^+$, $\bv_t^-$ be the FW-Atom and Away-Atom computed with respect to $\nabla F(\param_t)$.
Let $\text{card}(\param^*)$ be the minimum number of atoms we need to represent $\param^*$.

\begin{theorem}
\textbf{(Restating \Cref{thm-robust-DICG-maintext})}
Suppose $F$ is $\alpha_{l}$ strongly convex and $\alpha_{u}$ Lipschitz smooth, and $\Mcal$ satisfies Assumption \ref{assumption-DICG-constraints}.
Let $h_t = F(\param_t) - F(\param^*)$.
Let $\param_T$ be the output of the DICG algorithm after iteration $T$, with a robust procedure to find $\bvtil_t^+, \bvtil_t^-$ such that
\begin{align*}
    \langle \nabla F(\param_t), \bvtil_t^+ - \bvtil_t^- \rangle -
    \langle \nabla F(\param_t), \bv_t^+ - \bv_t^- \rangle
    \leq 4 \theta \| \param_t - \param^* \|_2 + 4 \psi,
\end{align*}where
\begin{align*}
    \theta \leq \frac{\alpha_{l}}{16 \sqrt{ \text{card}(\param^*)}}.
\end{align*}
Let $R =  \frac{\kappa}{32 \text{card}(\param^*) D^2}$. Let the stepsize 
$$\eta_t =  \left [\left(\frac{\sqrt{2}}{\sqrt{2-R}} - \frac{2 R}{2 - R} \right)^t  \frac{R}{\sqrt{\alpha_l}} h_0^{1/2}
+ 
\frac{ \alpha_{l}  \psi }
{16 \alpha_{u}^2 D^4  \text{card}(\param^*)} \right]. $$
Then DICG converges linearly towards $\psi$.
\begin{align*}
    \| \param_t - \param^* \|_2
&\leq \left(\frac{\sqrt{2}}{\sqrt{2-R}} - \frac{2 R}{2 - R} \right)^t \frac{2}{\alpha_{l}} \|\param_0 - \param^*\| + 
\frac{  1  }
{2 \alpha_{u} D^2  } \frac{\psi}{\sqrt{ \text{card}(\param^*)}}
\end{align*}
\end{theorem}

\begin{proof}

\begin{align*}
    h_{t+1} 
    &= F(\param_t + \tilde{\eta_t}(\bvtil_t^+ - \bvtil_t^- )) - F(\param^*) \\
    &\leq h_t + \tilde{\eta_t}(\bvtil_t^+ - \bvtil_t^- ) \cdot \nabla F(\param_t) + \frac{\tilde{\eta_t}^2 \alpha_{u} D^2}{2} \quad \quad \text{(by smoothness)} \\
    & \leq h_t + \frac{\eta_t}{2}(\bvtil_t^+ - \bvtil_t^- ) \cdot \nabla F(\param_t) + \frac{{\eta_t}^2 \alpha_{u} D^2}{2} 
\end{align*}
Suppose we are updating using $\bv^+$ and $\bv^-$, then we basically replacing the $\bvtil_t^+$, $\bvtil_t^-$ by $\bv^+$ and $\bv^-$ respectively, and this is the original DICG. The original DICG converges, as in the Lemma 3 in \cite{garber2016linear}:
\begin{align*}
h_{t+1} \leq 
     &h_t + \frac{\eta_t}{2}(\bv_t^+ - \bv_t^- ) \cdot \nabla F(\param_t) + \frac{{\eta_t}^2 \alpha_{u} D^2}{2} \\
    \leq &h_{t} - \eta_t \frac{\sqrt{\alpha_{l}}}{2\sqrt{2 \text{card}(\param^*)}} h_t^{1/2} + \eta_t^{2} \frac{\alpha_{u} D^2}{2}
\end{align*}
Combined with these two inequalities, we have
\begin{align*}
    h_{t+1} 
     \leq & h_t + \frac{\eta_t}{2}(\bvtil_t^+ - \bvtil_t^- ) \cdot \nabla F(\param_t) + \frac{{\eta_t}^2 \alpha_{u} D^2}{2} \\
    \leq & h_{t} - \eta_t \frac{\sqrt{\alpha_{l}}}{2\sqrt{2 \text{card}(\param^*)}} h_t^{1/2} + \eta_t^{2} \frac{\alpha_{u} D^2}{2} \\
    & + 
    \frac{1}{2} \eta_t 
    \left( \langle \nabla F(\param_t), \bvtil_t^+ - \bvtil_t^- \rangle -
    \langle \nabla F(\param_t), \bv_t^+ - \bv_t^- \rangle \right)
\end{align*}
where $\frac{1}{2} \eta_t \left( \langle \nabla F(\param_t), \bvtil_t^+ - \bvtil_t^- \rangle - \langle \nabla F(\param_t), \bv_t^+ - \bv_t^- \rangle \right)$ is the residual incurred by using inaccurate atoms.
By the \textbf{Robust-Atom-Estimation-Condition} and Lipschitz smoothness, we have
\begin{align*}
    \langle \nabla F(\param_t), \bvtil_t^+ - \bvtil_t^- \rangle -
    \langle \nabla F(\param_t), \bv_t^+ - \bv_t^- \rangle
    \leq &4 \theta \| \param_t - \param^* \|_2 + 4 \psi \\
    \leq & 4 \theta \sqrt{\frac{2}{\alpha_{l}}}h_t^{1/2} + 4 \psi
\end{align*}
    Plug-in and re-arrange:
\begin{align*}
    0 \leq \eta_t^2 \frac{\alpha_{u} D^2}{2} + 
    \left[ 
        2 \theta \sqrt{\frac{2}{\alpha_{l}}} h_t^{1/2} + 2 \psi - \frac{\sqrt{\alpha_{l}}}{2\sqrt{2 \text{card}(\param^*)}}  h_t^{1/2}
    \right] \eta_t +
    h_t - h_{t+1}
\end{align*}

Let $\eta_t = Z h_{t+1}^{1/2}$, where $Z = \frac{\sqrt{\alpha_{l}}}{\sqrt{32 \text{card}(\param^*)}} \cdot \frac{1}{\alpha_{u} D^2 }$.
\begin{align*}
    0 \leq \left[ \frac{Z^2 \alpha_{u} D^2}{2} - 1\right] h_{t+1} + 
    \left[
        2 \theta \sqrt{\frac{2}{\alpha_{l}}} h_t^{1/2} + 2 \psi - \frac{\sqrt{\alpha_{l}}}{2\sqrt{2 \text{card}(\param^*)}}  h_t^{1/2}
    \right] Z h_{t+1}^{1/2} +
    h_t
\end{align*}

This is a quadratic equation with respect to $h_{t+1}^{1/2}$. Since $h_{t+1}^{1/2}$ is not smaller than zero for sure, and $\left[ \frac{Z^2 \alpha_{u} D^2}{2} - 1\right] < 0$, $h_{t+1}^{1/2}$ has to be smaller than the larger root.

\begin{align*}
    h_{t+1}^{1/2} 
     \leq & 
    \frac
    {
    \left[
        2 \theta \sqrt{\frac{2}{\alpha_{l}}} h_t^{1/2} + 2 \psi - \frac{\sqrt{\alpha_{l}}}{2\sqrt{2 \text{card}(\param^*)}}  h_t^{1/2}
    \right] Z
    }
    {
      2 - Z^2 \alpha_{u} D^2
    } \\
    + & \frac{
        \sqrt{\left[2 \theta \sqrt{\frac{2}{\alpha_{l}}} h_t^{1/2} + 2 \psi - \frac{\sqrt{\alpha_{l}}}{2\sqrt{2 \text{card}(\param^*)}}  h_t^{1/2} \right]^2 Z^2 + (4 - 2 Z^2 \alpha_{u} D^2) h_t}
      }{
      2 - Z^2 \alpha_{u} D^2}
    \\
    %%%%
    \overset{i}{\leq} & \frac
    {
        2 \cdot \left[2 \theta \sqrt{\frac{2}{\alpha_{l}}} h_t^{1/2} + 2 \psi - \frac{\sqrt{\alpha_{l}}}{2\sqrt{2 \text{card}(\param^*)}}  h_t^{1/2} \right] Z +
        \sqrt{ (4 - 2 Z^2 \alpha_{u} D^2) h_t}
    }
    {
      2 - Z^2 \alpha_{u} D^2
    }
    \\
    = & \frac
    {
        \left[4 \theta \sqrt{\frac{2}{\alpha_{l}}}  - \frac{\sqrt{\alpha_{l}}}{\sqrt{2 \text{card}(\param^*)}}  \right] Z +
        \sqrt{ (4 - 2 Z^2 \alpha_{u} D^2)}
    }
    {2 - Z^2 \alpha_{u} D^2} h_t^{1/2}
    +
    \frac
    {4 \psi Z}
    {2 - Z^2 \alpha_{u} D^2}
\end{align*}
and the inequality $i$ comes from the fact that $\sqrt{a + b} \leq \sqrt{a} + \sqrt{b}$ for positive $a$ and $b$.
Plug-in $Z = \frac{\sqrt{\alpha_{l}}}{\sqrt{32 \text{card}(\param^*)}} \cdot \frac{1}{\alpha_{u} D^2 }$.
Then we have
\begin{align*}
    Z^2 \alpha_{u} D^2 = \frac{\alpha_{l}}{32 \text{card}(\param^*) \alpha_{u} D^2} = \frac{1}{32 \text{card}(\param^*) D^2 } \cdot \kappa < 1.
\end{align*}
Denote $Z^2 \alpha_{u} D^2$ as $R$. And the convergence rate:
\begin{align*}
& \frac
    {
        \left[4 \theta \sqrt{\frac{2}{\alpha_{l}}}  - \frac{\sqrt{\alpha_{l}}}{\sqrt{2 \text{card}(\param^*)}}  \right] Z +
        \sqrt{ (4 - 2 Z^2 \alpha_{u} D^2)}
    }
    {2 - Z^2 \alpha_{u} D^2}
\\
= & 
\frac{1}{2 - R} 
\cdot \left[4 \theta \sqrt{\frac{2}{\alpha_{l}}} - \frac{\sqrt{\alpha_{l}}}{\sqrt{2 \text{card}(\param^*)}}  \right] 
\cdot \frac{\sqrt{\alpha_{l}}}{\sqrt{32 \text{card}(\param^*)}}
\cdot \frac{1}{\alpha_{u} D^2 }
+ \sqrt{\frac{2}{2 - R} }
\\
\leq & 
\frac{1}{2 - R} 
\cdot \left[4 \frac{\sqrt{\alpha_{l}}}{8 \sqrt{2 \text{card}(\param^*)}} - \frac{\sqrt{\alpha_{l}}}{\sqrt{2 \text{card}(\param^*)}}  \right] 
\cdot \frac{\sqrt{\alpha_{l}}}{\sqrt{32 \text{card}(\param^*)}}
\cdot \frac{1}{\alpha_{u} D^2 }
+ \sqrt{\frac{2}{2 - R} }
\\
= &
\frac{1}{2 - R} 
\cdot \left[ - \frac{\alpha_{l}}{16 \text{card}(\param^*) D^2}  \right] 
\cdot \kappa
+ \sqrt{\frac{2}{2 - R} }
\\
= &
- \frac{2 R}{2 - R} + \frac{\sqrt{2}}{\sqrt{2-R}}
\\
< & 1
\end{align*}

And the residual
% \begin{align*}
% \frac{4 \psi Z}{2 - Z^2 \alpha_{u} D^2}
% &= \frac{4  \frac{\sqrt{\alpha_{l}}}{\sqrt{32 \text{card}(\param^*)}} \cdot \frac{1}{\alpha_{u} D^2 }}{2 - \frac{{\alpha_{l}}}{{32 \text{card}(\param^*)}\alpha_{u} D^2} } \psi\\
% &= \frac{4  \sqrt{\alpha_{l}} \cdot \sqrt{32 \text{card}(\param^*)} }
% {2\alpha_{u} D^2{32 \text{card}(\param^*)} - \alpha_{l} } \psi \\
% & \leq  \frac{  \sqrt{\alpha_{l}}   }
% {2\sqrt{2} \alpha_{u} D^2  } \frac{\psi}{\sqrt{ \text{card}(\param^*)}}.
% \end{align*}
\begin{align*}
\frac{4 \psi Z}{2 - Z^2 \alpha_{u} D^2}
&\leq \frac{4 \psi Z}{2} \\
&\leq \frac{\sqrt{\alpha_{l}}}{\alpha_{u}\sqrt{8 \text{card}(\param^*)}}   \psi \\
&\leq \frac{\sqrt{\alpha_{l}}}{\alpha_{u}\sqrt{8 }}   \psi
\end{align*}
Therefore we have 
\begin{align*}
    h_t^{1/2} 
&\leq  \left(\frac{\sqrt{2}}{\sqrt{2-R}} - \frac{2 R}{2 - R} \right)^t h_0^{1/2} + \left(1 - \left(\frac{\sqrt{2}}{\sqrt{2-R}} - \frac{2 R}{2 - R} \right)^t \right) 
\frac{\sqrt{\alpha_{l}}}{\alpha_{u}\sqrt{8 }}   \psi  \\
&\leq \left(\frac{\sqrt{2}}{\sqrt{2-R}} - \frac{2 R}{2 - R} \right)^t h_0^{1/2} + 
\frac{\sqrt{\alpha_{l}}}{\alpha_{u}\sqrt{8 }}   \psi
\end{align*}
By strong convexity,
we know that $\| \param_t - \param^* \|_2 \leq \sqrt{\frac{2}{\alpha_{l}}} h_t^{1/2}$. Hence we have
\begin{align*}
    \| \param_t - \param^* \|_2
&\leq \left(\frac{\sqrt{2}}{\sqrt{2-R}} - \frac{2 R}{2 - R} \right)^t \sqrt{\frac{2}{\alpha_{l}}} h_0^{1/2} + 
\frac{\psi}{\sqrt{2} \alpha_{u} }
\end{align*}
\end{proof}

\newpage
\section{Appendix: Stability for Pairwise Conditional Gradient}
\label{sec:appendix-pcg-thm}
In this section, we offer  analysis of the Robust Pairwise Conditional Gradient (PCG) method \cite{lacoste2015global}.
The robust analysis of PCG follows the original assumptions required by the global convergence of PCG \cite{lacoste2015global}, where the affine-invariant notion of smoothness and convexity are introduces. 
We begin by introducing these definition, as proposed by Lacoste-Julien and Jaggi \cite{lacoste2015global}.

\begin{definition}
\textbf{(Affine Invariant Smoothness)}
A function $f$ is affine invariant smooth if there exists a constant $C_f$ such that
\begin{align} \label{def:affine-inv-smooth}
    C_f = \underset{\substack{\param, \bs \in \Mcal, \gamma \in [0,1] \\ \by=\param+\gamma (\bs - \param)}}{\sup}
    \frac{2}{\gamma^2} \left( f(\by) - f(\param) - \langle \nabla f(\param), \by - \param \rangle \right)
\end{align}
\end{definition}

\begin{definition}
\textbf{(Geometric Strong Convexity Constant)} 
We first define the positive step-size quantity to be
\begin{align*}
    \gamma^A(\param, \param^*) = 
    \frac{\left \langle - \nabla f(\param), \param^* - \param \right \rangle}
    {\left \langle - \nabla f(\param), s_f(\param) - v_f(\param) \right \rangle}
\end{align*}
where $s_f(\param)$ is the standard FW atom and $ v_f(\param)$ is the worst case away atom:
\begin{align*}
s_f(\param) = \argmin_{\bv \in \Acal} \langle \nabla f(\param), \bv \rangle \\
s_f(\param) = \argmin_{\bv \in \Acal} \langle \nabla f(\param), \bv \rangle
\end{align*}
And the Geometric Strong Convexity Constant of a function $f$ is defined by
\begin{align*}
    \mu_f^A = 
    \underset{\param \in \Mcal}{\inf} \quad 
    \underset{\substack{\param^* \in \Mcal \\\text{s.t.} \langle \nabla f(\param), \param^* - \param\rangle < 0}}{\inf} \quad
    \frac{2}{\gamma^A(\param, \param^*)^2} \left( f(\param^*) - f(\param) + \left \langle -\nabla f(\param),\param^*-\param  \right \rangle \right)
\end{align*}
\end{definition}

\begin{definition}
\textbf{(Curvature Constant)} The curvature constant for function $f$ is
\begin{align} \label{def:curvature-constant}
    C_f^A = \underset{\substack{\param, \bs, \bv \in \Mcal, \gamma \in [0,1] \\ \by=\param+\gamma (\bs - \bv)}}{\sup}
    \frac{2}{\gamma^2} \left( f(\by) - f(\param) - \langle \nabla f(\param), \by - \param \rangle \right)
\end{align}
\end{definition}

We would like to refer the readers to the seminar work by Lacoste-Julien and Jaggi \cite{lacoste2015global} for a detailed discussion of these constant and their relation to the standard notion of convexity and smoothness. 
% \TODO{Maybe: cite the lemma in locoste to understand these constants.}

\vspace{10pt}

For notation simplicity, let $F(\param)$ as the population function: $\E_{(\sample, y)} f_{\param}(\sample,y)$;
let $h_t = F(\param_t) - F(\param^*)$ be the function sub-optimal gap;
and let $\bv_t^+$, $\bv_t^-$ be the FW-Atom and Away-Atom computed with respect to $\nabla F(\param_t)$.

\begin{theorem}
\label{thm-robust-PCG-appendix}
Let $C_f^A$ be the curvature constant of $F(\param)$, $\mu_f^A$ be the geometric strong convexity constant, and $\alpha_{l}$ be the strong convexity constant. Let $\kappa = \frac{\mu_f^A}{C_f^A}$.
Let $\param_T$ be the output of the robust PCG algorithm after $T$ iteration, with a robust procedure to find $\bvtil_t^+, \bvtil_t^-$ such that
\begin{align*}
    \langle \nabla F(\param_t), \bvtil_t^+ - \bvtil_t^- \rangle -
    \langle \nabla F(\param_t), \bv_t^+ - \bv_t^- \rangle
    \leq 4 \theta \| \param_t - \param^* \|_2 + 4 \psi,
\end{align*}where
\begin{align*}
    \theta \leq \frac{2 - \sqrt{2}}{8}\sqrt{\alpha_{l}\mu_f^A}.
\end{align*}
Let the step size be 
$\eta_t = \left[\frac{\sqrt{2}}{\sqrt{2 - \kappa}}  - \frac{2\kappa}{2 - \kappa} \right]^t \sqrt{\frac{\kappa}{C_f^A}} h_0^{1/2} + 4 \frac{\kappa}{C_f^A}$.
Then the sub-optimality $h_t$ decreases geometrically towards $\psi$
% at each step that is not a drop step nor a swap step
\begin{align*}
\|\param_t - \param^* \|
     \leq& \left[\frac{\sqrt{2}}{\sqrt{2 - \kappa}}  - \frac{2\kappa}{2 - \kappa} \right]^t \sqrt{\frac{2}{\alpha_{l}}} h_0^{1/2} + 4 \sqrt{\frac{2\kappa}{\alpha_{l} C_f^A}}
\end{align*}
\end{theorem}
\begin{proof}

For notation simplicity, denote $\bar{\gamma} = \gamma^A(\param_t, \param^*)$.
By the definition of the geometric strong convexity constant, we have
\begin{align*}
    \frac{\bar{\gamma}^2}{2} \mu_f^A \leq &
    F(\param^*) - F(\param_t) + \left \langle - \nabla F(\param_t),\param^*-\param_t   \right \rangle \\
    = & -h_t + \bar{\gamma} \left \langle -\nabla F(\param_t),s_f(\param_t)-v_f(\param_t)   \right \rangle \\
    \overset{*}{\leq} & -h_t + \bar{\gamma} \left \langle -\nabla F(\param_t),\bv_t^+ - \bv_t^- \right \rangle
\end{align*}
where the $*$ step follows from the definition of $s_f(\cdot)$ and $v_f(\cdot)$.
Note that this is hence a quadratic function with respect to $\bar{\gamma}$, and regardless of the range of $\bar{\gamma}$, we have
\begin{align*}
    h_t \leq &- \frac{\bar{\gamma}^2}{2} \mu_f^A + \bar{\gamma} \left \langle -\nabla F(\param_t),\bv_t^+ - \bv_t^- \right \rangle \\
    \leq &\frac{\left \langle -\nabla F(\param_t),\bv_t^+ - \bv_t^- \right \rangle^2}{2\mu_f^A}.
\end{align*}

Note that $\left \langle -\nabla F(\param_t),\bv_t^+ - \bv_t^- \right \rangle$ is positive.
Then we have
\begin{align*}
    \left \langle -\nabla F(\param_t),\bv_t^+ - \bv_t^- \right \rangle \geq \sqrt{2 \mu_f^A h_t}
\end{align*}

% For the analysis we consider the non-drop step only, that is, $\gamma_t \leq a_t^-$, where $a_t^-$ is the max step size allowed. The techniques to bound the number of drop steps will be the same as in \cite{lacoste2015global}.

% By the definition of affine invariant smoothness (Note that this is pretty much similar to lipschitz smoothness), by \eqref{def:affine-inv-smooth}, we have
% \begin{align*}
%     F(\param_{t+1}) \leq F(\param_t) + \eta_t \langle \nabla F(\param_t), \tilde{\bd_t} \rangle + \frac{\eta_t^2}{2} C_f,
% \end{align*}
% where $C_f$ if the constant defined in affine invariant smoothness, and $\tilde{\bd_t}$ is the forward direction computed by our robust estimation, such as robust linear minimization oracle. That is,
% $\tilde{\bd_t} = \bvtil_t^+ - \bvtil_t^-$.
% Note that by the definition of $C_f^A$, we know that $C_f \leq C_f^A$:
% \begin{align}
% \label{eqn:PCG-analysis1}
%     F(\param_{t+1}) \leq F(\param_t) + \eta_t \langle \nabla F(\param_t), \tilde{\bd_t} \rangle + \frac{\eta_t^2}{2} C_f^A.
% \end{align}
By the definition of curvature-constant (Note that this is pretty much similar to lipschitz smoothness), as in \Cref{def:curvature-constant}, we have
\begin{align}
\label{eqn:PCG-analysis1}
    F(\param_{t+1}) \leq F(\param_t) + \eta_t \langle \nabla F(\param_t), \tilde{\bd_t} \rangle + \frac{\eta_t^2}{2} C_f^A,
\end{align}
where $\tilde{\bd_t}$ is the forward direction computed by our robust estimation, such as robust linear minimization oracle. That is,
$\tilde{\bd_t} = \bvtil_t^+ - \bvtil_t^-$.

% We have to impose the step size to be smaller than one here

% We then decompose the inner product:
% \begin{align*}
%     \langle G_t, \tilde{\bd_t} \rangle 
%     = & \langle G_t - \tilde{G_t}, \tilde{\bd_t} \rangle  + \langle \tilde{G_t}, \tilde{\bd_t} \rangle \\
%     \leq & \langle G_t - \tilde{G_t}, \tilde{\bd_t} \rangle  + \langle \tilde{G_t}, \tilde{\bs_t} - \tilde{\bv_t} \rangle \\
%     \leq & \langle G_t - \tilde{G_t}, \tilde{\bd_t} \rangle  + \langle \tilde{G_t}, \bs_t - \bv_t \rangle \\
%     = & \langle G_t - \tilde{G_t}, \tilde{\bd_t} \rangle  + \langle \tilde{G_t} - G_t, \bs_t - \bv_t \rangle + \langle G_t, \bs_t - \bv_t \rangle \\
%     = & \langle G_t - \tilde{G_t}, \tilde{\bd_t} - (\bs_t - \bv_t) \rangle  + \langle G_t, \bs_t - \bv_t \rangle 
% \end{align*}
Since our robust atom selection procedure satisfies
\begin{align*}
    % \underset{\bv \in \Acal}{\max} 
    |\langle \nabla F(\param_t), \bvtil_t^+ - \bvtil_t^- \rangle -
    \langle \nabla F(\param_t), \bv_t^+ - \bv_t^- \rangle|
    \leq 4 \theta \| \param_t - \param^* \|_2 + 4 \psi.
\end{align*}
Combined with the fact that $F$ is $\alpha_{l}$-strongly convex, we have
\begin{align*}
    % \underset{\bv \in \Acal}{\max} 
    |\langle \nabla F(\param_t), \bvtil_t^+ - \bvtil_t^- \rangle -
    \langle \nabla F(\param_t), \bv_t^+ - \bv_t^- \rangle|
    \leq 4 \theta \sqrt{\frac{2}{\alpha_{l}}}h_t^{1/2} + 4 \psi.
\end{align*}
Then we have
\begin{align*}
    \langle \nabla F(\param_t), \bvtil_t^+ - \bvtil_t^- \rangle
    \leq & 4 \theta \sqrt{\frac{2}{\alpha_{l}}} h_t^{1/2} + 4 \psi + \langle  \nabla F(\param_t), \bv_t^+ - \bv_t^- \rangle \\
    \leq & \left(4 \theta \sqrt{\frac{2}{\alpha_{l}}} - \sqrt{2 \mu_f^A} \right) h_t^{1/2} + 4 \psi 
\end{align*}
Plug in Equation \ref{eqn:PCG-analysis1} and we have the following descend inequality:
\begin{align*}
    h_{t+1} \leq h_t + \eta_t \left[ \left(4 \theta \sqrt{\frac{2}{\alpha_{l}}} - \sqrt{2 \mu_f^A} \right) h_t^{1/2} + 4 \psi  \right] + \frac{\eta_t^2}{2} C_f^A.
\end{align*}
Plugin $\eta_t = B \cdot h_{t+1}^{1/2}$, where $B = \sqrt{\frac{\kappa}{C_f^A}}$:
\begin{align}\label{eqn:the-quad}
    0 \leq \left( \frac{B^2}{2} C_f^A - 1 \right) \cdot h_{t+1} + B \left[ \left(4 \theta \sqrt{\frac{2}{\alpha_{l}}} - \sqrt{2 \mu_f^A} \right) h_t^{1/2} + 4  \psi\right] \cdot  h_{t+1}^{1/2} + h_t.
\end{align}
Note that \eqref{eqn:the-quad} is a quadratic inequality with respect to $ h_{t+1}^{1/2}$. Then we can upper bound $ h_{t+1}^{1/2}$ by the larger root of the quadratic inequality:
\begin{align*}
     h_{t+1}^{1/2} 
     \leq& \frac{B \left[  \left(4 \theta \sqrt{\frac{2}{\alpha_{l}}} - \sqrt{2 \mu_f^A} \right) h_t^{1/2} + 4  \psi \right]}{ 2 - B^2 C_f^A} + \\
     &\frac{\sqrt{B^2 \left[  \left(4 \theta \sqrt{\frac{2}{\alpha_{l}}} - \sqrt{2 \mu_f^A} \right) h_t^{1/2} + 4  \psi \right]^2 + (4 - 2 B^2 C_f^A) h_t} }{ 2 - B^2 C_f^A} \\
     \overset{i}{\leq}& \frac{2 B \left[  \left(4 \theta \sqrt{\frac{2}{\alpha_{l}}} - \sqrt{2 \mu_f^A} \right) h_t^{1/2} + 4  \psi \right] + \sqrt{ (4 - 2 B^2 C_f^A) h_t} }{ 2 - B^2 C_f^A} \\
     = & \frac{ 2B \left(4\theta \sqrt{\frac{2}{\alpha_{l}}} - \sqrt{2 \mu_f^A} \right)+ \sqrt{ (4 - 2 B^2 C_f^A)} }{2 - B^2 C_f^A} h_t^{1/2} + \frac{8 B\psi}{2 - B^2 C_f^A}
\end{align*}
The inequality $i$ comes from the fact that $\sqrt{a + b} \leq \sqrt{a} + \sqrt{b}$ for positive $a$ and $b$.

% Even without explicitly introducing what $B$ is, one can show that there exists an appropriate $B$ such that the convergence rate is smaller than one by analyzing the coefficient (cubic equality). Here is a naive choice of the parameters. 

% For notation simplicity let $A = \sqrt{2 \mu_f^A} - 3 R_N \alpha_{l} $, $\kappa = \frac{\mu_f^A}{C_f^A}$.
Plugin $B = \sqrt{\frac{\kappa}{C_f^A}}$,
% Set $B = \frac{1}{\sqrt{2A^2 + C_f^A}}$, we have the coefficient being zero.
and assume that $\theta \leq \frac{2 - \sqrt{2}}{8}\sqrt{\alpha_{l}\mu_f^A}$.
% We have $A > \sqrt{\mu_f^A}$, and
Then we have
\begin{align*}
    & \frac{ 2B \left(4 \theta \sqrt{\frac{2}{\alpha_{l}}} - \sqrt{2 \mu_f^A} \right)+ \sqrt{ (4 - 2 B^2 C_f^A)} }{2 - B^2 C_f^A} \\
    % \leq & - \frac{2B A}{2 - B^2 C_f^A} + \frac{\sqrt{2}}{\sqrt{2 - B^2 C_f^A}} \\ 
    \leq & - \frac{2\kappa}{2 - \kappa} + \frac{\sqrt{2}}{\sqrt{2 - \kappa}}  \\
    \leq & 1. \quad \quad \text{(by lemma \ref{lemma:tech:1})}
\end{align*}

The residual
\begin{align*}
    \frac{8 B\psi}{2 - B^2 C_f^A} \leq 4 B \psi = 4 \sqrt{\frac{\kappa}{C_f^A}} \psi
\end{align*}
Note that $\mu_f^A$ might depend negatively as the dimension, but not positively. That is, it is possible that as the dimension increases, the $\mu_f^A$ decreases \cite{garber2016linear}. And $C_f^A$ does not change with the dimension \cite{lacoste2015global}. Therefore the $4 \sqrt{\frac{\kappa}{C_f^A}}$ term is not increasing with dimension.

Therefore,
\begin{align*}
     h_{t+1}^{1/2} 
     \leq& \left[\frac{\sqrt{2}}{\sqrt{2 - \kappa}}  - \frac{2\kappa}{2 - \kappa} \right] h_t^{1/2} + 4 \sqrt{\frac{\kappa}{C_f^A}} \psi
\end{align*}
which means
\begin{align*}
     h_{t}^{1/2} 
     \leq& \left[\frac{\sqrt{2}}{\sqrt{2 - \kappa}}  - \frac{2\kappa}{2 - \kappa} \right]^t h_0^{1/2} + 4 \sqrt{\frac{\kappa}{C_f^A}} \psi.
\end{align*}
By strong convexity,
we know that $\| \param_t - \param^* \|_2 \leq \sqrt{\frac{2}{\alpha_{l}}} h_t^{1/2}$. Hence we have
\begin{align*}
     \|\param_t - \param^* \|
     \leq& \left[\frac{\sqrt{2}}{\sqrt{2 - \kappa}}  - \frac{2\kappa}{2 - \kappa} \right]^t \sqrt{\frac{2}{\alpha_{l}}} h_0^{1/2} + 4 \sqrt{\frac{2\kappa}{\alpha_{l} C_f^A}} \psi
\end{align*}
Solve for the step size and we know that
\begin{align*}
    \eta_t = \left[\frac{\sqrt{2}}{\sqrt{2 - \kappa}}  - \frac{2\kappa}{2 - \kappa} \right]^t \sqrt{\frac{\kappa}{C_f^A}} h_0^{1/2} + 4 \frac{\kappa}{C_f^A} \psi.
\end{align*}

\vspace{20pt}

\end{proof}

\begin{remark}
Note that we do not have to discuss the drop step as in \cite{lacoste2015global}, since we concretely parametrize the step-size, instead of line-searching for one.
\end{remark}

\section{Appendix: the Missing proof of Corollary \ref{coro:minimax-linear-regression}}
\begin{corollary} \textbf{((\Cref{coro:minimax-linear-regression}) restate: matching the minimax statistical rate for linear regression over $\ell_1$ ball)}
\\\textbf{A.}
Under the same conditions as in \Cref{proposition:RLMO-implies-RASC}-A and \Cref{thm-robust-DICG-maintext},
given $n = \Omega(\text{card}(\param^*) \log d + D \log d )$ and $\epsilon \leq 1 / (D \log(n d))$,
RLMO satisfies RASC with $\psi = O\left(D \sigma \left(\epsilon \log(n d) + \sqrt{\log d/n}\right)  \right)$, and hence
Robust-DICG (\Cref{alg:robust-DICG}) will converge to $\hat{\param}$ such that $\| \hat{\param} - \param^* \|_2 = O (\sigma D \sqrt{\log d/n})$. 
\\\textbf{B.}
Under the same conditions as in \Cref{proposition:RLMO-implies-RASC}-B and \Cref{thm-robust-DICG-maintext},
given $n = \Omega({ \text{card}(\param^*) \log d + D \log d})$,
RLMO satisfies RASC with 
$\psi = O \left( D \sigma \sqrt{\log d/n}  \right)$
% $\psi = O \left( D \sigma \sqrt{(\|\Sigma_{\sample}\|_{op} \log d)/n}  \right)$
, and hence
Robust-DICG (\Cref{alg:robust-DICG}) will converge to $\hat{\param}$ such that $\| \hat{\param} - \param^* \|_2 = O (\sigma D \sqrt{\log d/n})$. 
\end{corollary}
\begin{proof}
Combining \Cref{proposition:RLMO-implies-RASC} and \Cref{thm-robust-DICG-maintext}, we can show this corollary.
For both Corollary \ref{coro:minimax-linear-regression}-A and Corollary \ref{coro:minimax-linear-regression}-B, the sample size required is $n = \Omega(\text{card}(\param^*) \log d + D \log d )$. In fact  $\Omega(\text{card}(\param^*) \log d $ comes from the stability theorem where we require $\theta$ to be lower bounded.

\end{proof}

\section{Appendix: Technical Lemmas}

\begin{lemma} \label{lemma:tech:1}
for $ 0 \leq \kappa \leq 1$
\begin{align*}
    \frac{\sqrt{2}}{\sqrt{2 - \kappa}} - \frac{2\kappa}{2 - \kappa} \leq 1
\end{align*}
\end{lemma}
\begin{proof}
Let $x = 2 - \kappa$. Then $x \in [1,2]$.
\begin{align*}
    & \frac{\sqrt{2}}{\sqrt{2 - \kappa}} - \frac{2\kappa}{2 - \kappa}  \\
    = & \frac{\sqrt{2}}{\sqrt{x}} - \frac{4-2x}{x}\\
    = & \sqrt{\frac{2}{x}} - \frac{4}{x} + 2
\end{align*}
The above quantity is quadratic with respect to $1/\sqrt{x}$, and we know that $1/\sqrt{x} \in [1/\sqrt{2}, 1]$.
It's easy to check that that maximum is taken when $x = 2$. Hence
\begin{align*}
    \max_{x\in[1,2]} \sqrt{\frac{2}{x}} - \frac{4}{x} + 2 = 1 - 2 + 2 \leq 1.
\end{align*}
\end{proof}

\section{Appendix: Experiment Setup}
We mainly describe the setup of the experiments discussed in \Cref{sec:experiment}.

When showing linear convergence in solving LASSO (as in \Cref{fig:pfw_lin_cov}), we set the number of samples to be $300$, the dimension to be $500$, and the sparsity to be $20$. 
The observation noise follows $N(0, \sigma)$, and $\sigma$ equals $0, 0.001, 0.01, 0.1$ respectively.

For the heavy-tail scenario (as in \Cref{fig:pfw_MOM}), we set the dimension to be $1000$ and vary the sample size as shown in the figure. We set the sparsity to be $10$ and the variance of the noise to be $0.01$. We sample both the observation matrix and the observation noise from a standard log-normal distribution. For the corresponding lasso case, we fix the dimension and the observation noise level, but sample the observation matrix and the observation noise from a standard normal distribution.

When showing linear convergence in solving Haar signal recovery (as in \Cref{fig:haar_lin_cov}), we set the number of samples to be $300$, the dimension to be $500$, and the sparsity over discrete Haar wavelet to be $25$. 
The observation noise follows $N(0, \sigma)$, and $\sigma$ equals $0, 0.001, 0.01, 0.1$ respectively.

\end{document}

% --- supplement: appendix.tex ---

\onecolumn
\icmltitle{Robust Statistical Estimation under Atomic Norm Constraints \\ Appendix}

% It is OKAY to include author information, even for blind
% submissions: the style file will automatically remove it for you
% unless you've provided the [accepted] option to the icml2020
% package.

% List of affiliations: The first argument should be a (short)
% identifier you will use later to specify author affiliations
% Academic affiliations should list Department, University, City, Region, Country
% Industry affiliations should list Company, City, Region, Country

% You can specify symbols, otherwise they are numbered in order.
% Ideally, you should not use this facility. Affiliations will be numbered
% in order of appearance and this is the preferred way.
\icmlsetsymbol{equal}{*}

\begin{icmlauthorlist}
\icmlauthor{Aeiau Zzzz}{equal,to}
\icmlauthor{Bauiu C.~Yyyy}{equal,to,goo}
\icmlauthor{Cieua Vvvvv}{goo}
\icmlauthor{Iaesut Saoeu}{ed}
\icmlauthor{Fiuea Rrrr}{to}
\icmlauthor{Tateu H.~Yasehe}{ed,to,goo}
\icmlauthor{Aaoeu Iasoh}{goo}
\icmlauthor{Buiui Eueu}{ed}
\icmlauthor{Aeuia Zzzz}{ed}
\icmlauthor{Bieea C.~Yyyy}{to,goo}
\icmlauthor{Teoau Xxxx}{ed}
\icmlauthor{Eee Pppp}{ed}
\end{icmlauthorlist}

\icmlaffiliation{to}{Department of Computation, University of Torontoland, Torontoland, Canada}
\icmlaffiliation{goo}{Googol ShallowMind, New London, Michigan, USA}
\icmlaffiliation{ed}{School of Computation, University of Edenborrow, Edenborrow, United Kingdom}

\icmlcorrespondingauthor{Cieua Vvvvv}{c.vvvvv@googol.com}
\icmlcorrespondingauthor{Eee Pppp}{ep@eden.co.uk}

% You may provide any keywords that you
% find helpful for describing your paper; these are used to populate
% the "keywords" metadata in the PDF but will not be shown in the document
\icmlkeywords{Machine Learning, ICML}

\vskip 0.3in

% this must go after the closing bracket ] following \twocolumn[ ...

% This command actually creates the footnote in the first column
% listing the affiliations and the copyright notice.
% The command takes one argument, which is text to display at the start of the footnote.
% The \icmlEqualContribution command is standard text for equal contribution.
% Remove it (just {}) if you do not need this facility.

\newpage
\section{Appendix: Robust mean estimation to robustify PCG and DICG}
\label{appendix-sec:alg-extension}

In this appendix  section we introduce in detail the algorithms described in \Cref{sec:algo-extension}, and how to establish the convergence guarantee in a projection free framework.

\begin{algorithm}[H]
\caption{Robust Pairwise Conditional Gradient (Robust-PCG-2)}
\begin{algorithmic}[1]
\STATE
\textbf{Input}: sequence of step-sizes $\{\eta_t\}_{t\geq 1}$ 
\STATE
\textbf{Init.} A multivariate dimensional robust mean estimation function $R$
\STATE
Let $\param_1$ be a vertex in  $\mathcal{A}=\{\ba_i\}_i$
\FOR{$t=1, \cdots, T$}
    \STATE
    Maintain the convex decomposition $\param_t = \sum_{i\in K_t} c_t^i \ba_i$ where $c_t^i > 0$ using $R$
    \STATE
    From $\{\nabla f_i (\param_t)\}_{i=1}^N$ obtain a robust gradient estimate $\Gtil_t$.
    \STATE
    $\bv_t^+ = \argmin_{\ba\in \Acal} \langle \Gtil_t, \ba \rangle $
    \STATE
    $j_t = \argmax_{j \in k_t} \langle \Gtil_t, \ba_j \rangle $
    \STATE
    $\bv_t^- = \ba_{j_t}$
    \STATE
    Trim the step-size $\tilde{\eta}_t = \min (\eta_t, a_t^{i^-})$
    \STATE
    $\param_{t+1} = \param_{t} + \tilde{\eta}_t (\bv_t^+ - \bv_t^-)$
\ENDFOR
\end{algorithmic}
\label{alg:robust-PCG-2}
\end{algorithm}

\begin{algorithm}[H]
\caption{Robust Decomposition-invariant Pairwise Conditional Gradient (Robust-DICG-2) }
\begin{algorithmic}[1]
\STATE
\textbf{Input}: sequence of step-sizes $\{\eta_t\}_{t\geq 1}$ 
\STATE
\textbf{Init.} A multivariate dimensional robust mean estimation function $R$
\STATE
Let $\param_1$ be a vertex in  $\mathcal{A}=\{\ba_i\}_i$
\FOR{$t=1, \cdots, T$}
    \STATE
    From $\{\nabla f_i (\param_t)\}_{i=1}^N$ obtain a robust gradient estimate $\Gtil_t$ using $R$
    \STATE
    $\bv_t^+ = \argmin_{\ba\in \Acal} \langle \Gtil_t, \ba \rangle $
    \STATE
    \textbf{Define ${\bg}$: ${g}_{(i)} = \Gtil_t$ if ${x_{t}}_{(i)} > 0$ else $-\infty$ }
    \STATE
    $\bv_t^- = \argmin_{\ba\in \Acal} \langle \tilde{\bg}_{(i)}, \ba \rangle $
    \STATE
    % Option 1: 
    let $\delta_t$ be the smallest natural such that $2^{-\delta_t} \leq \eta_t$, and  $\tilde{\eta_t} = 2^{-\delta_t}$. 
    % \STATE
    % Option 2: $\gamma_t = \max_{\gamma \in [0,1]}\{\param_t + \gamma (\bv_t^+ - \bv_t^-) \geq 0\}$ and  $\tilde{\eta_t} = \min_{\eta \in (0, \gamma_t]} F(\param_t + \eta (\bv_t^+ - \bv_t^-))$. 
    \STATE
    $\param_{t+1} = \param_{t} + \tilde{\eta}_t (\bv_t^+ - \bv_t^-)$
\ENDFOR
\end{algorithmic}
\label{alg:robust-DICG-2}%}
\end{algorithm}

Note that in line $6$ in \Cref{alg:robust-PCG-2} and line $5$ in \Cref{alg:robust-DICG-2}, we keep the $R$ as a placeholder. One can make use of any multi-dimensional robustifier and obtain the corresponding guarantee. In the following we use the robustifiers (Algorithm 2 and Algorithm 3) as in \cite{prasad2018robust} as a running example.

Denote the estimator generated by Algorithm 2 and Algorithm 3 in \cite{prasad2018robust} as $\Gtil_t$. 
One can simply apply the Lemma 1 and Lemma 2 from \cite{prasad2018robust} (and bound the covariance of the gradient as in \Cref{appendix-thm:RLMO-implies-RASC-heavy-tail}) to obtain that $\|G_t - \Gtil_t\| \leq  \theta \| \param - \param^* \| + \psi$. This is call the \textit{Gradient Estimator} condition as in Definition 1 in \cite{prasad2018robust}. We call this condition \textit{Robust Mean Estimation Condition}.

We show in \Cref{claim-RMS-implies-RASC} that \textit{Robust Mean Estimation Condition} implies RASC. Denote
$\bvtil_t^+ = \argmin_{\bv\in\Acal} \langle \Gtil_t, \bv \rangle$  and
$\bvtil_t^- = \argmax_{\bv\in \text{active set of } \param_t} \langle \Gtil_t, \bv \rangle$, 
then
\begin{align*}
    |\langle \bG_t, \bvtil_t^+ - \bvtil_t^- \rangle -
    \langle \bG_t, \bv_t^+ - \bv_t^- \rangle|
    \leq 4 \theta D\| \param - \param^* \|_2 + 4 D\psi
\end{align*}
% \end{proof}

Now we can just apply the stability theorem of DICG and PCG respectively to obtain the linear convergence.

\section{Appendix: More about Robust Atom Selection Condition (RASC) - the missing proof in section \ref{sec:RASC}}
\label{sec:missing-proof-RASC}

In the main text we informally mention that Robust Linear Minimization Oracle (RLMO) can set up Robust Atom Selection Condition (RASC) for us. In this appendix section, we are going to make it precise.

We will split into two settings: the heavy tail setting, and the Huber's corruption model setting. We will first state a general theorem (RLMO implies RASC), and specifically address the linear regression case as an example.

\subsection{RLMO implies RASC under heavy tail model}

\begin{theorem}
\label{appendix-thm:RLMO-implies-RASC-heavy-tail}
\textbf{(RLMO implies RASC under heavy-tail-model)}
For Problem \ref{eqn:structural_stat_est} with $D$ being the diameter of the constraint set $\Mcal$, denote the ground truth gradient at iteration $t$ as $\bg \in \mathbb{R}^d$. Denote its mean as $\bG$.
Suppose its covariance $\Sigma = \E [(\bg - \bG)(\bg - \bG)^T]$ is decaying with respect to $t$ under operator norm: $\| \Sigma \|_{\rm op} \leq C_1 \| \param_t - \param^*\|_2 + C_2$. 
Given $n > \ceil{72 \log |\Acal|}$ independent samples of $\bg$, denoted as $\{\bg_i\}_{i=1}^n$,
the Robust Linear Optimization Oracle with Median of Mean as the one-dimensional robustifier (as in Section \ref{sec:robust-mean-estimation} and \ref{sec:robust-lmo}) with $K = \ceil{18 \log |\Acal|}$ , will output $\bvtil^+$ and $\bvtil^-$ that satisfy RASC:
\begin{align*}
    % \underset{\bv \in \Acal}{\max} 
    |\langle \bG, \bvtil^+ - \bvtil^- \rangle -
    \langle \bG, \bv^+ - \bv^- \rangle|
    &\leq  48  D\| \Sigma \|_{\rm op} \sqrt{\frac{\ceil{\log |\Acal|}}{n}}  \\
    &\leq 48  D  \sqrt{\frac{\ceil{\log |\Acal|}}{n}} (C_1 \| \param_t - \param^*\|_2 +  C_2)
\end{align*}
with probability at least $ 1 - |\Acal|^{-3}$.
\end{theorem}

We need concentration lemma in order to setup this theorem.

\label{sec:RLMO-implies-RASC-heavy-tail}
\begin{lemma}
\label{lemma-MOM-concentration}
\textbf{(Proposition 5 in \cite{hsu2016loss} restate)}
Let $x$ be a random variable with mean $\mu$ and variance $\sigma^2 \leq \infty$, and let $S$ be
a set of $n$ independent copies of $x$. 
Assume $k \leq n/2$. With probability at least $1 - e^{-k/4.5}$, the estimate $\tilde{\mu}$ returned by the Median of Mean algorithm (as in Section \ref{sec:robust-mean-estimation}) on input $(S, k)$ satisfies $\tilde{\mu} - \mu \leq \sigma \sqrt{8 k /n}$.
Therefore, if $k = \ceil{4.5 \log (1/\delta)}$ and $n \geq \ceil{18 \log (1/\delta)}$, then with probability at least $1 - \delta$, 
\begin{align*}
    | \tilde{\mu} - \mu | \leq 6 \sigma  \sqrt{\frac{\ceil{\log(1/\delta)}}{n}}.
\end{align*}
\end{lemma}

\begin{corollary}
\label{coro:MOM-proj}
Suppose a random vector $\bg$ has mean $\bG$, and its covariance $\Sigma = \E [(\bg - \bG)(\bg - \bG)^T]$ is bounded: $\| \Sigma \|_{\rm op} \leq \infty$. Let $\bv$ be a vector in domain $\Mcal$ with diameter $D$. That is, $\max_{\bv \in \Mcal} \| \bv \|= D$. Suppose we are given $n > \ceil{72 \log d}$ independent samples of $\bg^T \bv$, and try to estimate $\bG^T \bv$. After performing the Median of Mean (as in Section \ref{sec:robust-mean-estimation}) with $K = \ceil{18 \log d}$ to $n$ independent samples of $\bg^T \bv$, we would have $\tilde{x}$ as an estimator of $\bG^T \bv$, such that with probability at least $ 1 - d^{-4}$ 
\begin{align*}
    \left| \tilde{x} - \bG^T \bv \right| \leq 12 D\| \Sigma \|_{\rm op} \sqrt{\frac{\ceil{\log d}}{n}}.
\end{align*}
\end{corollary}
\begin{proof}
By the linearity of expectation, we know that $\E [\bg^T v] = \E [\bG^T v]$.
We then bound the variance: 
\begin{align*}
  & \sup_{\bv \in \Omega} \E [( \bg^T \bv - \bG^T \bv )^2] \\
 = & \sup_{\bv \in \Omega} \left[\bv^T \E[(\bg - \bG)(\bg - \bG)^T] \bv \right] \\
 = & D^2 \| \Sigma \|_{\rm op}^2
\end{align*}
By Lemma \ref{lemma-MOM-concentration}, we know that if $k = \ceil{18 \log d}$ and $n \geq \ceil{72 \log d}$, then with probability at least $1 - d^{-4}$, the output of the MOM algorithm for $\tilde{x}$ will satisfy
\begin{align*}
    \left| \tilde{x} - \bG^T \bv \right| \leq 12 D\| \Sigma \|_{\rm op} \sqrt{\frac{\ceil{\log d}}{n}}.
\end{align*}
\end{proof}

\begin{proof} \textbf{(Proof for \Cref{appendix-thm:RLMO-implies-RASC-heavy-tail}: RLMO implies RASC under heavy-tail-model)} \\
For each atom $\bv_i \in \Acal$, we invoke Corollary \ref{coro:MOM-proj}, and have
\begin{align}
\label{eqn:directional-guranttee}
    \left| \widetilde{r_i} - \bG^T \bv_i \right| \leq 12 D\| \Sigma \|_{\rm op} \sqrt{\frac{\ceil{\log |\Acal|}}{n}}
\end{align}
each with probability at least $1 - |\Acal|^{-4}$, where  $\widetilde{r_i}$ is the MOM method output.
By taking union bound over all $i$, we know that equation \ref{eqn:directional-guranttee} holds for all $i$ with probability at least $1 - |\Acal|^{-3}$. 
That is, with probability at least $1 - |\Acal|^{-3}$ we have
\begin{align}
\label{eqn:sup-directional-guranttee}
    \sup_i  \left| \widetilde{r_i} - \bG^T \bv_i \right| \leq 12 D\| \Sigma \|_{\rm op} \sqrt{\frac{\ceil{\log |\Acal|}}{n}}.
\end{align}
Since $\widetilde{r_i}$ is the median of mean of a set of $\{\bg_j^T \bv_i\}_{j=1}^n$, there is a corresponding sub set $\Gcal_i$  of $\{\bg_i\}_{i=1}^n$, such that 
$$\widetilde{r_i} = \left(\frac{1}{|\Gcal_i|}\sum_{j \in |\Gcal_i|} \bg_j\right)^T \bv_i.$$
Consider finding the FW-atom, and denote the return value of the RLMO as $i^* = \argmin_i \widetilde{r_i}$. 
Then $r_{i^*}$ corresponds to the gradient $\left(\frac{1}{|\Gcal_{i^*}|}\sum_{j \in |\Gcal_{i^*}|} \bg_j\right)$.
\begin{align*}
    r_{i^*} = \left(\frac{1}{|\Gcal_{i^*}|}\sum_{j \in |\Gcal_{i^*}|} \bg_j\right)^T \bv_{i^*} \overset{\text{denote}}{=} \bG_1^T \bv_i.
\end{align*}
We said $r_{i^*}$ corresponds to the gradient $\left(\frac{1}{|\Gcal_{i^*}|}\sum_{j \in |\Gcal_{i^*}|} \bg_j\right)$, and denote the gradient as $\bG_1$.
Let the corresponding atom as $\bvtil^+ = \bv_{{i}^*}$.
Suppose $\bv^+ = \bv_k$. Then  we denote $\bG_2 = \left(\frac{1}{|\Gcal_k|}\sum_{j \in |\Gcal_k|} \bg_j\right)$.
We do the same thing and obtain the Away-Atom $\bvtil^-$. Denote the gradient corresponding to  $\bvtil^-$ as $\bG_3$, and denote the gradient correpsonding to $\bv^-$ as $\bG_4$.

Before we proceed and get RASC, we need to understand the property of the $\bvtil^+$ and $\bvtil^-$ that we found. They have two properties. 
The first one, is that according to the way that we pick $\bvtil$, we have
\begin{align*}
    \langle \bG_1, \bvtil^+ \rangle \leq \langle \bG_2, \bv^+ \rangle \\
    \langle \bG_3, \bvtil^- \rangle \geq \langle \bG_4, \bv^- \rangle
\end{align*}
The second one, is that
\begin{align}
\label{eqn:fw-atom-property1}
     |\langle \bG - \bG_1, \bvtil^+ \rangle|
    = |\bG^T\bv_{i^*} - r_{i^*}|
    \leq  12 D\| \Sigma \|_{\rm op} \sqrt{\frac{\ceil{\log |\Acal|}}{n}} 
\end{align}
where the equality follows from the definition of $\bG_i$) and the inequality follows from Equation \ref{eqn:sup-directional-guranttee}). Similarly we have
\begin{align}
\label{eqn:away-atom-property1}
     |\langle \bG - \bG_3, \bvtil^- \rangle|
    \leq  12 D\| \Sigma \|_{\rm op} \sqrt{\frac{\ceil{\log |\Acal|}}{n}}.
\end{align}

Now we are ready to set up RASC. We want to bound
\begin{align*}
    % \underset{\bv \in \Acal}{\max} 
    |\langle \bG, \bvtil^+ - \bvtil^- \rangle -
    \langle \bG, \bv^+ - \bv^- \rangle|
\end{align*}
Simply re-organize the terms:
\begin{align*}
    &\langle \bG, \bvtil^+ - \bvtil^- \rangle -
    \langle \bG, \bv^+ - \bv^- \rangle \\
    = & \langle \bG, \bvtil^+ - \bv^+ \rangle +
    \langle \bG, - \bvtil^- + \bv^- \rangle
\end{align*}
We first consider the first term, as the second term follows a similar treatment. 
Then
\begin{align*}
    & 2 \langle \bG, \bvtil^+ - \bv^+ \rangle \\
    = &  \langle \bG - \bG_1, \bvtil^+ - \bv^+ \rangle +
    \langle \bG - \bG_2, \bvtil^+ - \bv^+ \rangle +
    \langle \bG_1, \bvtil^+ - \bv^+ \rangle +
    \langle \bG_2, \bvtil^+ - \bv^+ \rangle \\
    \leq & \langle \bG - \bG_1, \bvtil^+ - \bv^+ \rangle +
    \langle \bG - \bG_2, \bvtil^+ - \bv^+ \rangle +
    \langle \bG_2 - \bG_1, \bv^+ \rangle +
    \langle \bG_2 - \bG_1, \bvtil^+ \rangle \\
    % = & \langle \bG, 2 \bvtil^+ - 2 \bv^+ \rangle +
    % \langle \bG_1, - \bvtil^+ + \bv^+ - \bv^+ - \bvtil^+ \rangle +
    % \langle \bG_2,  - \bvtil^+ + \bv^+ + \bv^+ + \bvtil^+\rangle \\
    = & 2 \langle \bG,  \bvtil^+ -  \bv^+ \rangle +
    2 \langle \bG_1, - \bvtil^+  \rangle +
    2 \langle \bG_2,  \bv^+ \rangle \\
    = & 2 \langle \bG - \bG_1, \bvtil^+ \rangle - 2 \langle \bG - \bG_2, \bv^+ \rangle
\end{align*}
Hence we have
\begin{align*}
    & |\langle \bG, \bvtil^+ - \bv^+ \rangle| \\
    \leq & | \langle \bG - \bG_1, \bvtil^+ \rangle | + |\langle \bG - \bG_2, \bv^+ \rangle| \\
    \leq &  24 D\| \Sigma \|_{\rm op} \sqrt{\frac{\ceil{\log |\Acal|}}{n}} 
\end{align*}

Similarly we can show a good bound for the second term:
\begin{align*}
    \langle \bG, \bvtil^- - \bv^- \rangle
    \leq | \langle \bG - \bG_3, \bvtil^+ \rangle | + |\langle \bG - \bG_4, \bv^+ \rangle|
    \leq &  24 D\| \Sigma \|_{\rm op} \sqrt{\frac{\ceil{\log |\Acal|}}{n}} 
\end{align*}
where $\bG_3$ and $\bG_4$ are the mean gradient corresponding to $\bvtil^-$ and $\bv^-$.

Hence overall we can setup that
\begin{align*}
    % \underset{\bv \in \Acal}{\max} 
    |\langle \bG, \bvtil^+ - \bvtil^- \rangle -
    \langle \bG, \bv^+ - \bv^- \rangle|
    \leq  48  D\| \Sigma \|_{\rm op} \sqrt{\frac{\ceil{\log |\Acal|}}{n}} .
\end{align*}
\end{proof}

\paragraph{Now let's see constrained linear regression as a running example.\\ \\}

As long as we can bound the covariance of the gradient, we can apply the general theorem to establish RASC. In the following lemma, we bound the covariance of the gradient.

\begin{lemma}
\label{appendix-lemma-cov-lin-reg}
Consider linear regression: $y = \langle \param^*, \sample \rangle + \xi$, where the observation $\sample$ is a random variable comes from a distribution with bounded covariance $\Sigma_{\sample}$ (i.e. $\|\Sigma_{\sample} \|_{\rm op} \leq \infty$) under the heavy tail model \Cref{def:heavy-tail-samples}, and $\xi$ is zero mean heavy tail noise with variance $\sigma$. 
Let the gradient evaluated at $\beta_t$ be $\bg_t$, and its mean be $\bG_t$.
Then $\bg_t$ has a bounded covariance:
\begin{align*}
    \|\E (\bg_t - \bG_t) (\bg_t - \bG_t)^T \|_{\rm op} 
    = O\left(  \| \Sigma_{\sample}\|_{\rm op}^2 \|\param_t - \param^* \|_2^2 + \sigma^2 \| \Sigma_{\sample}\|_{\rm op} \right)
\end{align*}
\end{lemma}
\begin{proof}
Note that this directly follows from the proof of Proposition A.2 in \cite{RDC}. But we provide the details here for completeness.
Omit the sub-script $t$, The population gradient evaluated at iteration $t$ is 
$$\bg = \sample(\sample^T \param - y),$$
where $y = \langle \param^*, \sample \rangle + \xi$.
Denote $\param - \param^*$ as $\Delta$. 
Then
\begin{align*}
    & \|\E (\bg - \bG) (\bg - \bG)^T \|_{\rm op} \\
    = & \|\E (\sample \sample^T \Delta - \sample \xi - \Sigma_{\sample} \Delta) (\sample \sample^T \Delta - \sample \xi - \Sigma_{\sample} \Delta)^T \|_{\rm op}  \\
    \leq & \| \E \left[ 
        \left( \sample \sample^T - \Sigma_{\sample}\right) \Delta \Delta^T \left( \sample \sample^T - \Sigma_{\sample}\right)^T
        \right] \|_{\rm op} +
    \|\E\left[ \xi \sample \sample^T \right] \|_{\rm op}\\
    \leq &  \sup_{\bv } \bv^T  \E \left[ 
        \left( \sample \sample^T - \Sigma_{\sample}\right) \Delta \Delta^T \left( \sample \sample^T - \Sigma_{\sample}\right)^T
        \right]  \bv 
        + \sigma^2 \|\Sigma_{\sample} \|_{\rm op} \\
    = & \sup_{\bv } \left\langle \Delta \Delta^T ,   \E \left[ 
        \left( \sample \sample^T - \Sigma_{\sample}\right) \bv \bv^T \left( \sample \sample^T - \Sigma_{\sample}\right)^T
        \right]  \right \rangle
        + \sigma^2 \|\Sigma_{\sample} \|_{\rm op} \\
    \leq & \|\Delta \|_2^2 \sup_{\bv_1, \bv_2 }   \E \left[ \bv_1^T
        \left( \sample \sample^T - \Sigma_{\sample}\right) \bv_2 \right]^2
        + \sigma^2 \|\Sigma_{\sample} \|_{\rm op} \quad \quad \text{(by Holder's Inequality)}  \\
    \leq & 2 \|\Delta \|_2^2 \left( \sup_{\bv_1, \bv_2 }   \E \left[ \bv_1^T
        \left( \sample \sample^T\right) \bv_2 \right]^2 + \|\Sigma_{\sample} \|_{\rm op}^2 \right)
        + \sigma^2 \|\Sigma_{\sample} \|_{\rm op} \quad \quad \text{(by $(a - b)^2 \leq 2 a^2 + 2b^2$)}  \\
    \leq & 2 (C_{dummy} + 1) \| \Sigma_{\sample}\|_{\rm op}^2 \|\Delta \|_2^2 + \sigma^2 \| \Sigma_{\sample}\|_{\rm op} \quad \quad \text{(by the bounded $4$-th moment assumption)} \\
\end{align*}

\end{proof}

\begin{theorem} (\textbf{\Cref{proposition:RLMO-implies-RASC}}-B restate.)
Under Model \ref{model:linear-regression} and the heavy-tail model (\Cref{def:heavy-tail-samples}),
given $n > \ceil{72 \log |\Acal|}$ independent samples of $\sample$,
the Robust Linear Optimization Oracle with Median of Mean as the one-dimensional robustifier (as in Section \ref{sec:robust-mean-estimation} and \ref{sec:robust-lmo}) with $K = \ceil{18 \log |\Acal|}$ , will output $\bvtil^+$ and $\bvtil^-$ at iteration $t$ that satisfy RASC:
\begin{align*}
    % \underset{\bv \in \Acal}{\max} 
    |\langle \bG_t, \bvtil^+ - \bvtil^- \rangle -
    \langle \bG_t, \bv^+ - \bv^- \rangle|
    = O \left( D  \sqrt{\frac{\ceil{\log |\Acal|}}{n}} \left( \| \Sigma_{\sample}\|_{\rm op} \|\param_t - \param^* \|_2 + \sqrt{\sigma^2  \| \Sigma_{\sample}\|_{\rm op}} \right)\right)
\end{align*}
with probability at least $ 1 - |\Acal|^{-3}$.
\end{theorem}

\begin{proof}
Combine Lemma \ref{appendix-lemma-cov-lin-reg} and Theorem \ref{appendix-thm:RLMO-implies-RASC-heavy-tail}, we have
\begin{align*}
    % \underset{\bv \in \Acal}{\max} 
    &|\langle \bG_t, \bvtil^+ - \bvtil^- \rangle -
    \langle \bG_t, \bv^+ - \bv^- \rangle| \\
    = &O \left( D  \sqrt{\frac{\ceil{\log |\Acal|}}{n}} \sqrt{\| \Sigma_{\sample}\|_{\rm op}^2 \|\param_t - \param^* \|_2^2 + \sigma^2  \| \Sigma_{\sample}\|_{\rm op}} \right) \\
    \leq & O \left( D  \sqrt{\frac{\ceil{\log |\Acal|}}{n}} \left( \| \Sigma_{\sample}\|_{\rm op} \|\param_t - \param^* \|_2 + \sqrt{\sigma^2  \| \Sigma_{\sample}\|_{\rm op}} \right)\right)
\end{align*}
\end{proof}

\subsection{RLMO implies RASC under arbitrary-corruption-model}
\label{sec:RLMO-implies-RASC-eps-corrupt}

As we have seen how we can setup RLMO implies RASC under heavy tail model, it is almost the same for Huber's corruption model, except that we now need to setup concentration of the gradient estimation using the trimmed mean algorithm. Luckily, this have been setup by previous work.

\begin{lemma}
\label{appendix-lemma-subGaussian-trimmed-mean-est}
\textbf{ (Restating Lemma A.2 in \cite{RDC})}
Suppose a random vector $\bg$ has mean $\bG$.
For some vector $\bv$ in domain $\Mcal$ with diameter $D$, i.e.,  $\max_{\bv \in \Omega} \| v \|= D$, $\bg^T \bv$ is $\nu D$-sub-exponential with mean $\bG^T \bv$.
Given $n = \Omega(\log (1/\delta))$ $\epsilon$-corrupteded samples as described in \Cref{sec:corruption_model}, we want to estimate $\bG^T \bv$.
After performing the Trimmed-Mean operation (as in Section \ref{sec:robust-mean-estimation}) with to the $n$ independent samples of $\bg^T \bv$, we would have $\tilde{x}$ as an estimator of $\bG^T \bv$, such that with probability at least $1 - \delta$
\begin{align*}
    \left| \tilde{x} - \bG^T \bv \right| = O \left( \nu D\left( \epsilon \log(n  (1/\delta)) + \sqrt{\frac{\ceil{\log  (1/\delta)}}{n}}\right) \right) .
\end{align*}
\end{lemma}

\begin{theorem}
\textbf{(RLMO implies RASC under arbitrary corruption)}
For Problem \ref{eqn:structural_stat_est} with $D$ being the diameter of the constraint set $\Mcal$,
denote the ground truth gradient at iteration $t$ as $\bg \in \mathbb{R}^d$, with mean $\bG$.
Suppose for all $\bv \in \Acal$, $\bg^T \bv$ is $\nu D$-sub-exponential with $\nu \leq C_1 \| \param_t - \param^*\| + C_2$ for some constant $C_1$ and $C_2$.
Given $n = \Omega(\log |\Acal|)$ independent samples of $\bg$, denoted as $\{\bg_i\}_{i=1}^n$,
the Robust Linear Optimization Oracle with trimmed mean as the one-dimensional robustifier (as in Section \ref{sec:robust-mean-estimation} and \ref{sec:robust-lmo})  will output $\bvtil^+$ and $\bvtil^-$ that satisfy RASC:
\begin{align*}
    % \underset{\bv \in \Acal}{\max} 
    |\langle \bG, \bvtil^+ - \bvtil^- \rangle -
    \langle \bG, \bv^+ - \bv^- \rangle|
    \leq  C_{\text{dummy}}  D\left(\epsilon \log(n |\Acal|) + \sqrt{\frac{\ceil{\log |\Acal|}}{n}}\right) \cdot \left( C_1 \| \param_t - \param^*\| + C_2 \right)
\end{align*}
with probability at least $ 1 - |\Acal|^{-3}$
\label{appendix-thm:RLMO-implies-RASC-eps}
\end{theorem}
\begin{proof}
For each atom $\bv_i \in \Acal$, we apply  \Cref{appendix-lemma-subGaussian-trimmed-mean-est} with $\delta = |\Acal|^{-4}$, and have
\begin{align}
\label{eqn:directional-guranttee-trM}
    \left| \widetilde{r_i} - \bG^T \bv_i \right| \leq
     C_{\text{dummy}} \nu D\left(\epsilon \log(n |\Acal|) + \sqrt{\frac{\ceil{\log |\Acal|}}{n}}\right) ,
\end{align}
each with probability at least $1 - |\Acal|^{-4}$, where  $\widetilde{r_i}$ is the output of the trimmed mean estimator. We use $C_{\text{dummy}}$ to get rid of the big-oh notation.

The remaining of the proof follows similarly to that in \Cref{appendix-thm:RLMO-implies-RASC-heavy-tail} above. We have
\begin{align*}
    % \underset{\bv \in \Acal}{\max} 
    |\langle \bG, \bvtil^+ - \bvtil^- \rangle -
    \langle \bG, \bv^+ - \bv^- \rangle|
    \leq  C_{\text{dummy}} \nu D\left(\epsilon \log(n |\Acal|) + \sqrt{\frac{\ceil{\log |\Acal|}}{n}}\right).
\end{align*}
Plug in the condition ofr $\nu$ and we finish the proof.
\end{proof}

\paragraph{Now let's see constrained linear regression as a running example.\\ \\}

Under heavy-tail model, we bound the covariance and then apply the general theorem.
Under Huber's corruption model, we bound the sub-Gaussian parameter in order to apply the general theorem.

\begin{lemma}
Under Model \ref{model:linear-regression} and $\epsilon$-corruption model (\Cref{def:eps-corrupted-samples}), we further assume that the samples are sub-Gaussian. Let the gradient evaluated at $\beta_t$ be $\bg_t$, and its mean be $\bG_t$. 
Then $\bg_t^T \bv$ is $\nu D$-sub-exponential for any $\bv \in \Acal$, with 
\begin{align*}
    \nu = O\left(\sqrt{\| \beta_t - \beta^* \|_2^2 + \sigma^2} \right)
\end{align*}
\end{lemma}
\begin{proof}
This directly follows from the proof of Proposition A.1 in \cite{RDC}.
% \TODO{Liu: standard basis}
\end{proof}

\begin{theorem}
\textbf{(\Cref{proposition:RLMO-implies-RASC}-A restate)}
Under Model \ref{model:linear-regression} and $\epsilon$-corruption model (\Cref{def:eps-corrupted-samples}),
we further assume that the samples are sub-Gaussian.
Given $n > \Omega(\log |\Acal|)$ independent samples of $\sample$,
the Robust Linear Optimization Oracle with Trimmed Mean as the one-dimensional robustifier (as in Section \ref{sec:robust-mean-estimation} and \ref{sec:robust-lmo}) , will output $\bvtil^+$ and $\bvtil^-$ that satisfy RASC:
\begin{align*}
    % \underset{\bv \in \Acal}{\max} 
    |\langle \bG, \bvtil^+ - \bvtil^- \rangle -
    \langle \bG, \bv^+ - \bv^- \rangle|
    = O \left(D\left(\epsilon \log(n |\Acal|) + \sqrt{\frac{\ceil{\log |\Acal|}}{n}}\right) \cdot \left( \| \beta_t - \beta^* \|_2 +  \sigma \right) \right)
\end{align*}
with probability at least $ 1 - |\Acal|^{-3}$.
\end{theorem}

\begin{proof}
Combining above lemmas,
\begin{align*}
    % \underset{\bv \in \Acal}{\max} 
    & |\langle \bG, \bvtil^+ - \bvtil^- \rangle -
    \langle \bG, \bv^+ - \bv^- \rangle| \\
    = & O \left(D\sqrt{\| \beta_t - \beta^* \|_2^2 + \sigma^2} \left(\epsilon \log(n |\Acal|) + \sqrt{\frac{\ceil{\log |\Acal|}}{n}}\right) \right) \\
    \leq & O \left(D\left(\epsilon \log(n |\Acal|) + \sqrt{\frac{\ceil{\log |\Acal|}}{n}}\right) \cdot \left( \| \beta_t - \beta^* \|_2 +  \sigma \right) \right)
\end{align*}
\end{proof}
\newpage
\section{Appendix: Robust Descent Condition and the Robust Mean Estimation Condition implies RASC}
\label{sec:RASC-be-implied}

Now we setup that RASC can be implied from the Robust Descend Condition and the Robust Mean Estimation Condition.

\begin{claim}
\label{claim-RDC-implies-RASC}
\textbf{(Generalized Robust Descend Condition \cite{RDC} implies RASC)}
Suppose the function $F$ is $\alpha$-strongly convex. If one can find a gradient estimator $\gradest_t$ for the ground truth gradient $\bG_t$ evaluated at point $\param_t$ such that $\sup_{\bv \in \Acal} \langle \gradest_t - \bG_t, v \rangle \leq \theta \| \param_t - \param^* \| + \psi$, and suppose
$\bvtil_t^+ = \argmin_{\bv\in\Acal} \langle \gradest_t, \bv \rangle$  and
$\bvtil_t^- = \argmax_{\bv\in \text{active set of } \param_t} \langle \gradest_t, \bv \rangle$, 
then $\bvtil_t^+$ and $\bvtil_t^-$ satisfy RASC:
\begin{align*}
    |\langle \bG_t, \bvtil_t^+ - \bvtil_t^- \rangle -
    \langle \bG_t, \bv_t^+ - \bv_t^- \rangle|
    \leq 4 \theta \| \param_t - \param^* \|_2 + 4 \psi
\end{align*}
\end{claim}
\begin{proof}
\begin{align*}
    & \langle \bG_t, \bvtil_t^+ - \bvtil_t^- \rangle - \langle \bG_t, \bv_t^+ - \bv_t^- \rangle \\
    = & \langle \bG_t - \bGtil_t, \bvtil_t^+ - \bvtil_t^- \rangle - \langle \bG_t, \bv_t^+ - \bv_t^- \rangle + \langle \bGtil_t, \bvtil_t^+ - \bvtil_t^- \rangle \\
    \leq &  \langle \bG_t - \bGtil_t, \bvtil_t^+ - \bvtil_t^- \rangle - \langle \bG_t, \bv_t^+ - \bv_t^- \rangle + \langle \bGtil_t, \bv_t^+ - \bv_t^- \rangle \quad \quad \text{(By definition of $\bvtil_t^+$ and $\bvtil_t^-$)} \\
    = & \langle \bG_t - \bGtil_t, \bvtil_t^+ - \bvtil_t^- \rangle - \langle \bG_t - \bGtil_t, \bv_t^+ - \bv_t^- \rangle \\
    \leq & 4 \theta \| \param - \param^* \|_2 + 4 \psi \quad \quad \text{(By the generalized RDC )} \\
    % \leq & 4 \theta \sqrt{\frac{2}{\alpha}} h_t^{1/2} + 4 \psi \quad \quad \text{(By the smoothness of $F$)}
\end{align*}
\end{proof}
If we let $\Acal$ be the atoms of $\ell_1$ norm, the bound on $\sup_{\bv \in \Acal} \langle \gradest_t - \bG_t, v \rangle $ is $\| \gradest_t - \bG_t \|_{\infty}$, which is the exact condition as the Proposition A.1 in \cite{RDC}.
Note that the conversion from RDC to RASC does not incur extra factor on $\psi$, and therefore maintain the similar high dimension performance as RDC.
% Also note that we generalize the $\ell_1$ norm constrain to arbitrary convex hull constrain defined by atom set $\Acal$, and use a dual-norm type formulation: $\sup_{\bv \in \Acal} \langle \gradest_t - \bG_t, v \rangle $. 
% If we replace the Robust Descend Condition by this, it is not clear if the meta theorem in Robust-IHT remain holds \cite{RDC}.
% But Robust Linear Minimization Oracle can guarantee such a generalized Robust Descend Condition.

\begin{claim}
\label{claim-RMS-implies-RASC}
\textbf{(Robust Mean Estimation Condition \cite{prasad2018robust} implies RASC)}
Suppose the function $F$ is $\alpha$-strongly convex and let $D = \argmax_{\param, \by \in \text{conv}(\Acal)} \| \param - \by \|$. If one can find a gradient estimator $\gradest_t$ for the ground truth gradient $\bG_t$ evaluated at point $\param_t$ such that 
$\| \gradest_t - \bG_t \|_2 \leq \theta \| \param - \param^* \| + \psi$, and suppose
$\bvtil_t^+ = \argmin_{\bv\in\Acal} \langle \gradest_t, \bv \rangle$  and
$\bvtil_t^- = \argmax_{\bv\in \text{active set of } \param_t} \langle \gradest_t, \bv \rangle$, 
then $\bvtil_t^+$ and $\bvtil_t^-$ satisfy RASC:
\begin{align*}
    |\langle \bG_t, \bvtil_t^+ - \bvtil_t^- \rangle -
    \langle \bG_t, \bv_t^+ - \bv_t^- \rangle|
    \leq 4 \theta D\| \param - \param^* \|_2 + 4 D\psi
\end{align*}
\end{claim}
\begin{proof}
\begin{align*}
    & \langle \bG_t, \bvtil_t^+ - \bvtil_t^- \rangle - \langle \bG_t, \bv_t^+ - \bv_t^- \rangle \\
    = & \langle \bG_t - \bGtil_t, \bvtil_t^+ - \bvtil_t^- \rangle - \langle \bG_t, \bv_t^+ - \bv_t^- \rangle + \langle \bGtil_t, \bvtil_t^+ - \bvtil_t^- \rangle \\
    \leq &  \langle \bG_t - \bGtil_t, \bvtil_t^+ - \bvtil_t^- \rangle - \langle \bG_t, \bv_t^+ - \bv_t^- \rangle + \langle \bGtil_t, \bv_t^+ - \bv_t^- \rangle \quad \quad \text{(By definition of $\bvtil_t^+$ and $\bvtil_t^-$)} \\
    = & \langle \bG_t - \bGtil_t, \bvtil_t^+ - \bvtil_t^- \rangle - \langle \bG_t - \bGtil_t, \bv_t^+ - \bv_t^- \rangle \\
    \leq & \| \bG_t - \bGtil_t \|_2 \left( \|\bvtil_t^+ - \bvtil_t^-\|_2 + \|\bv_t^+ - \bv_t^-\|_2\right)  \quad \quad \text{(By the Cauthy's inequality)}\\
    \leq & 4 D \| \bG_t - \bGtil_t \|_2  \quad \quad \text{(By the definition of $D$)} \\
    % \leq & 4 D \theta \sqrt{\frac{2}{\alpha}} h_t^{1/2} + 4 D \psi \quad \quad \text{(By the smoothness of $F$)}
\end{align*}
\end{proof}
A few example for the diameter $D$: for $\ell_2$ norm ball, $D = \sqrt{2}$. The diameter $D$ is usually seen as a constant
(Garber and Meshi offered a detailed discussion \cite{garber2016linear}).
Invoking the Robust Mean Estimation Condition scarifies the high dimensional performance anyway \cite{prasad2018robust}, so such an extra factor is not crucial.

Again, as a caveat, although we introduce them as deterministic conditions, they usually have a probability nature.
That is, we usually say a robustness condition holds with a certain probability.
However in our proof, we are able to make sure that these conditions hold with high probability, and therefore the reader can see them as quasi-deterministic conditions, for the ease of understanding.

% \input{9.archiv_robust_est_of_atoms}
\newpage

\section{Appendix: Stability for Decomposition-invariant Pairwise Conditional Gradient}
\label{appendix-meta-thm-DICG}

The Decomposition-invariant Pairwise Conditional Gradient (DICG) (Algorithm \ref{alg:DICG} is proposed in \cite{garber2016linear}.
Although it has a slightly stronger assumption on the constraint set (see \cite{garber2016linear}) compared to the original linear convergence Frank Wolfe variants \cite{lacoste2015global}, 
they get rid of the dependency of the dimension in the convergence rate, and get rid of the exponential dependency on the atomic set for pairwise Frank Wolfe in the convergence rate \cite{lacoste2015global}. Besides, we also do not have to maintain an active set for current iterate $\param$, which reduce the memory complexity.

In this section we will setup that we can also robustify DICG algorithm.

% Let $h_t = F(\param_t) - F(\param^*)$. Suppose, instead of have accurate FW-Atom (solved by LMO) and Away-Atom, we only have estimate $\bvtil_t^+, \bvtil_t^-$ that satisfies the \textbf{Robust-Atom-Estimation-Condition}:
% \begin{align*}
%     \langle \nabla F(\param_t), \bvtil_t^+ - \bvtil_t^- \rangle -
%     \langle \nabla F(\param_t), \bv_t^+ - \bv_t^- \rangle
%     \leq 4 \theta h_t^{1/2} + 4 \psi,
% \end{align*}
% and we want to establish that the algorithm still converges.

For notation simplicity, let $F(\param)$ as the population function: $\E_{(\sample, y)} f_{\param}(\sample,y)$;
let $h_t = F(\param_t) - F(\param^*)$ be the function sub-optimal gap;
and let $\bv_t^+$, $\bv_t^-$ be the FW-Atom and Away-Atom computed with respect to $\nabla F(\param_t)$.
Let $\text{card}(\param^*)$ be the minimum number of atoms we need to represent $\param^*$.

\begin{theorem}
\textbf{(Restating \Cref{thm-robust-DICG-maintext})}
Suppose $F$ is $\alpha_{l}$ strongly convex and $\alpha_{u}$ Lipschitz smooth, and $\Mcal$ satisfies Assumption \ref{assumption-DICG-constraints}.
Let $h_t = F(\param_t) - F(\param^*)$.
Let $\param_T$ be the output of the DICG algorithm after iteration $T$, with a robust procedure to find $\bvtil_t^+, \bvtil_t^-$ such that
\begin{align*}
    \langle \nabla F(\param_t), \bvtil_t^+ - \bvtil_t^- \rangle -
    \langle \nabla F(\param_t), \bv_t^+ - \bv_t^- \rangle
    \leq 4 \theta \| \param_t - \param^* \|_2 + 4 \psi,
\end{align*}where
\begin{align*}
    \theta \leq \frac{\alpha_{l}}{16 \sqrt{ \text{card}(\param^*)}}.
\end{align*}
Let $R =  \frac{\kappa}{32 \text{card}(\param^*) D^2}$. Let the stepsize 
$$\eta_t =  \left [\left(\frac{\sqrt{2}}{\sqrt{2-R}} - \frac{2 R}{2 - R} \right)^t  \frac{R}{\sqrt{\alpha_l}} h_0^{1/2}
+ 
\frac{ \alpha_{l}  \psi }
{16 \alpha_{u}^2 D^4  \text{card}(\param^*)} \right]. $$
Then DICG converges linearly towards $\psi$.
\begin{align*}
    \| \param_t - \param^* \|_2
&\leq \left(\frac{\sqrt{2}}{\sqrt{2-R}} - \frac{2 R}{2 - R} \right)^t \frac{2}{\alpha_{l}} \|\param_0 - \param^*\| + 
\frac{  1  }
{2 \alpha_{u} D^2  } \frac{\psi}{\sqrt{ \text{card}(\param^*)}}
\end{align*}
\end{theorem}

\begin{proof}

\begin{align*}
    h_{t+1} 
    &= F(\param_t + \tilde{\eta_t}(\bvtil_t^+ - \bvtil_t^- )) - F(\param^*) \\
    &\leq h_t + \tilde{\eta_t}(\bvtil_t^+ - \bvtil_t^- ) \cdot \nabla F(\param_t) + \frac{\tilde{\eta_t}^2 \alpha_{u} D^2}{2} \quad \quad \text{(by smoothness)} \\
    & \leq h_t + \frac{\eta_t}{2}(\bvtil_t^+ - \bvtil_t^- ) \cdot \nabla F(\param_t) + \frac{{\eta_t}^2 \alpha_{u} D^2}{2} 
\end{align*}
Suppose we are updating using $\bv^+$ and $\bv^-$, then we basically replacing the $\bvtil_t^+$, $\bvtil_t^-$ by $\bv^+$ and $\bv^-$ respectively, and this is the original DICG. The original DICG converges, as in the Lemma 3 in \cite{garber2016linear}:
\begin{align*}
h_{t+1} \leq 
     &h_t + \frac{\eta_t}{2}(\bv_t^+ - \bv_t^- ) \cdot \nabla F(\param_t) + \frac{{\eta_t}^2 \alpha_{u} D^2}{2} \\
    \leq &h_{t} - \eta_t \frac{\sqrt{\alpha_{l}}}{2\sqrt{2 \text{card}(\param^*)}} h_t^{1/2} + \eta_t^{2} \frac{\alpha_{u} D^2}{2}
\end{align*}
Combined with these two inequalities, we have
\begin{align*}
    h_{t+1} 
     \leq & h_t + \frac{\eta_t}{2}(\bvtil_t^+ - \bvtil_t^- ) \cdot \nabla F(\param_t) + \frac{{\eta_t}^2 \alpha_{u} D^2}{2} \\
    \leq & h_{t} - \eta_t \frac{\sqrt{\alpha_{l}}}{2\sqrt{2 \text{card}(\param^*)}} h_t^{1/2} + \eta_t^{2} \frac{\alpha_{u} D^2}{2} \\
    & + 
    \frac{1}{2} \eta_t 
    \left( \langle \nabla F(\param_t), \bvtil_t^+ - \bvtil_t^- \rangle -
    \langle \nabla F(\param_t), \bv_t^+ - \bv_t^- \rangle \right)
\end{align*}
where $\frac{1}{2} \eta_t \left( \langle \nabla F(\param_t), \bvtil_t^+ - \bvtil_t^- \rangle - \langle \nabla F(\param_t), \bv_t^+ - \bv_t^- \rangle \right)$ is the residual incurred by using inaccurate atoms.
By the \textbf{Robust-Atom-Estimation-Condition} and Lipschitz smoothness, we have
\begin{align*}
    \langle \nabla F(\param_t), \bvtil_t^+ - \bvtil_t^- \rangle -
    \langle \nabla F(\param_t), \bv_t^+ - \bv_t^- \rangle
    \leq &4 \theta \| \param_t - \param^* \|_2 + 4 \psi \\
    \leq & 4 \theta \sqrt{\frac{2}{\alpha_{l}}}h_t^{1/2} + 4 \psi
\end{align*}
    Plug-in and re-arrange:
\begin{align*}
    0 \leq \eta_t^2 \frac{\alpha_{u} D^2}{2} + 
    \left[ 
        2 \theta \sqrt{\frac{2}{\alpha_{l}}} h_t^{1/2} + 2 \psi - \frac{\sqrt{\alpha_{l}}}{2\sqrt{2 \text{card}(\param^*)}}  h_t^{1/2}
    \right] \eta_t +
    h_t - h_{t+1}
\end{align*}

Let $\eta_t = Z h_{t+1}^{1/2}$, where $Z = \frac{\sqrt{\alpha_{l}}}{\sqrt{32 \text{card}(\param^*)}} \cdot \frac{1}{\alpha_{u} D^2 }$.
\begin{align*}
    0 \leq \left[ \frac{Z^2 \alpha_{u} D^2}{2} - 1\right] h_{t+1} + 
    \left[
        2 \theta \sqrt{\frac{2}{\alpha_{l}}} h_t^{1/2} + 2 \psi - \frac{\sqrt{\alpha_{l}}}{2\sqrt{2 \text{card}(\param^*)}}  h_t^{1/2}
    \right] Z h_{t+1}^{1/2} +
    h_t
\end{align*}

This is a quadratic equation with respect to $h_{t+1}^{1/2}$. Since $h_{t+1}^{1/2}$ is not smaller than zero for sure, and $\left[ \frac{Z^2 \alpha_{u} D^2}{2} - 1\right] < 0$, $h_{t+1}^{1/2}$ has to be smaller than the larger root.

\begin{align*}
    h_{t+1}^{1/2} 
     \leq & 
    \frac
    {
    \left[
        2 \theta \sqrt{\frac{2}{\alpha_{l}}} h_t^{1/2} + 2 \psi - \frac{\sqrt{\alpha_{l}}}{2\sqrt{2 \text{card}(\param^*)}}  h_t^{1/2}
    \right] Z
    }
    {
      2 - Z^2 \alpha_{u} D^2
    } \\
    + & \frac{
        \sqrt{\left[2 \theta \sqrt{\frac{2}{\alpha_{l}}} h_t^{1/2} + 2 \psi - \frac{\sqrt{\alpha_{l}}}{2\sqrt{2 \text{card}(\param^*)}}  h_t^{1/2} \right]^2 Z^2 + (4 - 2 Z^2 \alpha_{u} D^2) h_t}
      }{
      2 - Z^2 \alpha_{u} D^2}
    \\
    %%%%
    \overset{i}{\leq} & \frac
    {
        2 \cdot \left[2 \theta \sqrt{\frac{2}{\alpha_{l}}} h_t^{1/2} + 2 \psi - \frac{\sqrt{\alpha_{l}}}{2\sqrt{2 \text{card}(\param^*)}}  h_t^{1/2} \right] Z +
        \sqrt{ (4 - 2 Z^2 \alpha_{u} D^2) h_t}
    }
    {
      2 - Z^2 \alpha_{u} D^2
    }
    \\
    = & \frac
    {
        \left[4 \theta \sqrt{\frac{2}{\alpha_{l}}}  - \frac{\sqrt{\alpha_{l}}}{\sqrt{2 \text{card}(\param^*)}}  \right] Z +
        \sqrt{ (4 - 2 Z^2 \alpha_{u} D^2)}
    }
    {2 - Z^2 \alpha_{u} D^2} h_t^{1/2}
    +
    \frac
    {4 \psi Z}
    {2 - Z^2 \alpha_{u} D^2}
\end{align*}
and the inequality $i$ comes from the fact that $\sqrt{a + b} \leq \sqrt{a} + \sqrt{b}$ for positive $a$ and $b$.
Plug-in $Z = \frac{\sqrt{\alpha_{l}}}{\sqrt{32 \text{card}(\param^*)}} \cdot \frac{1}{\alpha_{u} D^2 }$.
Then we have
\begin{align*}
    Z^2 \alpha_{u} D^2 = \frac{\alpha_{l}}{32 \text{card}(\param^*) \alpha_{u} D^2} = \frac{1}{32 \text{card}(\param^*) D^2 } \cdot \kappa < 1.
\end{align*}
Denote $Z^2 \alpha_{u} D^2$ as $R$. And the convergence rate:
\begin{align*}
& \frac
    {
        \left[4 \theta \sqrt{\frac{2}{\alpha_{l}}}  - \frac{\sqrt{\alpha_{l}}}{\sqrt{2 \text{card}(\param^*)}}  \right] Z +
        \sqrt{ (4 - 2 Z^2 \alpha_{u} D^2)}
    }
    {2 - Z^2 \alpha_{u} D^2}
\\
= & 
\frac{1}{2 - R} 
\cdot \left[4 \theta \sqrt{\frac{2}{\alpha_{l}}} - \frac{\sqrt{\alpha_{l}}}{\sqrt{2 \text{card}(\param^*)}}  \right] 
\cdot \frac{\sqrt{\alpha_{l}}}{\sqrt{32 \text{card}(\param^*)}}
\cdot \frac{1}{\alpha_{u} D^2 }
+ \sqrt{\frac{2}{2 - R} }
\\
\leq & 
\frac{1}{2 - R} 
\cdot \left[4 \frac{\sqrt{\alpha_{l}}}{8 \sqrt{2 \text{card}(\param^*)}} - \frac{\sqrt{\alpha_{l}}}{\sqrt{2 \text{card}(\param^*)}}  \right] 
\cdot \frac{\sqrt{\alpha_{l}}}{\sqrt{32 \text{card}(\param^*)}}
\cdot \frac{1}{\alpha_{u} D^2 }
+ \sqrt{\frac{2}{2 - R} }
\\
= &
\frac{1}{2 - R} 
\cdot \left[ - \frac{\alpha_{l}}{16 \text{card}(\param^*) D^2}  \right] 
\cdot \kappa
+ \sqrt{\frac{2}{2 - R} }
\\
= &
- \frac{2 R}{2 - R} + \frac{\sqrt{2}}{\sqrt{2-R}}
\\
< & 1
\end{align*}

And the residual
% \begin{align*}
% \frac{4 \psi Z}{2 - Z^2 \alpha_{u} D^2}
% &= \frac{4  \frac{\sqrt{\alpha_{l}}}{\sqrt{32 \text{card}(\param^*)}} \cdot \frac{1}{\alpha_{u} D^2 }}{2 - \frac{{\alpha_{l}}}{{32 \text{card}(\param^*)}\alpha_{u} D^2} } \psi\\
% &= \frac{4  \sqrt{\alpha_{l}} \cdot \sqrt{32 \text{card}(\param^*)} }
% {2\alpha_{u} D^2{32 \text{card}(\param^*)} - \alpha_{l} } \psi \\
% & \leq  \frac{  \sqrt{\alpha_{l}}   }
% {2\sqrt{2} \alpha_{u} D^2  } \frac{\psi}{\sqrt{ \text{card}(\param^*)}}.
% \end{align*}
\begin{align*}
\frac{4 \psi Z}{2 - Z^2 \alpha_{u} D^2}
&\leq \frac{4 \psi Z}{2} \\
&\leq \frac{\sqrt{\alpha_{l}}}{\alpha_{u}\sqrt{8 \text{card}(\param^*)}}   \psi \\
&\leq \frac{\sqrt{\alpha_{l}}}{\alpha_{u}\sqrt{8 }}   \psi
\end{align*}
Therefore we have 
\begin{align*}
    h_t^{1/2} 
&\leq  \left(\frac{\sqrt{2}}{\sqrt{2-R}} - \frac{2 R}{2 - R} \right)^t h_0^{1/2} + \left(1 - \left(\frac{\sqrt{2}}{\sqrt{2-R}} - \frac{2 R}{2 - R} \right)^t \right) 
\frac{\sqrt{\alpha_{l}}}{\alpha_{u}\sqrt{8 }}   \psi  \\
&\leq \left(\frac{\sqrt{2}}{\sqrt{2-R}} - \frac{2 R}{2 - R} \right)^t h_0^{1/2} + 
\frac{\sqrt{\alpha_{l}}}{\alpha_{u}\sqrt{8 }}   \psi
\end{align*}
By strong convexity,
we know that $\| \param_t - \param^* \|_2 \leq \sqrt{\frac{2}{\alpha_{l}}} h_t^{1/2}$. Hence we have
\begin{align*}
    \| \param_t - \param^* \|_2
&\leq \left(\frac{\sqrt{2}}{\sqrt{2-R}} - \frac{2 R}{2 - R} \right)^t \sqrt{\frac{2}{\alpha_{l}}} h_0^{1/2} + 
\frac{\psi}{\sqrt{2} \alpha_{u} }
\end{align*}
\end{proof}

\newpage
\section{Appendix: Stability for Pairwise Conditional Gradient}
\label{sec:appendix-pcg-thm}
In this section, we offer  analysis of the Robust Pairwise Conditional Gradient (PCG) method \cite{lacoste2015global}.
The robust analysis of PCG follows the original assumptions required by the global convergence of PCG \cite{lacoste2015global}, where the affine-invariant notion of smoothness and convexity are introduces. 
We begin by introducing these definition, as proposed by Lacoste-Julien and Jaggi \cite{lacoste2015global}.

\begin{definition}
\textbf{(Affine Invariant Smoothness)}
A function $f$ is affine invariant smooth if there exists a constant $C_f$ such that
\begin{align} \label{def:affine-inv-smooth}
    C_f = \underset{\substack{\param, \bs \in \Mcal, \gamma \in [0,1] \\ \by=\param+\gamma (\bs - \param)}}{\sup}
    \frac{2}{\gamma^2} \left( f(\by) - f(\param) - \langle \nabla f(\param), \by - \param \rangle \right)
\end{align}
\end{definition}

\begin{definition}
\textbf{(Geometric Strong Convexity Constant)} 
We first define the positive step-size quantity to be
\begin{align*}
    \gamma^A(\param, \param^*) = 
    \frac{\left \langle - \nabla f(\param), \param^* - \param \right \rangle}
    {\left \langle - \nabla f(\param), s_f(\param) - v_f(\param) \right \rangle}
\end{align*}
where $s_f(\param)$ is the standard FW atom and $ v_f(\param)$ is the worst case away atom:
\begin{align*}
s_f(\param) = \argmin_{\bv \in \Acal} \langle \nabla f(\param), \bv \rangle \\
s_f(\param) = \argmin_{\bv \in \Acal} \langle \nabla f(\param), \bv \rangle
\end{align*}
And the Geometric Strong Convexity Constant of a function $f$ is defined by
\begin{align*}
    \mu_f^A = 
    \underset{\param \in \Mcal}{\inf} \quad 
    \underset{\substack{\param^* \in \Mcal \\\text{s.t.} \langle \nabla f(\param), \param^* - \param\rangle < 0}}{\inf} \quad
    \frac{2}{\gamma^A(\param, \param^*)^2} \left( f(\param^*) - f(\param) + \left \langle -\nabla f(\param),\param^*-\param  \right \rangle \right)
\end{align*}
\end{definition}

\begin{definition}
\textbf{(Curvature Constant)} The curvature constant for function $f$ is
\begin{align} \label{def:curvature-constant}
    C_f^A = \underset{\substack{\param, \bs, \bv \in \Mcal, \gamma \in [0,1] \\ \by=\param+\gamma (\bs - \bv)}}{\sup}
    \frac{2}{\gamma^2} \left( f(\by) - f(\param) - \langle \nabla f(\param), \by - \param \rangle \right)
\end{align}
\end{definition}

We would like to refer the readers to the seminar work by Lacoste-Julien and Jaggi \cite{lacoste2015global} for a detailed discussion of these constant and their relation to the standard notion of convexity and smoothness. 
% \TODO{Maybe: cite the lemma in locoste to understand these constants.}

\vspace{10pt}

For notation simplicity, let $F(\param)$ as the population function: $\E_{(\sample, y)} f_{\param}(\sample,y)$;
let $h_t = F(\param_t) - F(\param^*)$ be the function sub-optimal gap;
and let $\bv_t^+$, $\bv_t^-$ be the FW-Atom and Away-Atom computed with respect to $\nabla F(\param_t)$.

\begin{theorem}
\label{thm-robust-PCG-appendix}
Let $C_f^A$ be the curvature constant of $F(\param)$, $\mu_f^A$ be the geometric strong convexity constant, and $\alpha_{l}$ be the strong convexity constant. Let $\kappa = \frac{\mu_f^A}{C_f^A}$.
Let $\param_T$ be the output of the robust PCG algorithm after $T$ iteration, with a robust procedure to find $\bvtil_t^+, \bvtil_t^-$ such that
\begin{align*}
    \langle \nabla F(\param_t), \bvtil_t^+ - \bvtil_t^- \rangle -
    \langle \nabla F(\param_t), \bv_t^+ - \bv_t^- \rangle
    \leq 4 \theta \| \param_t - \param^* \|_2 + 4 \psi,
\end{align*}where
\begin{align*}
    \theta \leq \frac{2 - \sqrt{2}}{8}\sqrt{\alpha_{l}\mu_f^A}.
\end{align*}
Let the step size be 
$\eta_t = \left[\frac{\sqrt{2}}{\sqrt{2 - \kappa}}  - \frac{2\kappa}{2 - \kappa} \right]^t \sqrt{\frac{\kappa}{C_f^A}} h_0^{1/2} + 4 \frac{\kappa}{C_f^A}$.
Then the sub-optimality $h_t$ decreases geometrically towards $\psi$
% at each step that is not a drop step nor a swap step
\begin{align*}
\|\param_t - \param^* \|
     \leq& \left[\frac{\sqrt{2}}{\sqrt{2 - \kappa}}  - \frac{2\kappa}{2 - \kappa} \right]^t \sqrt{\frac{2}{\alpha_{l}}} h_0^{1/2} + 4 \sqrt{\frac{2\kappa}{\alpha_{l} C_f^A}}
\end{align*}
\end{theorem}
\begin{proof}

For notation simplicity, denote $\bar{\gamma} = \gamma^A(\param_t, \param^*)$.
By the definition of the geometric strong convexity constant, we have
\begin{align*}
    \frac{\bar{\gamma}^2}{2} \mu_f^A \leq &
    F(\param^*) - F(\param_t) + \left \langle - \nabla F(\param_t),\param^*-\param_t   \right \rangle \\
    = & -h_t + \bar{\gamma} \left \langle -\nabla F(\param_t),s_f(\param_t)-v_f(\param_t)   \right \rangle \\
    \overset{*}{\leq} & -h_t + \bar{\gamma} \left \langle -\nabla F(\param_t),\bv_t^+ - \bv_t^- \right \rangle
\end{align*}
where the $*$ step follows from the definition of $s_f(\cdot)$ and $v_f(\cdot)$.
Note that this is hence a quadratic function with respect to $\bar{\gamma}$, and regardless of the range of $\bar{\gamma}$, we have
\begin{align*}
    h_t \leq &- \frac{\bar{\gamma}^2}{2} \mu_f^A + \bar{\gamma} \left \langle -\nabla F(\param_t),\bv_t^+ - \bv_t^- \right \rangle \\
    \leq &\frac{\left \langle -\nabla F(\param_t),\bv_t^+ - \bv_t^- \right \rangle^2}{2\mu_f^A}.
\end{align*}

Note that $\left \langle -\nabla F(\param_t),\bv_t^+ - \bv_t^- \right \rangle$ is positive.
Then we have
\begin{align*}
    \left \langle -\nabla F(\param_t),\bv_t^+ - \bv_t^- \right \rangle \geq \sqrt{2 \mu_f^A h_t}
\end{align*}

% For the analysis we consider the non-drop step only, that is, $\gamma_t \leq a_t^-$, where $a_t^-$ is the max step size allowed. The techniques to bound the number of drop steps will be the same as in \cite{lacoste2015global}.

% By the definition of affine invariant smoothness (Note that this is pretty much similar to lipschitz smoothness), by \eqref{def:affine-inv-smooth}, we have
% \begin{align*}
%     F(\param_{t+1}) \leq F(\param_t) + \eta_t \langle \nabla F(\param_t), \tilde{\bd_t} \rangle + \frac{\eta_t^2}{2} C_f,
% \end{align*}
% where $C_f$ if the constant defined in affine invariant smoothness, and $\tilde{\bd_t}$ is the forward direction computed by our robust estimation, such as robust linear minimization oracle. That is,
% $\tilde{\bd_t} = \bvtil_t^+ - \bvtil_t^-$.
% Note that by the definition of $C_f^A$, we know that $C_f \leq C_f^A$:
% \begin{align}
% \label{eqn:PCG-analysis1}
%     F(\param_{t+1}) \leq F(\param_t) + \eta_t \langle \nabla F(\param_t), \tilde{\bd_t} \rangle + \frac{\eta_t^2}{2} C_f^A.
% \end{align}
By the definition of curvature-constant (Note that this is pretty much similar to lipschitz smoothness), as in \Cref{def:curvature-constant}, we have
\begin{align}
\label{eqn:PCG-analysis1}
    F(\param_{t+1}) \leq F(\param_t) + \eta_t \langle \nabla F(\param_t), \tilde{\bd_t} \rangle + \frac{\eta_t^2}{2} C_f^A,
\end{align}
where $\tilde{\bd_t}$ is the forward direction computed by our robust estimation, such as robust linear minimization oracle. That is,
$\tilde{\bd_t} = \bvtil_t^+ - \bvtil_t^-$.

% We have to impose the step size to be smaller than one here

% We then decompose the inner product:
% \begin{align*}
%     \langle G_t, \tilde{\bd_t} \rangle 
%     = & \langle G_t - \tilde{G_t}, \tilde{\bd_t} \rangle  + \langle \tilde{G_t}, \tilde{\bd_t} \rangle \\
%     \leq & \langle G_t - \tilde{G_t}, \tilde{\bd_t} \rangle  + \langle \tilde{G_t}, \tilde{\bs_t} - \tilde{\bv_t} \rangle \\
%     \leq & \langle G_t - \tilde{G_t}, \tilde{\bd_t} \rangle  + \langle \tilde{G_t}, \bs_t - \bv_t \rangle \\
%     = & \langle G_t - \tilde{G_t}, \tilde{\bd_t} \rangle  + \langle \tilde{G_t} - G_t, \bs_t - \bv_t \rangle + \langle G_t, \bs_t - \bv_t \rangle \\
%     = & \langle G_t - \tilde{G_t}, \tilde{\bd_t} - (\bs_t - \bv_t) \rangle  + \langle G_t, \bs_t - \bv_t \rangle 
% \end{align*}
Since our robust atom selection procedure satisfies
\begin{align*}
    % \underset{\bv \in \Acal}{\max} 
    |\langle \nabla F(\param_t), \bvtil_t^+ - \bvtil_t^- \rangle -
    \langle \nabla F(\param_t), \bv_t^+ - \bv_t^- \rangle|
    \leq 4 \theta \| \param_t - \param^* \|_2 + 4 \psi.
\end{align*}
Combined with the fact that $F$ is $\alpha_{l}$-strongly convex, we have
\begin{align*}
    % \underset{\bv \in \Acal}{\max} 
    |\langle \nabla F(\param_t), \bvtil_t^+ - \bvtil_t^- \rangle -
    \langle \nabla F(\param_t), \bv_t^+ - \bv_t^- \rangle|
    \leq 4 \theta \sqrt{\frac{2}{\alpha_{l}}}h_t^{1/2} + 4 \psi.
\end{align*}
Then we have
\begin{align*}
    \langle \nabla F(\param_t), \bvtil_t^+ - \bvtil_t^- \rangle
    \leq & 4 \theta \sqrt{\frac{2}{\alpha_{l}}} h_t^{1/2} + 4 \psi + \langle  \nabla F(\param_t), \bv_t^+ - \bv_t^- \rangle \\
    \leq & \left(4 \theta \sqrt{\frac{2}{\alpha_{l}}} - \sqrt{2 \mu_f^A} \right) h_t^{1/2} + 4 \psi 
\end{align*}
Plug in Equation \ref{eqn:PCG-analysis1} and we have the following descend inequality:
\begin{align*}
    h_{t+1} \leq h_t + \eta_t \left[ \left(4 \theta \sqrt{\frac{2}{\alpha_{l}}} - \sqrt{2 \mu_f^A} \right) h_t^{1/2} + 4 \psi  \right] + \frac{\eta_t^2}{2} C_f^A.
\end{align*}
Plugin $\eta_t = B \cdot h_{t+1}^{1/2}$, where $B = \sqrt{\frac{\kappa}{C_f^A}}$:
\begin{align}\label{eqn:the-quad}
    0 \leq \left( \frac{B^2}{2} C_f^A - 1 \right) \cdot h_{t+1} + B \left[ \left(4 \theta \sqrt{\frac{2}{\alpha_{l}}} - \sqrt{2 \mu_f^A} \right) h_t^{1/2} + 4  \psi\right] \cdot  h_{t+1}^{1/2} + h_t.
\end{align}
Note that \eqref{eqn:the-quad} is a quadratic inequality with respect to $ h_{t+1}^{1/2}$. Then we can upper bound $ h_{t+1}^{1/2}$ by the larger root of the quadratic inequality:
\begin{align*}
     h_{t+1}^{1/2} 
     \leq& \frac{B \left[  \left(4 \theta \sqrt{\frac{2}{\alpha_{l}}} - \sqrt{2 \mu_f^A} \right) h_t^{1/2} + 4  \psi \right]}{ 2 - B^2 C_f^A} + \\
     &\frac{\sqrt{B^2 \left[  \left(4 \theta \sqrt{\frac{2}{\alpha_{l}}} - \sqrt{2 \mu_f^A} \right) h_t^{1/2} + 4  \psi \right]^2 + (4 - 2 B^2 C_f^A) h_t} }{ 2 - B^2 C_f^A} \\
     \overset{i}{\leq}& \frac{2 B \left[  \left(4 \theta \sqrt{\frac{2}{\alpha_{l}}} - \sqrt{2 \mu_f^A} \right) h_t^{1/2} + 4  \psi \right] + \sqrt{ (4 - 2 B^2 C_f^A) h_t} }{ 2 - B^2 C_f^A} \\
     = & \frac{ 2B \left(4\theta \sqrt{\frac{2}{\alpha_{l}}} - \sqrt{2 \mu_f^A} \right)+ \sqrt{ (4 - 2 B^2 C_f^A)} }{2 - B^2 C_f^A} h_t^{1/2} + \frac{8 B\psi}{2 - B^2 C_f^A}
\end{align*}
The inequality $i$ comes from the fact that $\sqrt{a + b} \leq \sqrt{a} + \sqrt{b}$ for positive $a$ and $b$.

% Even without explicitly introducing what $B$ is, one can show that there exists an appropriate $B$ such that the convergence rate is smaller than one by analyzing the coefficient (cubic equality). Here is a naive choice of the parameters. 

% For notation simplicity let $A = \sqrt{2 \mu_f^A} - 3 R_N \alpha_{l} $, $\kappa = \frac{\mu_f^A}{C_f^A}$.
Plugin $B = \sqrt{\frac{\kappa}{C_f^A}}$,
% Set $B = \frac{1}{\sqrt{2A^2 + C_f^A}}$, we have the coefficient being zero.
and assume that $\theta \leq \frac{2 - \sqrt{2}}{8}\sqrt{\alpha_{l}\mu_f^A}$.
% We have $A > \sqrt{\mu_f^A}$, and
Then we have
\begin{align*}
    & \frac{ 2B \left(4 \theta \sqrt{\frac{2}{\alpha_{l}}} - \sqrt{2 \mu_f^A} \right)+ \sqrt{ (4 - 2 B^2 C_f^A)} }{2 - B^2 C_f^A} \\
    % \leq & - \frac{2B A}{2 - B^2 C_f^A} + \frac{\sqrt{2}}{\sqrt{2 - B^2 C_f^A}} \\ 
    \leq & - \frac{2\kappa}{2 - \kappa} + \frac{\sqrt{2}}{\sqrt{2 - \kappa}}  \\
    \leq & 1. \quad \quad \text{(by lemma \ref{lemma:tech:1})}
\end{align*}

The residual
\begin{align*}
    \frac{8 B\psi}{2 - B^2 C_f^A} \leq 4 B \psi = 4 \sqrt{\frac{\kappa}{C_f^A}} \psi
\end{align*}
Note that $\mu_f^A$ might depend negatively as the dimension, but not positively. That is, it is possible that as the dimension increases, the $\mu_f^A$ decreases \cite{garber2016linear}. And $C_f^A$ does not change with the dimension \cite{lacoste2015global}. Therefore the $4 \sqrt{\frac{\kappa}{C_f^A}}$ term is not increasing with dimension.

Therefore,
\begin{align*}
     h_{t+1}^{1/2} 
     \leq& \left[\frac{\sqrt{2}}{\sqrt{2 - \kappa}}  - \frac{2\kappa}{2 - \kappa} \right] h_t^{1/2} + 4 \sqrt{\frac{\kappa}{C_f^A}} \psi
\end{align*}
which means
\begin{align*}
     h_{t}^{1/2} 
     \leq& \left[\frac{\sqrt{2}}{\sqrt{2 - \kappa}}  - \frac{2\kappa}{2 - \kappa} \right]^t h_0^{1/2} + 4 \sqrt{\frac{\kappa}{C_f^A}} \psi.
\end{align*}
By strong convexity,
we know that $\| \param_t - \param^* \|_2 \leq \sqrt{\frac{2}{\alpha_{l}}} h_t^{1/2}$. Hence we have
\begin{align*}
     \|\param_t - \param^* \|
     \leq& \left[\frac{\sqrt{2}}{\sqrt{2 - \kappa}}  - \frac{2\kappa}{2 - \kappa} \right]^t \sqrt{\frac{2}{\alpha_{l}}} h_0^{1/2} + 4 \sqrt{\frac{2\kappa}{\alpha_{l} C_f^A}} \psi
\end{align*}
Solve for the step size and we know that
\begin{align*}
    \eta_t = \left[\frac{\sqrt{2}}{\sqrt{2 - \kappa}}  - \frac{2\kappa}{2 - \kappa} \right]^t \sqrt{\frac{\kappa}{C_f^A}} h_0^{1/2} + 4 \frac{\kappa}{C_f^A} \psi.
\end{align*}

\vspace{20pt}

% \textbf{Now let's consider the case when $\gamma_{max} < 1$, that is, the away step.}
% We can still obtain a similar descend lemma. By pluging-in $\bs = \param_t$, $\bv = \bv_t$, $\by = \param_{\gamma}$ to \eqref{def:curvature-constant}, and we have
% \begin{align*}
%     f(\param_{\gamma}) \leq f(\param_t) + \gamma \langle G_t, \tilde{\bd_t} \rangle + \frac{\gamma^2}{2} C_f^A.
% \end{align*}
% However, $\param_{\gamma}$ is not be feasible, as $\gamma_{max} < 1 = \gamma$.
% There are two possibility for the line search: (1) $\gamma_t < \gamma_{max}$ and (2) $\gamma_t = \gamma_{max}$.
% For the first case, i.e. $\gamma_t < \gamma_{max}$,  we know that $f(\param_{\gamma})$ is convex with respect to $\gamma$. Therefore $\min_{\gamma \in [0, \gamma_{max}]} f(\param_{\gamma}) = \min_{\gamma \geq 0} f(\param_{\gamma})$ and hence $f(\param_{t+1}) = \min_{\gamma \in [0, \gamma_{max}]} f(\param_{\gamma}) \leq f(\param_{\gamma})$, and the analysis when $\gamma_{max} = 1$ follows. For the second case, it is known as the \textit{drop step} for Away-step Frank Wolfe \cite{lacoste2015global}, and one can easily bound the total number of the \textit{drop step} in $T$ iterations.
% Note that during a \textit{drop step}, the weight of the away step atom is set to zero, and hence is removed from $\Scal_t$. Let $D_t$ as the total number of \textit{drop step} in $T$ iterations.
% Also note that only a Frank Wolfe step add an atom to the active set. Let $A_t$ as the total number of the Frank Wolfe step in $T$ iterations.
% Then we have $1 \leq | \Scal_t| = | \Scal_0 |+ A_t - D_t \leq | \Scal_0 |+ T - 2D_t$, where the inequality follows from $A_t + D_t \leq T$. This implies $D_t \leq \frac{T}{2}$, and hence the \textit{drop step} does not change the asymptotic behavior of the algorithm.
\end{proof}

\begin{remark}
Note that we do not have to discuss the drop step as in \cite{lacoste2015global}, since we concretely parametrize the step-size, instead of line-searching for one.
\end{remark}

\section{Appendix: the Missing proof of Corollary \ref{coro:minimax-linear-regression}}
\begin{corollary} \textbf{((\Cref{coro:minimax-linear-regression}) restate: matching the minimax statistical rate for linear regression over $\ell_1$ ball)}
\\\textbf{A.}
Under the same conditions as in \Cref{proposition:RLMO-implies-RASC}-A and \Cref{thm-robust-DICG-maintext},
given $n = \Omega(\text{card}(\param^*) \log d + D \log d )$ and $\epsilon \leq 1 / (D \log(n d))$,
RLMO satisfies RASC with $\psi = O\left(D \sigma \left(\epsilon \log(n d) + \sqrt{\log d/n}\right)  \right)$, and hence
Robust-DICG (\Cref{alg:robust-DICG}) will converge to $\hat{\param}$ such that $\| \hat{\param} - \param^* \|_2 = O (\sigma D \sqrt{\log d/n})$. 
\\\textbf{B.}
Under the same conditions as in \Cref{proposition:RLMO-implies-RASC}-B and \Cref{thm-robust-DICG-maintext},
given $n = \Omega({ \text{card}(\param^*) \log d + D \log d})$,
RLMO satisfies RASC with 
$\psi = O \left( D \sigma \sqrt{\log d/n}  \right)$
% $\psi = O \left( D \sigma \sqrt{(\|\Sigma_{\sample}\|_{op} \log d)/n}  \right)$
, and hence
Robust-DICG (\Cref{alg:robust-DICG}) will converge to $\hat{\param}$ such that $\| \hat{\param} - \param^* \|_2 = O (\sigma D \sqrt{\log d/n})$. 
\end{corollary}
\begin{proof}
Combining \Cref{proposition:RLMO-implies-RASC} and \Cref{thm-robust-DICG-maintext}, we can show this corollary.
For both Corollary \ref{coro:minimax-linear-regression}-A and Corollary \ref{coro:minimax-linear-regression}-B, the sample size required is $n = \Omega(\text{card}(\param^*) \log d + D \log d )$. In fact  $\Omega(\text{card}(\param^*) \log d $ comes from the stability theorem where we require $\theta$ to be lower bounded.

\end{proof}

\section{Appendix: Technical Lemmas}

\begin{lemma} \label{lemma:tech:1}
for $ 0 \leq \kappa \leq 1$
\begin{align*}
    \frac{\sqrt{2}}{\sqrt{2 - \kappa}} - \frac{2\kappa}{2 - \kappa} \leq 1
\end{align*}
\end{lemma}
\begin{proof}
Let $x = 2 - \kappa$. Then $x \in [1,2]$.
\begin{align*}
    & \frac{\sqrt{2}}{\sqrt{2 - \kappa}} - \frac{2\kappa}{2 - \kappa}  \\
    = & \frac{\sqrt{2}}{\sqrt{x}} - \frac{4-2x}{x}\\
    = & \sqrt{\frac{2}{x}} - \frac{4}{x} + 2
\end{align*}
The above quantity is quadratic with respect to $1/\sqrt{x}$, and we know that $1/\sqrt{x} \in [1/\sqrt{2}, 1]$.
It's easy to check that that maximum is taken when $x = 2$. Hence
\begin{align*}
    \max_{x\in[1,2]} \sqrt{\frac{2}{x}} - \frac{4}{x} + 2 = 1 - 2 + 2 \leq 1.
\end{align*}
\end{proof}

\section{Appendix: Experiment Setup}
We mainly describe the setup of the experiments discussed in \Cref{sec:experiment}.

When showing linear convergence in solving LASSO (as in \Cref{fig:pfw_lin_cov}), we set the number of samples to be $300$, the dimension to be $500$, and the sparsity to be $20$. 
The observation noise follows $N(0, \sigma)$, and $\sigma$ equals $0, 0.001, 0.01, 0.1$ respectively.

For the heavy-tail scenario (as in \Cref{fig:pfw_MOM}), we set the dimension to be $1000$ and vary the sample size as shown in the figure. We set the sparsity to be $10$ and the variance of the noise to be $0.01$. We sample both the observation matrix and the observation noise from a standard log-normal distribution. For the corresponding lasso case, we fix the dimension and the observation noise level, but sample the observation matrix and the observation noise from a standard normal distribution.

When showing linear convergence in solving Haar signal recovery (as in \Cref{fig:haar_lin_cov}), we set the number of samples to be $300$, the dimension to be $500$, and the sparsity over discrete Haar wavelet to be $25$. 
The observation noise follows $N(0, \sigma)$, and $\sigma$ equals $0, 0.001, 0.01, 0.1$ respectively.

% In the unusual situation where you want a paper to appear in the
% references without citing it in the main text, use \nocite
\nocite{langley00}

\bibliography{Notes_FW}
\bibliographystyle{icml2020}